\documentclass[11pt]{article}


\usepackage{times}

\usepackage{enumitem}
\usepackage{color}
\usepackage{graphicx}
\usepackage{booktabs}
\usepackage{amsmath}
\usepackage{amsfonts}
\usepackage{algorithm}
\usepackage{algorithmic}
\usepackage{wrapfig}
\usepackage{amsmath,amsthm,amsfonts,amssymb}
\usepackage{fullpage}
\usepackage[pdftex,bookmarksnumbered,bookmarksopen,
colorlinks,citecolor=blue,linkcolor=blue]{hyperref}

\newcommand{\cC}{\mathcal{C}}
\newcommand{\cD}{\mathcal{D}}
\newcommand{\cP}{\mathcal{P}}
\newcommand{\E}{\mathbb{E}}

\newcommand{\R}{\mathbb{R}}
\newcommand{\err}{\textup{\textsf{err}}}
\newcommand{\red}[1]{{\color{black}#1}}
\newcommand{\sign}{\textup{sign}}
\newcommand{\comment}[1]{}
\newcommand{\ball}{\textup{\textsf{ball}}}
\newenvironment{proofoutline}{\noindent{\textbf{Proof Sketch }}}{\hfill$\blacksquare$\medskip}

\newtheorem{theorem}{Theorem}
\newtheorem{definition}{Definition}
\newtheorem{lemma}[theorem]{Lemma}

\title{Sample and Computationally Efficient Learning Algorithms under S-Concave Distributions}
\usepackage{times}

\author{Maria-Florina Balcan\thanks{Carnegie Mellon University. Email: ninamf@cs.cmu.edu} \and
Hongyang Zhang\thanks{Corresponding author. Carnegie Mellon University. Email: hongyanz@cs.cmu.edu}
}

\date{}

\begin{document}

\maketitle

\begin{abstract}
We provide new results for noise-tolerant and sample-efficient learning algorithms under $s$-concave distributions. The new class of $s$-concave distributions is a broad and natural generalization of log-concavity, and includes many important additional distributions, e.g., the Pareto distribution and $t$-distribution. This class has been studied in the context of efficient sampling, integration, and optimization, but  much remains unknown about the geometry of this class of distributions and their applications in the context of learning. The challenge is that unlike the commonly used distributions in learning (uniform or more generally log-concave distributions), this broader class is not closed under the marginalization operator and many such distributions are fat-tailed. In this work, we introduce new convex geometry tools to study the properties of $s$-concave distributions and use these properties to provide bounds on quantities of interest to learning including the probability of disagreement between two halfspaces, disagreement outside a band, and the disagreement coefficient. We use these results to significantly generalize prior results for margin-based active learning, disagreement-based active learning, and passive learning of intersections of halfspaces. Our analysis of geometric properties of $s$-concave distributions might be of independent interest to optimization more broadly.
\end{abstract}

\section{Introduction}
Developing provable learning algorithms is one of the central challenges in learning theory. The study of such algorithms has led to significant advances in both the theory and practice of passive and active learning. In the passive learning model, the learning algorithm has access to a set of labeled examples sampled i.i.d. from some unknown distribution over the instance space and labeled according to some underlying target function. In the active learning model, however, the algorithm can access unlabeled examples and request labels of its own choice, and the goal is to learn the target function with significantly fewer labels. In this work, we study both learning models in the case where the underlying distribution belongs to the class of $s$-concave distributions.

Prior work on noise-tolerant and sample-efficient algorithms mostly relies on the assumption that the distribution over the instance space is log-concave~\cite{applegate1991sampling,caramanis2007inequality,balcan2013active,xu2017noise}. A distribution is \emph{log-concave} if the logarithm of its density is a concave function. The assumption of log-concavity has been made for a few purposes: for computational efficiency reasons and for sample efficiency reasons. For computational efficiency reasons, it was made to obtain a noise-tolerant algorithm even for seemingly simple decision surfaces like linear separators. These simple algorithms exist for noiseless scenarios, e.g., via linear programming~\cite{servedio2001efficient}, but they are notoriously hard once we have noise~\cite{daniely2016complexity,klivans2014embedding,guruswami2009hardness}; This is why progress on noise-tolerant algorithms has focused on uniform~\cite{kalai2008agnostically,klivans2009learning} and log-concave distributions~\cite{awasthi2017power}. Other concept spaces, like intersections of halfspaces, even have no computationally efficient algorithm in the noise-free settings that works under general distributions, but there has been nice progress under uniform and log-concave distributions~\cite{klivans2009baum}. For sample efficiency reasons,  in the context of active learning,  we need distributional assumptions in order to obtain label complexity improvements~\cite{dasgupta2004analysis}. The most concrete and general class for which prior work obtains such improvements is when the marginal distribution over instance space satisfies log-concavity~\cite{zhang2014beyond,balcan2013active}.
In this work, we provide a broad generalization of all above results, showing how they extend to $s$-concave distributions $(s<0)$. A distribution with density $f(x)$ is \emph{$s$-concave} if $f(x)^s$ is a concave function. We identify key properties of these distributions that allow us to simultaneously extend all above results.

\medskip
\noindent{\textbf{How general and important is the class of s-concave distributions?}}
The class of $s$-concave distributions is very broad and contains many well-known (classes of) distributions as special cases. For example, when $s\rightarrow 0$, $s$-concave distributions reduce to \emph{log-concave} distributions. Furthermore, the $s$-concave class contains infinitely many fat-tailed distributions that do not belong to the class of log-concave distributions, e.g., Cauchy, Pareto, and $t$ distributions, which have been widely applied in the context of theoretical physics and economics, but much remains unknown about how the provable learning algorithms, such as active learning of halfspaces, perform under these realistic distributions.
We also compare $s$-concave distributions with nearly-log-concave distributions, a slightly broader class of distributions than log-concavity. A distribution with density $f(x)$ is nearly-log-concave if for any $\lambda\in[0,1]$, $x_1,x_2\in\R^n$, we have $f(\lambda x_1+(1-\lambda)x_2)\ge e^{-0.0154}f(x_1)^\lambda f(x_2)^{1-\lambda}$~\cite{balcan2013active}. The class of $s$-concave distributions includes many important extra distributions which do not belong to the nearly-log-concave distributions: a nearly-log-concave distribution must have sub-exponential tails~(see Theorem 11, \cite{balcan2013active}), while the tail probability of an $s$-concave distribution might decay much slower (see Theorem \ref{theorem: geometry of s-concavity} (6)).
We also note that efficient sampling, integration and optimization algorithms for $s$-concave distributions have been well understood~\cite{chandrasekaran2009sampling,kalai2006simulated}. Our analysis of $s$-concave distributions bridges these algorithms to the strong guarantees of noise-tolerant and sample-efficient learning algorithms.

\subsection{Our Contributions}

\medskip
\noindent{\textbf{Structural Results.}}
We study various geometric properties of $s$-concave distributions. These properties serve as the structural results for many provable learning algorithms, e.g., margin-based active learning~\cite{balcan2013active}, disagreement-based active learning~\cite{wang2011smoothness,hanneke2014theory}, learning intersections of halfspaces~\cite{klivans2009baum}, etc. When $s\rightarrow 0$, our results exactly reduce to those for log-concave distributions~\cite{balcan2013active,awasthi2016learning,awasthi2017power}. Below, we state our structural results informally:

\begin{theorem}[Informal]
\label{theorem: geometry of s-concavity}
Let $\cD$ be an isotropic $s$-concave distribution in $\R^n$. Then there exist \textbf{closed-form} functions $\gamma(s,m)$, $f_1(s, n)$, $f_2(s, n)$, $f_3(s, n)$, $f_4(s, n)$, and $f_5(s, n)$ such that
\begin{enumerate}[leftmargin=*]\setlength{\itemsep}{-0.1cm}
\item
\emph{(Weakly Closed under Marginal)} The marginal of $\cD$ over $m$ arguments (or cumulative distribution function, CDF) is isotropic $\gamma(s,m)$-concave. (Theorems \ref{theorem: marginal}, \ref{theorem: distribution function})
\item
\emph{(Lower Bound on Hyperplane Disagreement)} For any two unit vectors $u$ and $v$ in $\R^n$, $f_1(s, n)\theta(u,v)\le\Pr_{x\sim \cD}[\sign(u\cdot x)\not=\sign(v\cdot x)]$, where $\theta(u,v)$ is the angle between $u$ and $v$. (Theorem \ref{theorem: disagreement and angle})
\item
\emph{(Probability of Band)} There is a function $d(s, n)$ such that for any unit vector $w\in\R^n$ and any $0<t\le d(s,n)$, we have $f_2(s, n)t< \Pr_{x\sim\cD}[|w\cdot x|\le t]\le f_3(s, n)t$. (Theorem \ref{theorem: probability within margin})
\item
\emph{(Disagreement outside Margin)} For any absolute constant $c_1>0$ and any function $f(s,n)$, there exists a function $f_4(s,n)>0$ such that $\Pr_{x\sim\mathcal{D}}[\mathrm{sign}(u\cdot x)\not=\mathrm{sign}(v\cdot x)\ \text{and}\ |v\cdot x|\ge f_4(s,n)\theta(u,v)]\le c_1f(s,n)\theta(u,v)$. (Theorem \ref{theorem: disagreement outside band})
\item
\emph{(Variance in 1-D Direction)} There is a function $d(s, n)$ such that for any unit vectors $u$ and $a$ in $\R^n$ such that $\|u-a\|\le r$ and for any $0<t\le d(s, n)$, we have $\E_{x\sim \cD_{u,t}}[(a\cdot x)^2]\le f_5(s, n)(r^2+t^2)$, where $\cD_{u,t}$ is the conditional distribution of $\cD$ over the set $\{x:\ |u\cdot x|\le t\}$. (Theorem \ref{theorem: 1-D variance})
\item
\emph{(Tail Probability)}
We have
$\Pr[\|x\|>\sqrt{n}t]\le \left[1-\frac{cst}{1+ns}\right]^{(1+ns)/s}$. (Theorem \ref{theorem: tail probability})
\end{enumerate}

If $s\rightarrow 0$ (i.e., the distribution is log-concave), then $\gamma(s,m)\rightarrow 0$ and the functions $f(s,n)$, $f_1(s,n)$, $f_2(s,n)$, $f_3(s,n)$, $f_4(s,n)$, $f_5(s,n)$, and $d(s,n)$ are all absolute constants.
\end{theorem}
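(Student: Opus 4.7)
The plan is to decompose this compendium theorem into its six parts, each proved in a dedicated theorem later in the paper, and to organize the argument around Part~(1) as the foundational tool on which Parts~(2)--(5) rely. The common backbone is: first establish marginalization closure with a precise parameter shift $s \mapsto \gamma(s,m)$; then reduce the geometric probability bounds to low-dimensional isotropic $\gamma(s,n-k)$-concave marginals; and finally combine these with a direct radial tail bound to handle the fat tails that distinguish the $s$-concave case from the log-concave one.

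For Part~(1), I would invoke a Borell--Brascamp--Lieb style inequality for $s$-concave densities: if $f$ is $s$-concave on $\R^n$ and we integrate out $m$ coordinates, the resulting function is $\gamma(s,m)$-concave with $\gamma(s,m) = s/(1+ms)$, provided the admissibility condition $1+ms>0$ holds. The CDF statement follows by applying the same inequality after convolving against an indicator, which preserves the concavity class with an analogous parameter shift. The delicate point is bookkeeping isotropy through the reduction, since the second-moment identity must be renormalized after projecting out variables.

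With Part~(1) in hand, Parts~(2) and~(3) reduce to one- and two-dimensional computations. For Part~(2), project onto $\mathrm{span}(u,v)$ to obtain an isotropic $\gamma(s,n-2)$-concave 2D density; the disagreement region is a wedge of angle $\theta(u,v)$, and the mass of the wedge restricted to a ball of radius $d(s,n)$ is at least $\Omega(\theta(u,v))$ times the density at the origin, which is bounded below using only isotropy and concavity. For Part~(3), project onto $w$, giving a 1D $\gamma(s,n-1)$-concave isotropic density whose value at $0$ is pinned between two closed-form constants by combining the unit-variance condition with the concavity structure; the band measure is then linear in $t$ up to the radius of validity $d(s,n)$.

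Parts~(4)--(6) combine the above with tail control. For Part~(4), split the disagreement wedge into the band $|v\cdot x|\le f_4(s,n)\theta(u,v)$ (handled by Part~(3)) and its complement, whose mass is bounded by applying the radial tail estimate of Part~(6) inside the 2D marginal. For Part~(5), decompose $a = u + (a-u)$ so that the conditional second moment splits into a $(u\cdot x)^2 \le t^2$ contribution inside the band plus an orthogonal contribution estimated by a conditional one-dimensional version of Part~(3) on the slice. Part~(6) is obtained by working directly with the radial density of $\|x\|$, which inherits a one-dimensional $s$-concave structure and therefore admits an explicit polynomial-decay tail bound of the stated closed form via integration against the concave envelope anchored by the moment constraint $\E\|x\|^2 = n$. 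The main obstacle I anticipate is Part~(1): every downstream constant depends on $\gamma(s,\cdot)$, so one must verify $1+ms>0$ at each reduction and confirm that all constants recover the known log-concave bounds in the limit $s\to 0$.
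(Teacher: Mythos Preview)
Your decomposition and the role of Part~(1) as the foundation match the paper, and your treatment of Parts~(1)--(3) is essentially the paper's: Borell--Brascamp--Lieb for marginalization with $\gamma(s,m)=s/(1+ms)$, then project to the relevant one- or two-dimensional subspace and bound the resulting isotropic $\gamma$-concave density near the origin via the unit-variance constraint and concavity.

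There is, however, a genuine gap in your plan for Part~(4), and it propagates to Part~(5). Applying the radial tail bound of Part~(6) to the two-dimensional marginal only yields $\Pr[\|x\|>f_4]$, which is a constant in $\theta$ and cannot deliver the required $O(\theta)$ bound; the wedge information is lost. The paper's route is to first prove a \emph{pointwise} density decay $f(x)\le\beta_1(1-\alpha\beta_2\|x\|)^{1/\alpha}$ for the two-dimensional marginal (Lemma~\ref{lemma: more refined upper bound}, obtained by comparison with the baseline function $h(t)=\alpha(1+\beta t)^{1/s}$ and the line-integral bound of Theorem~\ref{theorem: upper and lower bounds}(f)). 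One then shows the event forces $\|x\|\ge f_4$, decomposes into annuli $\ball((i{+}1)f_4)\setminus\ball(if_4)$, and integrates: the wedge contributes a volume factor proportional to $\theta\,(i{+}1)^2f_4^2$ while the pointwise density contributes the decay, and the sum over $i$ is $O(\theta/f_4)$. The same pointwise density bound is what drives Part~(5): the paper controls the conditional tail $\Pr_{\cD_{u,t}}[|a\cdot x|>K\sqrt{r^2+t^2}]$ by integrating the two-dimensional density bound over $\{|x_1|\le t\}\cap\{a''\cdot x>\text{threshold}\}$, then integrates in $K$ to obtain the second moment. Your proposed ``conditional one-dimensional version of Part~(3) on the slice'' does not supply this, since Part~(3) controls band \emph{probabilities}, not tails or second moments, and the conditional distribution on the slice is not isotropic.

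For Part~(6) you take a different route than the paper, and as stated it is not justified: the radial density of $\|x\|$ does not obviously inherit one-dimensional $s$-concavity, since it carries the Jacobian $r^{n-1}$ and a spherical average of $f$, neither of which preserves the class. The paper instead observes that the \emph{measure} is $\kappa$-concave with $\kappa=s/(1+ns)$ (the measure-level corollary of Brascamp--Lieb) and applies Bobkov's concentration inequality for $\kappa$-concave measures directly to the function $x\mapsto\|x\|$, together with the elementary bound $\mathrm{median}(\|x\|)\le 2\sqrt{n}$ from isotropy and Markov/Jensen.
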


To prove Theorem \ref{theorem: geometry of s-concavity}, we introduce multiple new techniques, e.g., \emph{extension of Prekopa-Leindler theorem} and \emph{reduction to baseline function} (see the supplementary material for our techniques), which might be of independent interest to optimization more broadly.

\medskip
\noindent{\textbf{Margin Based Active Learning:}} We apply our structural results to margin-based active learning of a halfspace $w^*$ under any isotropic $s$-concave distribution for both \emph{realizable} and \emph{adversarial} noise models. In the realizable case, the instance $X$ is drawn from an isotropic $s$-concave distribution and the label $Y=\sign(w^*\cdot X)$. In the adversarial noise model, an adversary can corrupt any $\eta\ (\le O(\epsilon))$ fraction of labels. For both cases, we show that there exists a \emph{computationally efficient} algorithm that outputs a linear separator $w_T$ such that $\Pr_{x\sim \cD}[\sign(w_T\cdot x)\not=\sign(w^*\cdot x)]\le\epsilon$ (see Theorems \ref{theorem: margin-based active learning realizable case} and \ref{theorem: margin-based active learning adversarial noise}). The label complexity w.r.t. $1/\epsilon$ improves exponentially over the passive learning scenario under $s$-concave distributions, though the underlying distribution might be fat-tailed. To the best of our knowledge, this is the first result concerning the \emph{computationally-efficient, noise-tolerant} margin-based active learning \emph{under the broader class of $s$-concave distributions}. Our work solves an open problem proposed by Awasthi et al.~\cite{awasthi2017power} about exploring wider classes of distributions for provable active learning algorithms.

\medskip
\noindent{\textbf{Disagreement Based Active Learning:}} We apply our results to agnostic disagreement-based active learning under $s$-concave distributions. The key to the analysis is estimating the disagreement coefficient, a distribution-dependent measure of complexity that is used to analyze certain types of active learning algorithms, e.g., the CAL~\cite{cohn1994improving} and $A^2$ algorithm~\cite{balcan2009agnostic}. We work out the disagreement coefficient under isotropic $s$-concave distribution (see Theorem \ref{theorem: disagreement coefficient}). By composing it with the existing work on active learning~\cite{dasgupta2007general}, we obtain a bound on label complexity under the class of $s$-concave distributions. As far as we are aware, this is the first result concerning disagreement-based active learning that goes beyond log-concave distributions. Our bounds on the disagreement coefficient match the best known results for the much less general case of log-concave distributions~\cite{balcan2013active}; Furthermore, they apply to the $s$-concave case where we allow an arbitrary number of discontinuities, a case not captured by \cite{friedman2009active}.

\medskip
\noindent{\textbf{Learning Intersections of Halfspaces:}} Baum's algorithm is one of the most famous algorithms for learning the intersections of halfspaces. The algorithm was first proposed by Baum~\cite{baum1990polynomial} under symmetric distribution, and later extended to log-concave distribution by Klivans et al.~\cite{klivans2009baum} as these distributions are almost symmetric. In this paper, we show that approximate symmetry also holds for the case of $s$-concave distributions. With this, we work out the label complexity of Baum's algorithm under the broader class of $s$-concave distributions (see Theorem \ref{theorem: Baum's algorithm}), and advance the state-of-the-art results (see Table \ref{table: comparisons of distributions}).

We provide lower bounds to partially show the tightness of our analysis. Our results can be potentially applied to other provable learning algorithms as well~\cite{kane2017active,yan2017revisiting,beygelzimer2009importance,xu2017noise,balcan2016noise}, which might be of independent interest. We discuss our techniques and other related papers in the supplementary material.
\begin{table}[t]
\caption{Comparisons with prior distributions for margin-based active learning, disagreement-based active learning, and Baum's algorithm.}
\label{table: comparisons of distributions}
\centering
\begin{tabular}{c|ccc}%
\hline
 & \multicolumn{2}{c}{Prior Work} & Ours\\
\hline
Margin (Efficient, Noise) & uniform~\cite{awasthi2014power} & log-concave~\cite{awasthi2017power} &  $s$-concave\\
Disagreement & uniform~\cite{hanneke2007bound} & nearly-log-concave~\cite{balcan2013active} & $s$-concave\\
Baum's & symmetric~\cite{baum1990polynomial} & log-concave~\cite{klivans2009baum} & $s$-concave\\
\hline
\end{tabular}
\end{table}

\subsection{Our Techniques}

In this section, we introduce the techniques used for obtaining our results.

\begin{wrapfigure}{R}{4cm}
\includegraphics[width=4cm]{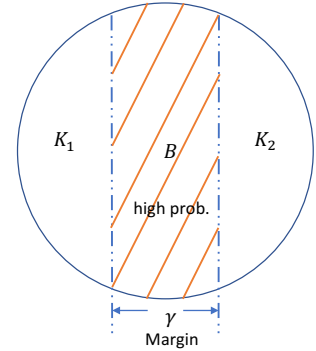}
\caption{Isoperimetry.}
\label{figure: isoperimetry}
\end{wrapfigure}
\medskip
\noindent{\textbf{Marginalization:}} Our results are inspired by isoperimetric inequality for $s$-concave distributions by the work of Chandrasekaran et al.~\cite{chandrasekaran2009sampling}. Roughly, the isoperimetry states that if two sets $K_1$ and $K_2$ are well-separated, then the area $B$ between them has large measure \emph{relative to the measure of the two sets} (see Figure \ref{figure: isoperimetry}). Results of this kind are particularly useful for margin-based active learning of halfspace~\cite{awasthi2014power,awasthi2016learning,awasthi2017power}: The algorithm proceeds in rounds, aiming to cut down the error by half in each round in the band. Since the measure of the band is large or even dominates, the error over the whole space decreases almost by half in each round, resulting in exponentially fast convergence rate. However, in order to make the analysis of such algorithms work for $s$-concave distribution, we typically require more refined geometric properties than the isoperimetry as the isoperimetric inequality states nothing about the \emph{absolute} measure of band under $s$-concave distributions.

The insight behind the isoperimetry is a collection of properties concerning the geometry of probability density. While the geometric properties of some classic paradigms, such as log-concave distributions (for the case of $s=0$), are well-studied~\cite{lovasz2007geometry}, it is typically hard to generalize those results to the $s$-concave distribution, for broader range of $s<0$. This is due to the fact that the class of $s$-concave functions is not closed under marginalization: The marginal of an $s$-concave function may not be $s$-concave any more. This directly restricts the possibility of applying the prior proof techniques for log-concave distribution to the $s$-concave one. Furthermore, previous proofs heavily depend on the assumption that the density is light-tailed (see Theorem 11 in \cite{balcan2013active}), which is not applicable for probably fat-tailed $s$-concave distribution.

To mitigate the above concerns, we begin with a powerful tool from convex geometry by Brascamp and Lieb~\cite{brascamp2002extensions}. This result can be viewed as an extension of celebrated Pr\'ekopa-Leindler inequality, an integral inequality that is closely related to a number of classical inequalities in analysis and serves as the building block of isoperimetry under the log-concave distributions~\cite{caramanis2004inequality,caramanis2007inequality}. With this, we can show that the marginal of any $s$-concave function is $\gamma$-concave, with a closed-form $\gamma$ that is related to the parameter $s$ and the dimension of marginalization. Our analysis is tight as there exists an $s$-concave function with a $\gamma$-concave marginal.

\medskip
\noindent{\textbf{Reduction to 1-D Baseline Function:}} It is in general hard to study a high-dimensional $s$-concave distribution. Instead, we build on the marginalization technique described above to reduce each $n$-dimensional $s$-concave function to the one-dimensional case. Thus it suffices to investigate the geometry of one-dimensional $\gamma$-concave functions. But there are still infinitely many such functions in this class.

Our proofs take a novel analysis by reducing \emph{all} one-dimensional $\gamma$-concave density to a certain baseline function. The baseline function should meet two goals: (a) It represents the worst case in the class of $\gamma$-concave functions, namely, such functions should achieve the bounds of geometric properties of our interest; (b) The function should be easy to analyze, e.g., with closed-form moments or integrations. Note that choosing a baseline function at the ``boundary'' between $\gamma$-concavity and non-$\gamma$-concavity classes readily achieves goal (a). To achieve goal (b), we set the ``template'' function as easy as $h(t)=\alpha(1+\beta t)^{1/\gamma}$ for a particular choice of parameters $\alpha$ and $\beta$. Such functions have many good properties that one can exploit. First, the moments can be represented in closed-form by the beta function. This enables us to figure out the relations among moments of various orders explicitly and obtain a recursive inequality, which is critical for deducing the bounds of one-dimensional geometric properties. Second, $h(t)$ is at the ``boundary'' of $\gamma$-concave class: $h(t)^\eta$ is not a concave function for any $\eta<\gamma$. Therefore, this enables us to analyze the whole class of $s$-concavity by focusing on $h(t)$. Below, we summarize our high-level proof ideas briefly.
\begin{figure}[ht]
\centering
\includegraphics[scale=0.8]{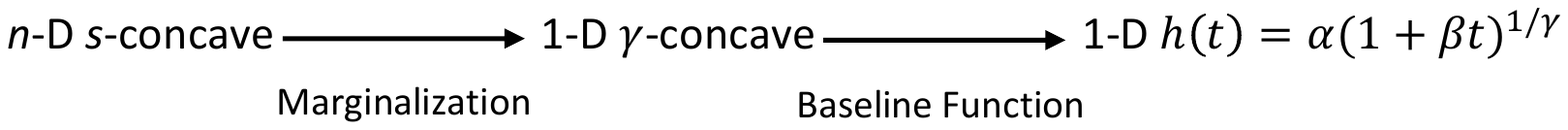}
\end{figure}

\section{Related Work}
\label{section: additional related work}

\noindent{\textbf{Active Learning of Halfspace under Uniform Distribution:}} Learning halfspace has been extensively studied in the past decades~\cite{blum1996polynomial,klivans2002learning,dunagan2004simple,kalai2008agnostically,shalev2010learning,kearns1994introduction,kearns1994toward,kearns1993learning}. Probably one of the most famous results is the VC argument. Vapnik~\cite{vapnik1982estimations} and Blumer et al.~\cite{blumer1989learnability} showed that any hypothesis that is consistent with $\widetilde O(n/\epsilon)$ labeled examples has error at most $\epsilon$, if the VC dimension of the hypothesis class is $n$. The algorithm works under any data distribution and runs in polynomial time when the consistent hypothesis can be found efficiently, e.g., by linear programming in the realizable case. Other algorithms such as Perception~\cite{minsky1987perceptrons}, Winnow~\cite{littlestone1988learning}, and Support Vector Machine~\cite{vapnik1998statistical} provide better guarantees if the target vector has low $\ell_1$ or $\ell_2$ norm. All these results form the basis of passive learning.

To explore the possibility of further improving the label complexity, several algorithms were later proposed in the active learning literature~\cite{beygelzimer2010agnostic,beygelzimer2016search} under the uniform distributions~\cite{dasgupta2005analysis,freund1993information},
among which disagreement-based active learning and margin-based active learning are two typical approaches. In the disagreement-based active learning, the algorithm proceeds in rounds, requesting the labels of instances in the disagreement region among the current candidate hypothesises.
Cohn et al.~\cite{cohn1994improving} provided the first disagreement-based active learning algorithm in the realizable case. Balcan et al.~\cite{balcan2009agnostic} later extended such an algorithm to the agnostic setting by estimating the confidence interval of disagreement region. The analysis technique was further generalized thanks to Hanneke~\cite{hanneke2007bound} by introducing the concept of disagreement coefficient, which is a new measure of complexity for active learning problems and serves as an important element for bounding the label complexity. However, this seminal work only focused on the disagreement coefficient under the uniform distribution.

Margin-based active learning is another line of research in the active learning literature. The algorithm proceeds in rounds, requesting labels of examples aggressively in the margin area around the current guess of hypothesis. Balcan et al.~\cite{balcan2007margin} first proposed an algorithm for margin-based active learning under the uniform distribution in the realizable case. They also provided guarantees under the \emph{Tsybakov noise} model~\cite{tsybakov2004optimal}, but the algorithm is inefficient. To mitigate the issue, Awasthi et al.~\cite{awasthi2015efficient} considered a subclass of Tsybakov noise --- \emph{Massart noise}~\cite{bousquet2005theory}. The algorithm runs in polynomial time by doing a sequence of hinge loss minimizations on the labeled instances. However, it was not clear then whether the analysis works for other distributions instead of the uniform one.

\medskip
\noindent{\textbf{Geometry of Log-Concave Distribution:}} Log-concave distribution, a class of probability distributions such that the logarithm of density function is concave, is a common generalization of uniform distribution over the convex set~\cite{lovasz2007geometry}. Bertsimas and Vempala~\cite{bertsimas2004solving} and Kalai and Vempala~\cite{kalai2006simulated} noticed that efficient sampling, integration, and optimization algorithms for this distribution class rely heavily on the good isoperimetry of density functions. Informally, a function has good isoperimetry if one cannot remove a small-measure set from its domain and partition the domain into two disjoint large-measure sets. The isoperimetry is commonly believed as a characteristic of good geometric properties. To see this, Lov{\'a}sz and Vempala~\cite{lovasz2007geometry} proved the isoperimetric inequality for the log-concave distribution, and provided a bunch of refined geometric properties for this distribution class. Going slightly beyond the log-concave distribution, Caramanis and Mannor~\cite{caramanis2007inequality} showed good isoperimetry for \emph{nearly log-concave} distributions, but more refined geometry was not provided there.

Active learning of halfspace under (nearly) log-concave distribution has a natural connection to the geometry of that distribution (a.k.a. admissible distribution). The connection was first introduced by \cite{balcan2013active}, and is sufficient for the success of disagreement-based and margin-based active learning under log-concave distribution~\cite{balcan2013active}. To resolve the computational issue, Awasthi et al.~\cite{awasthi2014power} studied the probability of disagreement outside the margin under the log-concave distribution, and proposed an efficient algorithm for the challenging adversarial noise. More recently, Awasthi et al.~\cite{awasthi2016learning} provided stronger guarantees for efficient learning of halfspace in the Massart noise model under log-concave distribution.

\medskip
\noindent{\textbf{S-Concave Distribution:}} The problem of extending the log-concave distribution to the broader one for provable learning algorithms has received significant attention in recent years. Although some efforts have been devoted to generalizing the probability distribution, e.g., to the nearly log-concave distribution~\cite{balcan2013active}, the analysis is intrinsically built upon the geometry of log-concave distribution. Moreover, to the best of our knowledge, there is no \emph{efficient, noise-tolerant} active learning algorithm that goes beyond the log-concave distribution. As a candidate extension, the class of $s$-concave distributions has many appealing properties that one can exploit~\cite{chandrasekaran2009sampling,han2016approximation}: (a) The distribution class is much broader than the log-concave distributions as $s=0$ implies the log-concavity; (b) The $s$-concave function mapping from $\R^n$ to $\R_+$ has good isoperimetry if $s\ge -1/(n-1)$; (c) Efficient sampling, integration, and optimization algorithms are available for such distribution class. All these properties inspire our work.

\section{Preliminary}

Before proceeding, we define some notations and clarify our problem setup in this section.

\medskip
\noindent{\textbf{Notations:}}
We will use capital or lower-case letters to represent random variables, $\cD$ to represent an $s$-concave distribution, and $\cD_{u,t}$ to represent the conditional distribution of $\cD$ over the set $\{x:\ |u\cdot x|\le t\}$. We define the \emph{sign} function as $\sign(x)=+1$ if $x\ge 0$ and $-1$ otherwise. We denote by $B(\alpha,\beta)=\int_0^1 t^{\alpha-1}(1-t)^{\beta-1}dt$ the beta function, and $\Gamma(\alpha)=\int_0^\infty t^{\alpha-1}e^{-t}dt$ the gamma function. We will consider a single norm for the vectors in $\R^n$, namely, the $2$-norm denoted by $\|x\|$. We will frequently use $\mu$ (or $\mu_f$, $\mu_\cD$) to represent the measure of the probability distribution $\cD$ with density function $f$. The notation $\ball(w^*,t)$ represents the set $\{w\in\R^n: \|w-w^*\|\le t\}$. For convenience, the symbol $\oplus$ slightly differs from the ordinary addition $+$: For $f=0$ or $g=0$, $\{f^s\oplus g^s\}^{1/s}=0$; Otherwise, $\oplus$ and $+$ are the same. For $u,v\in\R^n$, we define the angle between them as $\theta(u,v)$.

\subsection{From Log-Concavity to S-Concavity}
We begin with the definition of $s$-concavity. There are slight differences among the definitions of $s$-concave density, $s$-concave distribution, and $s$-concave measure.
\begin{definition}[S-Concave (Density) Function, Distribution, Measure]
\label{definition: s-concavity}
A function $f$: $\R^n\rightarrow \R_+$ is $s$-concave, for $-\infty\le s\le 1$, if
$f(\lambda x+(1-\lambda)y)\ge (\lambda f(x)^s+(1-\lambda)f(y)^s)^{1/s}$
for all $\lambda\in[0,1]$, $\forall x,y\in\R^n$.\footnote{When $s\rightarrow 0$, we note that $\lim_{s\rightarrow 0}(\lambda f(x)^s+(1-\lambda)f(y)^s)^{1/s}=\exp(\lambda \log f(x)+(1-\lambda)\log f(y))$. In this case, $f(x)$ is known to be log-concave.}
A probability distribution $\cD$ is $s$-concave, if its density function is $s$-concave.
A probability measure $\mu$ is $s$-concave if
$\mu(\lambda A+(1-\lambda)B)\ge[\lambda\mu(A)^s+(1-\lambda)\mu(B)^s]^{1/s}$
for any sets $A,B\subseteq\R^n$ and $\lambda\in[0,1]$.
\end{definition}

Special classes of $s$-concave functions include \emph{concavity} $(s=1)$, \emph{harmonic-concavity} $(s=-1)$, \emph{quasi-concavity} $(s=-\infty)$, etc. The conditions in Definition \ref{definition: s-concavity} are progressively weaker as $s$ becomes smaller: $s_1$-concave densities (distributions, measures) are $s_2$-concave if $s_1\ge s_2$. Thus one can verify~\cite{chandrasekaran2009sampling}:
$\mbox{concave }(s=1)\subsetneq \mbox{log-concave }(s=0)\subsetneq s\mbox{-concave }(s<0)\subsetneq\mbox{quasi-concave }(s=-\infty)$.

\comment{
In this paper, we investigate refined geometric properties for $s$-concave distribution for $0\ge s\ge-1/(2n+3)$. We note that $s=-1/(n-1)$ is the best possible value for our analysis, as there is significant evidence to indicate that when $s<-1/(n-1)$ good isoperimetry does not hold, which is a necessary condition for our refined geometric properties. We summarize our results in Figure \ref{figure: our results}.

\begin{figure}[ht]
\centering
\includegraphics[scale=0.8]{sconcave.pdf}
\caption{Comparison of our results with prior work and a lower bound.}
\label{figure: our results}
\end{figure}
}

\section{Structural Results of S-Concave Distributions: A Toolkit}
In this section, we develop geometric properties of $s$-concave distribution. The challenge is that unlike the commonly used distributions in learning (uniform or more generally log-concave distributions), this broader class is not closed under the marginalization operator and many such distributions are fat-tailed. To address this issue, we introduce several new techniques. We first introduce the extension of the Prekopa-Leindler inequality so as to reduce the high-dimensional problem to the one-dimensional case. We then reduce the resulting one-dimensional $s$-concave function to a well-defined baseline function, and explore the geometric properties of that baseline function.

\subsection{Marginal Distribution and Cumulative Distribution Function}
We begin with the analysis of the marginal distribution, which forms the basis of other geometric properties of $s$-concave distributions $(s\le 0)$. Unlike the (nearly) log-concave distribution where the marginal remains (nearly) log-concave, the class of $s$-concave distributions is not closed under the marginalization operator. To study the marginal, our primary tool is the theory of convex geometry. Specifically, we will use an extension of the Pr{\'e}kopa-Leindler inequality developed by Brascamp and Lieb~\cite{brascamp2002extensions}, which allows for a characterization of the integral of $s$-concave functions.
\begin{theorem}[\cite{brascamp2002extensions}, Thm 3.3]
\label{theorem: generalized prekopa-leindler inequality}
Let $0<\lambda<1$, and $H_s$, $G_1$, and $G_2$ be non-negative integrable functions on $\mathbb{R}^m$ such that $H_s(\lambda x+(1-\lambda) y)\ge [\lambda G_1(x)^s\oplus(1-\lambda) G_2(y)^s]^{1/s}$ for every $x,y\in\mathbb{R}^m$. Then
$
\int_{\mathbb{R}^m}H_s(x)dx\ge \left[\lambda\left(\int_{\mathbb{R}^m}G_1(x)dx\right)^\gamma+(1-\lambda)\left(\int_{\mathbb{R}^m}G_2(x)dx\right)^\gamma\right]^{1/\gamma}
$ for $s\ge -1/m$,
with $\gamma=s/(1+ms)$.
\end{theorem}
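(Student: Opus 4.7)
The plan is to prove this by induction on the dimension $m$, with base case $m=1$ (the classical one-dimensional Borell--Brascamp--Lieb inequality), and to apply Fubini in the inductive step. The parameter $\gamma=s/(1+ms)$ arises because each application of the one-dimensional case with parameter $t$ turns the hypothesis into an integrated statement with parameter $t/(1+t)$; iterating this $m$ times from $s$ lands at $s/(1+ms)$.

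For the base case $m=1$, let $a=\int G_1$ and $b=\int G_2$, both positive (otherwise the $\oplus$ convention makes the right-hand side zero and the claim is trivial). Introduce the quantile functions $u,v:(0,1)\to\mathbb{R}$ defined by $\int_{-\infty}^{u(t)}G_1 = ta$ and $\int_{-\infty}^{v(t)}G_2 = tb$, so that $G_1(u(t))u'(t)=a$ and $G_2(v(t))v'(t)=b$. Set $w(t)=\lambda u(t)+(1-\lambda)v(t)$ and change variables to obtain $\int H_s\ge\int_0^1 H_s(w(t))\,w'(t)\,dt$. Writing $A=G_1(u(t))$, $B=G_2(v(t))$, the hypothesis yields
\[
H_s(w(t))\,w'(t)\;\ge\;\bigl[\lambda A^s\oplus(1-\lambda)B^s\bigr]^{1/s}\cdot\bigl(\lambda a/A+(1-\lambda)b/B\bigr).
\]
The crucial one-dimensional algebraic inequality, valid for $s\ge -1$, is
\[
\bigl[\lambda A^s\oplus(1-\lambda)B^s\bigr]^{1/s}\cdot\bigl(\lambda a/A+(1-\lambda)b/B\bigr)\;\ge\;\bigl[\lambda a^{s/(1+s)}+(1-\lambda)b^{s/(1+s)}\bigr]^{(1+s)/s}.
\]
This is proved by separate calculus on the cases $s>0$ and $-1\le s<0$, with the $s\to 0$ limit recovering AM--GM; integrating the resulting bound over $t\in(0,1)$ delivers the base case.

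For the inductive step, split $x=(x_1,x')\in\mathbb{R}\times\mathbb{R}^{m-1}$ and put $\widetilde H(x')=\int H_s(x_1,x')\,dx_1$, with $\widetilde G_i$ analogous. For arbitrary $x',y'\in\mathbb{R}^{m-1}$, the one-dimensional slices $H_s(\cdot,\lambda x'+(1-\lambda)y')$, $G_1(\cdot,x')$, $G_2(\cdot,y')$ satisfy the $m=1$ hypothesis with parameter $s$, so the base case gives $\widetilde H(\lambda x'+(1-\lambda)y')\ge[\lambda\widetilde G_1(x')^{s'}\oplus(1-\lambda)\widetilde G_2(y')^{s'}]^{1/s'}$ with $s'=s/(1+s)$. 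This is precisely the hypothesis of the theorem in dimension $m-1$ with parameter $s'$, and the restriction $s\ge -1/m$ translates exactly to $s'\ge -1/(m-1)$, so the inductive hypothesis applies. The resulting conclusion has parameter $s'/(1+(m-1)s')=s/(1+ms)=\gamma$, closing the induction.

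The main obstacle is the one-dimensional algebraic inequality used in the base case: it is elementary but genuinely delicate, requiring separate treatment of positive and negative $s$, careful bookkeeping of the $\oplus$ convention when $A$ or $B$ vanishes, and verification of the boundary behavior at $s=-1$ and $s\to 0$. The dimensional restriction $s\ge -1/m$ in the statement is exactly what ensures the iterated parameters $s,\;s/(1+s),\;s/(1+2s),\ldots,\;s/(1+(m-1)s)$ all lie in $[-1,\infty)$ through every stage of the induction, keeping each application of the one-dimensional inequality within its range of validity.
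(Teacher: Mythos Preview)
The paper does not actually prove this statement: it is quoted verbatim as Theorem~3.3 of Brascamp--Lieb~\cite{brascamp2002extensions} and used as a black box to derive Theorem~\ref{theorem: marginal}. So there is no ``paper's own proof'' to compare against.

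That said, your proposal is correct and is essentially the standard proof of the Borell--Brascamp--Lieb inequality (as presented, e.g., in Brascamp--Lieb or in Gardner's survey): induction on $m$ via Fubini, with the one-dimensional base case handled by the transport/quantile change of variables and the pointwise algebraic inequality
\[
\bigl[\lambda A^s+(1-\lambda)B^s\bigr]^{1/s}\bigl(\lambda a/A+(1-\lambda)b/B\bigr)\;\ge\;\bigl[\lambda a^{s/(1+s)}+(1-\lambda)b^{s/(1+s)}\bigr]^{(1+s)/s}.
\]
Your parameter bookkeeping is right: passing from $s$ to $s'=s/(1+s)$ in each one-dimensional integration composes after $m$ steps to $s/(1+ms)=\gamma$, and the constraint $s\ge -1/m$ is exactly what keeps every intermediate parameter $\ge -1$ so the base-case inequality remains valid throughout. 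The only place to be a bit more careful in a full writeup is the algebraic inequality itself (it is a special case of the generalized H\"older/power-mean inequality, and the sign of $s$ does flip which direction some intermediate bounds go), together with the usual regularity caveats for the quantile change of variables when $G_1,G_2$ are merely integrable rather than strictly positive and continuous; both are routine but deserve a sentence each.
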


Building on this, the following theorem plays a key role in our analysis of the marginal distribution.
\begin{theorem}[Marginal]
\label{theorem: marginal}
Let $f(x,y)$ be an $s$-concave density on a convex set $K\subseteq \mathbb{R}^{n+m}$ with $s\ge -\frac{1}{m}$. Denote by $K|_{\mathbb{R}^n}=\{x\in\R^n: \exists y\in\R^m\ \mbox{s.t.}\ (x,y)\in K\}$. For every x in $K|_{\mathbb{R}^n}$, consider the section $K(x)\triangleq\{y\in\mathbb{R}^m: (x,y)\in K\}$. Then the marginal density $g(x)\triangleq\int_{K(x)}f(x,y)dy$ is $\gamma$-concave on $K|_{\mathbb{R}^n}$, where $\gamma=\frac{s}{1+ms}$. Moreover, if $f(x,y)$ is isotropic, then $g(x)$ is isotropic.
\end{theorem}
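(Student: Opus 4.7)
The plan is to deduce the marginal statement as a direct application of the Brascamp–Lieb extension of Prékopa–Leindler (Theorem \ref{theorem: generalized prekopa-leindler inequality}), applied slicewise in $x$. Concretely, I will extend $f$ by zero outside $K$ so that $f$ is an $s$-concave nonnegative function on all of $\R^{n+m}$, and correspondingly view each section $K(x)$ as the support of the slice $y \mapsto f(x,y)$. Then $g(x) = \int_{\R^m} f(x,y)\,dy$ on $K|_{\R^n}$, and the target inequality to prove $\gamma$-concavity is
\[
g(\lambda x_1 + (1-\lambda) x_2) \;\ge\; \bigl[\lambda\, g(x_1)^\gamma + (1-\lambda)\, g(x_2)^\gamma\bigr]^{1/\gamma}
\]
for arbitrary $x_1, x_2 \in K|_{\R^n}$ and $\lambda \in [0,1]$.

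First I would fix such $x_1, x_2, \lambda$, set $x_\lambda = \lambda x_1 + (1-\lambda)x_2$, and define the three nonnegative $m$-variable functions
\[
G_1(y) = f(x_1, y), \qquad G_2(y) = f(x_2, y), \qquad H_s(y) = f(x_\lambda, y).
\]
The hypothesis of Theorem \ref{theorem: generalized prekopa-leindler inequality}, namely
\[
H_s(\lambda y_1 + (1-\lambda) y_2) \;\ge\; \bigl[\lambda G_1(y_1)^s \oplus (1-\lambda) G_2(y_2)^s\bigr]^{1/s},
\]
follows from the $s$-concavity of $f$ applied to the points $(x_1, y_1)$ and $(x_2, y_2)$ in $\R^{n+m}$: when both points lie in $K$, convexity of $K$ puts $(x_\lambda, \lambda y_1 + (1-\lambda) y_2)$ in $K$ and $s$-concavity yields the bound directly; when either point lies outside $K$, the corresponding $G_i(y_i)$ vanishes, and by the $\oplus$-convention the right-hand side is $0$, so the inequality is automatic. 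Integrating and invoking Theorem \ref{theorem: generalized prekopa-leindler inequality} (which is exactly where the assumption $s \ge -1/m$ is used, and which produces the exponent $\gamma = s/(1+ms)$) gives
\[
g(x_\lambda) = \int H_s \;\ge\; \Bigl[\lambda\!\left(\textstyle\int G_1\right)^\gamma + (1-\lambda)\!\left(\textstyle\int G_2\right)^\gamma\Bigr]^{1/\gamma} = \bigl[\lambda\, g(x_1)^\gamma + (1-\lambda)\, g(x_2)^\gamma\bigr]^{1/\gamma},
\]
which is the desired $\gamma$-concavity of $g$ on $K|_{\R^n}$.

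For the isotropy claim, I would simply observe that marginalization preserves the first two moments of the kept coordinates: if $(X,Y) \sim f$ has $\E[(X,Y)] = 0$ and $\mathrm{Cov}((X,Y)) = I_{n+m}$, then in particular $\E[X] = 0$ and $\mathrm{Cov}(X) = I_n$, which is precisely the statement that $g$ is isotropic on $\R^n$. This part is routine and uses no $s$-concavity.

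The main subtlety, and what I expect to be the most delicate step, is the careful bookkeeping around the support: one must verify that zero-extending $f$ outside $K$ is legitimate with respect to Definition \ref{definition: s-concavity}, that the $\oplus$-convention correctly covers all degenerate cases, and that $g$ is finite and well-defined on $K|_{\R^n}$ (so that taking $g^\gamma$ makes sense for negative $\gamma$). Once these domain issues are handled cleanly, the proof is a one-shot application of Theorem \ref{theorem: generalized prekopa-leindler inequality}.
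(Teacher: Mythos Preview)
Your proposal is correct and follows essentially the same route as the paper: fix $x_1,x_2,\lambda$, define the slice functions $G_1,G_2,H_s$, verify the pointwise hypothesis of Theorem~\ref{theorem: generalized prekopa-leindler inequality} via $s$-concavity (handling zero values through the $\oplus$ convention / indicator functions), and integrate. The paper likewise dismisses the isotropy claim as standard, exactly as you do.
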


Similar to the marginal, the CDF of an $s$-concave distribution might not remain in the same class. This is in sharp contrast to log-concave distributions. The following theorem studies the CDF of an $s$-concave distribution.
\begin{theorem}
\label{theorem: distribution function}
The CDF of $s$-concave distribution in $\mathbb{R}^n$ is $\gamma$-concave, where $\gamma=\frac{s}{1+ns}$ and $s\ge -\frac{1}{n}$.
\end{theorem}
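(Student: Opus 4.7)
The plan is to reduce the claim directly to the Brascamp--Lieb style inequality of Theorem~\ref{theorem: generalized prekopa-leindler inequality} (equivalently, to the marginalization result of Theorem~\ref{theorem: marginal}) by realizing $F$ as the marginal of a carefully chosen $s$-concave function on $\R^{2n}$. Writing $F(x) = \int_{y \le x} f(y)\,dy$, where $y\le x$ is the coordinate-wise partial order on $\R^n$, the idea is to lift the density $f$ against the ``below'' cone $K = \{(x,y) \in \R^{2n} : y \le x\}$ and then marginalize the $y$-variable out.

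Concretely, I would define $h\colon\R^{2n}\to\R_+$ by $h(x,y) = f(y)$ when $y \le x$ componentwise and $h(x,y) = 0$ otherwise. For each fixed $x\in\R^n$, the section integral $\int_{\R^n} h(x,y)\,dy = F(x) \in [0,1]$ is finite, so the marginal is well defined. The key step is to check that $h$ is $s$-concave on $\R^{2n}$: given $(x_1,y_1),(x_2,y_2)\in\R^{2n}$ and $\lambda\in(0,1)$, either at least one of $h(x_i,y_i)$ equals $0$, in which case the $\oplus$ convention from the preliminaries makes the right-hand side of the defining $s$-concavity inequality vanish and the inequality is trivial; or both $y_i \le x_i$, and then convexity of $K$ gives $\lambda y_1 + (1-\lambda) y_2 \le \lambda x_1 + (1-\lambda) x_2$, so the value of $h$ at the convex combination equals $f(\lambda y_1 + (1-\lambda) y_2)$, and the $s$-concavity of $f$ itself supplies the required lower bound.

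With $s$-concavity of $h$ in hand, applying Theorem~\ref{theorem: generalized prekopa-leindler inequality} with $m=n$ (which requires the hypothesis $s \ge -1/n$ stated in the theorem) to the non-negative integrable functions $y\mapsto h(x_1,y)$, $y\mapsto h(x_2,y)$, and $y\mapsto h(\lambda x_1 + (1-\lambda) x_2, y)$ yields
\[
F(\lambda x_1 + (1-\lambda) x_2) \;\ge\; \bigl[\lambda F(x_1)^\gamma + (1-\lambda) F(x_2)^\gamma\bigr]^{1/\gamma}, \qquad \gamma = \frac{s}{1+ns},
\]
for every $x_1,x_2\in\R^n$ and $\lambda\in(0,1)$, which is exactly the $\gamma$-concavity of $F$ on $\R^n$.

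The only delicate point --- the ``main obstacle,'' such as it is --- is verifying that the indicator factor built into $h$ does not spoil the Brascamp--Lieb hypothesis. This is precisely what the $\oplus$ convention is designed to handle, and combined with the convexity of the cone $K$ the verification is immediate. One might also worry that $h$ is not globally integrable over $\R^{2n}$ (so that Theorem~\ref{theorem: marginal} does not literally give a ``density''), but Theorem~\ref{theorem: generalized prekopa-leindler inequality} is applied pointwise in $(x_1,x_2)$ and needs only the finiteness of $F(x_i)\le 1$, so there is no real difficulty.
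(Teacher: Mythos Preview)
Your proof is correct, but it takes a different route from the paper. The paper introduces an intermediate lemma (Lemma~\ref{lemma: generalized brunn-minkowski inequality}, Brascamp--Lieb's Corollary~3.4) stating that an $s$-concave density with $s\ge -1/n$ gives rise to a $\gamma$-concave \emph{measure} with $\gamma=s/(1+ns)$, and then simply applies measure $\gamma$-concavity to the lower orthants $A=\{x\le x_1\}$ and $B=\{x\le x_2\}$, using that $\lambda A+(1-\lambda)B=\{x\le \lambda x_1+(1-\lambda)x_2\}$. You instead bypass the measure-concavity lemma entirely: you lift $f$ to an $s$-concave function $h(x,y)=f(y)\mathbf{1}_{y\le x}$ on $\R^{2n}$ and marginalize out the $y$-block via Theorem~\ref{theorem: generalized prekopa-leindler inequality} (or equivalently Theorem~\ref{theorem: marginal}). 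Both arguments sit on the same Brascamp--Lieb inequality, so neither is deeper; the paper's version is shorter and exposes a reusable structural fact (the $\gamma$-concavity of the measure itself), while yours is more self-contained and makes explicit that the CDF result is literally a special case of the marginalization theorem, which is a nice conceptual observation.
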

Theorem \ref{theorem: marginal} and \ref{theorem: distribution function} serve as the bridge that connects high-dimensional $s$-concave distributions to one-dimensional $\gamma$-concave distributions. With them, we are able to reduce the high-dimensional problem to the one-dimensional one.

\subsection{Fat-Tailed Density}
Tail probability is one of the most distinct characteristics of $s$-concave distributions compared to (nearly) log-concave distributions. While it can be shown that the (nearly) log-concave distribution has an exponentially small tail~(Theorem 11, \cite{balcan2013active}), the tail of an $s$-concave distribution is fat, as proved by the following theorem.
\begin{theorem}[Tail Probability]
\label{theorem: tail probability}
Let $x$ come from an isotropic distribution over $\R^n$ with an $s$-concave density. Then for every $t\ge 16$, we have
$\Pr[\|x\|>\sqrt{n}t]\le \left[1-\frac{cst}{1+ns}\right]^{(1+ns)/s}$,
where $c$ is an absolute constant.
\end{theorem}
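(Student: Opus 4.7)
The plan is to reduce the $n$-dimensional tail bound to a one-dimensional moment bound via the marginalization theorem, and then apply Markov's inequality with an optimized moment order matched to the polynomial decay rate.

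First I would apply Theorem~\ref{theorem: marginal} with $m = n-1$ to conclude that the marginal density of $e\cdot x$ along any unit vector $e\in\R^n$ is $\gamma_1$-concave with $\gamma_1 = s/(1+(n-1)s)$, and that isotropy forces this 1D marginal to have variance $1$. The crucial arithmetic identity is
$$\frac{1}{\gamma_1}+1 \;=\; \frac{1+(n-1)s}{s}+1 \;=\; \frac{1+ns}{s} \;=\; \frac{1}{\gamma},$$
so a 1D polynomial tail of order $-(1/\gamma_1+1)$, after paying an $n^k$ factor from dimensionality, will translate into exactly the exponent $(1+ns)/s = 1/\gamma$ appearing in the theorem.

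Second, I would bound the $2k$-th moment of a 1D isotropic $\gamma_1$-concave density using the reduction-to-baseline-function technique. The baseline $h(r) = \alpha(1+\beta|r|)^{1/\gamma_1}$, with constants chosen so that $h$ is a symmetric density of variance $1$, lies on the boundary of the $\gamma_1$-concave class and has moments computable in closed form via the beta function: $\E_h[|r|^{2k}]$ equals a constant times $B(2k+1,-1/\gamma_1 - 2k - 1)$, which is finite precisely when $k<(-1/\gamma_1-1)/2$. An envelope/comparison argument shows this baseline is extremal for even-moment functionals, yielding a uniform bound $\E[(e\cdot x)^{2k}]\le M_k^\ast$ over all unit directions $e$.

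Third, I would lift the 1D moment bound to $\|x\|$ via the power-mean inequality. For any orthonormal basis $\{e_i\}_{i=1}^n$, the pointwise estimate
$$\|x\|^{2k} \;=\; \Bigl(\sum_{i=1}^n (e_i\cdot x)^2\Bigr)^{k} \;\le\; n^{k-1}\sum_{i=1}^n (e_i\cdot x)^{2k},$$
together with step~2, gives $\E[\|x\|^{2k}]\le n^k M_k^\ast$. Markov's inequality then yields $\Pr[\|x\|>\sqrt{n}\,t]\le M_k^\ast/t^{2k}$. Choosing the integer $k$ just below the divergence threshold $(-1/\gamma_1-1)/2 = -1/(2\gamma)$ and bounding the resulting beta-function constants in $M_k^\ast$ by Stirling collapses the right-hand side into the closed form $[1-cst/(1+ns)]^{(1+ns)/s}$ for an absolute constant $c$, valid whenever $t\ge 16$. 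The main obstacle will be step~2: rigorously establishing that the baseline $h$ is indeed extremal for the even-moment functional over the entire class of $\gamma_1$-concave isotropic densities of unit variance, and then carefully tracking the beta-function constants through the optimization in step~3 so that the final absolute constant $c$ is genuinely independent of both $s$ and $n$. The marginalization in step~1 and the Markov lifting in step~3 are conceptually clean; the delicate work lies entirely in the extremality/envelope argument and in the constant tracking that turns the discrete family $M_k^\ast/t^{2k}$ into the clean upper envelope $(1-c\gamma t)^{1/\gamma}$.
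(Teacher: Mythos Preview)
Your approach is plausible but takes a substantially different route from the paper. The paper's proof is much shorter: it invokes Lemma~\ref{lemma: tail probability median} (a concentration result of Bobkov for $\kappa$-concave \emph{measures}) applied directly to the function $f(x)=\|x\|$, uses the density-to-measure correspondence of Lemma~\ref{lemma: generalized brunn-minkowski inequality} to get $\kappa=s/(1+ns)$, and bounds the median of $\|x\|$ by $2\sqrt{n}$ via Markov and Jensen. That is the entire argument; no marginalization, no moment optimization.

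Your route---marginalize to 1D, bound high even moments via the baseline function, lift by the power-mean inequality and Markov, then optimize over the moment order---is a reasonable alternative and has the virtue of being self-contained within the paper's own ``reduction to baseline'' framework rather than importing Bobkov's lemma as a black box. However, it is considerably heavier: the extremality claim in step~2 (that the baseline $h(r)=\alpha(1+\beta|r|)^{1/\gamma_1}$ maximizes $\E[|X|^{2k}]$ among isotropic $\gamma_1$-concave densities) is not established anywhere in the paper---Lemma~\ref{lemma: moment property} gives log-concavity relations among the $T_n$'s, not a direct moment upper bound---so you would have to prove it from scratch. More seriously, the final optimization is delicate: $M_k^\ast$ blows up like $1/(-1/\gamma-2k)$ as $2k\to -1/\gamma$, and turning $\inf_{k\in\mathbb{Z}} M_k^\ast/t^{2k}$ into the clean envelope $(1-c\gamma t)^{1/\gamma}$ with an absolute $c$ independent of $s,n$ requires careful interpolation between integer $k$'s, which you have flagged but not resolved. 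The paper's approach sidesteps all of this by working directly at the level of the $\kappa$-concave measure.
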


Theorem \ref{theorem: tail probability} is almost tight for $s<0$. To see this, consider $X$ that is drawn from a one-dimensional Pareto distribution with density $f(x)=(-1-\frac{1}{s})^{-\frac{1}{s}}x^{\frac{1}{s}}$ $(x\ge \frac{s+1}{-s})$. It can be easily seen that $\Pr[X>t]=\left[\frac{-s}{s+1}t\right]^{\frac{s+1}{s}}$ for $t\ge \frac{s+1}{-s}$, which matches Theorem \ref{theorem: tail probability} up to an absolute constant factor.

\subsection{Geometry of S-Concave Distributions}
We now investigate the geometry of $s$-concave distributions. We first consider one-dimensional $s$-concave distributions: We provide bounds on the density of centroid-centered halfspaces (Lemma \ref{lemma: halfspace probability}) and range of the density function (Lemma \ref{lemma: bound for 1 dim}). Building upon these, we develop geometric properties of high-dimensional $s$-concave distributions by reducing the distributions to the one-dimensional case based on marginalization (Theorem \ref{theorem: marginal}).

\subsubsection{One-Dimensional Case}
We begin with the analysis of one-dimensional halfspaces. To bound the probability, a normal technique is to bound the centroid region and the tail region separately. However, the challenge is that the $s$-concave distribution is fat-tailed (Theorem \ref{theorem: tail probability}). So while the probability of a one-dimensional halfspace is bounded below by an absolute constant for log-concave distributions, such a probability for $s$-concave distributions decays as $s\ (\le 0)$ becomes smaller.
The following lemma captures such an intuition.
\begin{lemma}[Density of Centroid-Centered Halfspaces]
\label{lemma: halfspace probability}
Let $X$ be drawn from a one-dimensional distribution with $s$-concave density for $-1/2\le s\le 0$. Then
$\Pr(X\ge \E X)\ge (1+\gamma)^{-1/\gamma}$ for $\gamma=s/(1+s)$.
\end{lemma}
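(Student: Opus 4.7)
The plan is to apply the ``reduction to baseline function'' strategy described in the introduction: reduce the infinite-dimensional extremal problem to a closed-form computation on a one-parameter family of Pareto-type densities. First I would normalize: by translating $X$, an operation that preserves $s$-concavity and simply shifts $\E X$, I may assume $\E X = 0$, so the goal becomes $\Pr(X \ge 0) \ge (1+\gamma)^{-1/\gamma}$. Note that for $s < 0$, $s$-concavity of $f$ is equivalent to convexity of $f^s$ on the support of $f$, and Theorem \ref{theorem: distribution function} with $n=1$ already explains why the exponent $\gamma = s/(1+s)$ arises naturally.

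Next I would identify the extremal (worst-case) density. The objective $\Pr(X \ge 0) = \int_0^\infty f\,dt$, the normalization $\int f\,dt = 1$, and the mean constraint $\int t f\,dt = 0$ are all linear in $f$, while $s$-concavity amounts to a convexity constraint on $f^s$. Minimizing a linear functional over the convex set of $s$-concave densities subject to two linear equality constraints forces the minimizer to be extremal: $f^s$ must be affine on its support. A further perturbation argument then shows the extremal support is a half-line rather than all of $\R$, since extending an affine piece of $f^s$ further to one side (and re-normalizing to keep the mean and mass) can only decrease the target probability. Up to translation, the extremal density is therefore the Pareto-type baseline
\[
h(t) = c\,t^{1/s}, \qquad t \ge 1.
\]

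For the baseline $h$, everything is in closed form. The normalization $\int_1^\infty h\,dt = 1$ gives $c = -(1+s)/s > 0$ (convergent since $s > -1$); the mean is $\E_h[X] = (1+s)/(1+2s)$ (convergent since $s > -1/2$); and a direct integration yields $\Pr_h(X \ge \E_h X) = \bigl((1+s)/(1+2s)\bigr)^{(1+s)/s}$. Substituting $\gamma = s/(1+s)$, so that $1+\gamma = (1+2s)/(1+s)$ and $1/\gamma = (1+s)/s$, this equals $(1+\gamma)^{-1/\gamma}$ exactly. Translating $h$ to have mean zero does not alter this probability, giving the claimed lower bound; the endpoint $s = 0$ follows by continuity, and the bound is tight (witnessed by the Pareto itself).

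The main obstacle is the extremal reduction itself. Formally proving that the minimum of $\int_0^\infty f$ over $s$-concave densities with fixed zero mean is attained at a translated Pareto requires a careful perturbation argument inside a class that is not closed under the most natural operations. One has to verify that any candidate $f$ can be deformed toward the baseline while simultaneously preserving $s$-concavity, total mass, and mean, and only weakly decreasing the one-sided probability $\int_0^\infty f$. This is the technical heart of the ``baseline reduction'' method advertised in the introduction.
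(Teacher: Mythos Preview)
Your strategy is sound in principle, and your Pareto computation correctly shows the bound is tight. But the approach diverges from the paper's, and the gap you flag is exactly what the paper's argument sidesteps.

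The paper never tries to identify the extremal density. Instead it works with the CDF $G(x)=\Pr(X\le x)$, which by Theorem~\ref{theorem: distribution function} (with $n=1$) is $\gamma$-concave with $\gamma=s/(1+s)$. That yields the pointwise tangent-type bound $G(x)\le G(0)(1+\gamma c x)^{1/\gamma}$ for all $x$, where $c=G'(0)/G(0)$. The mean-zero constraint, after integration by parts on a truncated support $[-K,K]$, becomes the integral identity $\int_{-K}^K G(x)\,dx=K$. Substituting the tangent bound into this identity (using $G\le 1$ beyond $x=1/c$) and integrating explicitly forces $G(0)\ge(1+\gamma)^{-1/\gamma}$; replacing $X$ by $-X$ then gives the stated inequality for $\Pr(X\ge\E X)$.

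So the paper trades your density-level extremal argument for a CDF-level tangent inequality plus one integral identity, and obtains a complete proof in a few lines with no need to characterize extreme points of the infinite-dimensional set of $s$-concave densities or to justify the half-line-support reduction. Your route, if the extremal step were made rigorous, would additionally exhibit the equality case; but as you yourself note, that reduction is the hard part, and it is not needed for the inequality itself.
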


\comment{
We then move on to the study of moments.
\begin{lemma}[Moments]
\label{lemma: moment property}
Let $g:\R_+\rightarrow\R_+$ be an integrable function. Define
$
M_n(g)=\int_0^\infty t^ng(t)dt,
$
and suppose it exists.
Then

(a) $M_n(g)M_{n+2}(g)\ge M_{n+1}(g)^2$ for any $n\in N$.

(b) If $g$ is monotone decreasing, then the sequence $M_n'(g)$, which equals to $nM_{n-1}(g)$ if $n>0$ and $g(0)$ if $n=0$,
obeys $M_n(g)'M_{n+2}(g)'\ge M_{n+1}(g)'$.

(c) If $g$ is $s$-concave ($s>-1/(n+1)$), then the sequence $T_n(g)\triangleq M_n(g)/B(-1/s-n-1,n+1)$ obeys $T_n(g)T_{n+2}(g)\le T_{n+1}(g)^2$ for any $n\in N$.

(d) If $g$ is $s$-concave, then
$g(0)M_1(g)\le M_0(g)^2\frac{1+s}{1+2s}$.
\end{lemma}

\begin{proofoutline}
The proofs of Parts (a) and (b) are from \cite{lovasz2007geometry}. For (c), we novelly choose a baseline $s$-concave function $h(t)=\alpha(1+\beta t)^{1/s}$ which is at the ``boundary" between the family of $s$-concave function and that of the non $s$-concave function. We show that $h$ satisfies the equation
$T_n(h)T_{n+2}(h)=T_{n+1}^2(h)$ for a particular choice of $\alpha$ and $\beta$, because the integral of $h(t)$ is an beta function.
Then by the facts that $h$ is at the ``boundary" and $g$ is any $s$-concave function, we have
$
T_{n+1}(h)\le T_{n+1}(g).
$
The conclusion follows from the above two equations, as well as our choice of $h$ such that $T_n(h)=T_n(g)$ and $T_{n+2}(h)=T_{n+2}(g)$ by adjusting the $\alpha$ and the $\beta$ appropriately. The proof of (d) is similar to that of (c).
\end{proofoutline}
}

We also study the image of a one-dimensional $s$-concave density. The following condition for $s> -1/3$ is for the existence of second-order moment.
\begin{lemma}
\label{lemma: bound for 1 dim}
Let $g:\mathbb{R}\rightarrow\mathbb{R}_+$ be an isotropic $s$-concave density function and $s> -1/3$.
(a) For all $x$, $g(x)\le \frac{1+s}{1+3s}$;
(b) We have $g(0)\ge\sqrt{\frac{1}{3(1+\gamma)^{3/\gamma}}}$, where $\gamma=\frac{s}{s+1}$.
\end{lemma}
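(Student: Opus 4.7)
The two claims call for different arguments. For (a), the plan is to use the reduction-to-baseline-function technique. Since $s<0$, $g^s$ is convex, so $g$ is unimodal with some mode $x^*$ at which $g(x^*) = M = \sup_x g(x)$. The extremal $s$-concave density maximizing this peak value is attained (in the limit) by a one-sided Pareto-type baseline $g(x) = (M^s + a(x + c))^{1/s}$ on $x \ge -c$ and zero otherwise, for appropriate $a > 0$ and $c \in \R$; functions of this form lie at the boundary of $s$-concavity because their $s$-th power is affine. The zeroth, first, and second moments of the baseline admit closed-form expressions via the beta function. Imposing the three isotropy conditions $\int g = 1$, $\int x\, g\, dx = 0$, and $\int x^2 g\, dx = 1$ gives three algebraic equations in $(M, a, c)$, and solving yields $M^2 = (1+s)^3 / [(1+2s)^2 (1+3s)]$. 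Since $(1+s)(1+3s) \le (1+2s)^2$ by AM-GM, we conclude $M \le (1+s)/(1+3s)$, as claimed.

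For (b), the plan is to combine Lemma \ref{lemma: halfspace probability} with a layer-cake and H\"older argument. Without loss of generality, the mode $x^* \ge 0$ (otherwise replace $g$ by $g(-\cdot)$, which is still isotropic and $s$-concave with the same value at $0$), so unimodality forces $g$ to be non-decreasing on $(-\infty, 0]$. Let $q = \Pr[X \le 0]$. Lemma \ref{lemma: halfspace probability} gives $q \ge (1+\gamma)^{-1/\gamma}$, and isotropy gives $\int_{-\infty}^0 x^2 g\, dx \le 1$. Consider the conditional density $h(x) = g(x)/q$ on $(-\infty, 0]$; it is non-decreasing with $h(0) = g(0)/q$ and $\int_{-\infty}^0 x^2 h\, dx \le 1/q$. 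Expressing both integrals via the layer-cake formula over the super-level sets $\{h > t\} = (x_t, 0]$ yields
\[
\int_{-\infty}^0 h\, dx = \int_0^{h(0)} |x_t|\, dt, \qquad \int_{-\infty}^0 x^2 h\, dx = \frac{1}{3}\int_0^{h(0)} |x_t|^3\, dt.
\]
Applying H\"older's inequality with exponents $(3, 3/2)$ to the first integral gives $1 \le h(0)^{2/3}\bigl(3 \int x^2 h\, dx\bigr)^{1/3} \le h(0)^{2/3}(3/q)^{1/3}$, whence $h(0)^2 \ge q/3$. Consequently, $g(0) = q\, h(0) \ge q^{3/2}/\sqrt{3} \ge (1+\gamma)^{-3/(2\gamma)}/\sqrt{3} = \sqrt{1/[3(1+\gamma)^{3/\gamma}]}$, which is exactly the claim.

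The main obstacle in (a) is identifying the one-sided limit as the extremal configuration and carrying out the beta-function moment computation cleanly; once the parametrization is set, the rest is algebra. For (b), the crucial observation is that H\"older applied to the layer-cake representation delivers the sharp constant $1/\sqrt{3}$, whereas a naive Chebyshev-type bound on $\Pr[X \le -t]$ combined with $\int_{-t}^0 g\, dx \le t\, g(0)$ produces only the weaker constant $2/(3\sqrt{3})$.
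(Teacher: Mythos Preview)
For part (b), your H\"older/layer-cake argument is correct and is essentially equivalent to the paper's route via log-convexity of the moment sequence $M_n'$ (Lemma on moments, part (b)): both establish the inequality $M_0(g_0)^3 \le 3\,g(0)^2\,M_2(g_0)$ for the monotone half of $g$, then insert the halfspace bound $M_0(g_0)\ge(1+\gamma)^{-1/\gamma}$ and the isotropy bound $M_2(g_0)\le 1$. Your layer-cake H\"older is just a direct proof of the specific instance of log-convexity the paper invokes.

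For part (a), your approach diverges from the paper's and leaves a genuine gap. You compute the moments of the one-sided Pareto baseline correctly and arrive at the (true) value $M^2=(1+s)^3/[(1+2s)^2(1+3s)]$, then bound this by $[(1+s)/(1+3s)]^2$. But you never justify the central claim that the one-sided baseline is extremal among \emph{all} isotropic $s$-concave densities. You flag this as ``the main obstacle,'' yet offer no mechanism to overcome it; noting that the baseline sits at the boundary of the $s$-concave class does not by itself rule out asymmetric two-sided competitors with larger peaks. This is not a technicality: a general isotropic $s$-concave density carries mass on both sides of its mode, and showing that the one-sided degeneration maximizes the peak requires a genuine two-sided comparison.

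The paper handles exactly this issue by a contradiction argument that treats both sides simultaneously. With $M_i=\int_z^\infty (x-z)^i g$ and $N_i=\int_{-\infty}^z (z-x)^i g$, the isotropy identities give $M_0+N_0=1$, $N_1-M_1=z$, $M_2+N_2=1+z^2$, and the baseline-comparison moment inequalities (Lemma on moments, parts (c),(d)) give $g(z)M_1\le\frac{1+s}{1+2s}M_0^2$ and $M_0 M_2\le\frac{2(1+2s)}{1+3s}M_1^2$, and likewise for $N_i$. Under the hypothesis $g(z)>\frac{1+s}{1+3s}$ these force $M_2+N_2<2(M_0M_1+N_0N_1)$, while algebra from isotropy yields $M_2+N_2\ge 2(M_0M_1+N_0N_1)$, a contradiction. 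In other words, the paper's moment lemma is doing precisely the extremality step your proposal omits. To rescue your approach you would need a separate argument (variational, rearrangement, or optimizing over the two-sided baseline family) showing that the maximum peak is attained at the one-sided boundary, which is essentially the same amount of work.
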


\subsubsection{High-Dimensional Case}
We now move on to the high-dimensional case $(n\ge 2)$. In the following, we will assume $-\frac{1}{2n+3}\le s\le 0$. Though this working range of $s$ vanishes as $n$ becomes larger, it is almost the broadest range of $s$ that we can hopefully achieve: Chandrasekaran et al.~\cite{chandrasekaran2009sampling} showed a lower bound of $s\ge -\frac{1}{n-1}$ if one require the $s$-concave distribution to have good geometric properties. In addition, we can see from Theorem \ref{theorem: marginal} that if $s<-\frac{1}{n-1}$, the marginal of an $s$-concave distribution might even not exist; Such a case does happen for certain $s$-concave distributions with $s<-\frac{1}{n-1}$, e.g., the Cauchy distribution. So our range of $s$ is almost tight up to a $1/2$ factor.

We start our analysis with the density of centroid-centered halfspaces in high-dimensional spaces.
\begin{lemma}[Density of Centroid-Centered Halfspaces]
\label{lemma: probability of halfspace}
Let $f:\R^n\rightarrow\R_+$ be an $s$-concave density function, and let $H$ be any halfspace containing its centroid. Then
$\int_H f(x)dx\ge (1+\gamma)^{-1/\gamma}$
for $\gamma=s/(1+ns)$.
\end{lemma}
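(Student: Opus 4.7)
The plan is to reduce the $n$-dimensional statement to the one-dimensional bound already proved in Lemma \ref{lemma: halfspace probability}, using Theorem \ref{theorem: marginal} to track how the concavity exponent evolves under projection. Concretely, write the halfspace as $H = \{x \in \R^n : w \cdot x \le c\}$ for a unit vector $w$ and scalar $c$, chosen so that the centroid $\bar{x} = \E_{X\sim f}[X]$ satisfies $w \cdot \bar{x} \le c$. Let $Y = w \cdot X$ and let $g : \R \to \R_+$ denote its density; then $\int_H f(x)\, dx = \Pr[Y \le c]$, so the whole task is to lower bound the mass that $g$ puts on the half-line $(-\infty, c]$.

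Next I would invoke Theorem \ref{theorem: marginal} to control $g$. After an orthogonal rotation aligning $w$ with the first coordinate axis (which preserves $s$-concavity, since it is an affine change of variables), $g$ becomes the marginal of an $s$-concave density on $\R^n$ obtained by integrating out $n-1$ coordinates. Applying the theorem with $m = n-1$ gives that $g$ is $s'$-concave with $s' = \frac{s}{1 + (n-1)s}$. Under the standing assumption $-\frac{1}{2n+3} \le s \le 0$, one checks algebraically that $s'$ stays inside $[-1/2, 0]$, which is exactly the regime required by the one-dimensional lemma.

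Now I would apply Lemma \ref{lemma: halfspace probability} to the $s'$-concave random variable $-Y$ (whose density is also $s'$-concave by reflection). The lemma delivers $\Pr[Y \le \E Y] \ge (1+\gamma)^{-1/\gamma}$ with $\gamma = \frac{s'}{1+s'}$. Substituting $s' = s/(1+(n-1)s)$ and simplifying,
\[
1 + s' \;=\; \frac{1 + ns}{1 + (n-1)s}, \qquad \gamma \;=\; \frac{s'}{1+s'} \;=\; \frac{s}{1+ns},
\]
which is exactly the $\gamma$ in the statement. Finally, since $\bar{x} \in H$ gives $\E Y = w \cdot \bar{x} \le c$, we have $\{Y \le \E Y\} \subseteq \{Y \le c\}$, so $\int_H f \ge \Pr[Y \le \E Y] \ge (1+\gamma)^{-1/\gamma}$.

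The main obstacle is not conceptual but bookkeeping: the precise dependence of $\gamma$ on $s$ and $n$ in the statement comes from \emph{composing} the marginalization exponent transformation $s \mapsto s/(1+(n-1)s)$ with the one-dimensional transformation $s' \mapsto s'/(1+s')$ from Lemma \ref{lemma: halfspace probability}, and one has to verify that the resulting $s'$ lies in the allowed range $[-1/2, 0]$ for every $s$ in our working interval. Once this chain of substitutions is in hand, the rest is just combining inclusion of events with the one-dimensional bound.
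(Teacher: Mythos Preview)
Your proof is correct and follows essentially the same approach as the paper: rotate so that the halfspace is orthogonal to a coordinate axis, apply Theorem~\ref{theorem: marginal} to obtain that the one-dimensional marginal is $s/(1+(n-1)s)$-concave, and then invoke Lemma~\ref{lemma: halfspace probability} to conclude. Your write-up is in fact more careful than the paper's, since you explicitly verify the composition $s \mapsto s' \mapsto \gamma$ yields $\gamma = s/(1+ns)$, check that $s'$ lands in the admissible range $[-1/2,0]$, and handle the case where the centroid lies strictly in the interior of $H$ via the inclusion $\{Y \le \E Y\} \subseteq \{Y \le c\}$.
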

\begin{proof}
W.L.O.G., we assume $H$ is orthogonal to the first axis. By Theorem \ref{theorem: marginal}, the first marginal of $f$ is $s/(1+(n-1)s)$-concave. Then by Lemma \ref{lemma: halfspace probability},
$\int_H f(x)dx\ge (1+\gamma)^{-1/\gamma}$, where $\gamma=s/(1+ns)$.
\end{proof}

The following theorem is an extension of Lemma \ref{lemma: bound for 1 dim} to high-dimensional spaces. The proofs basically reduce the $n$-dimensional density to its first marginal by Theorem \ref{theorem: marginal}, and apply Lemma \ref{lemma: bound for 1 dim} to bound the image.
\begin{theorem}[Bounds on Density]
\label{theorem: upper and lower bounds}
Let $f:\R^n\rightarrow\R_+$ be an isotropic $s$-concave density. Then

(a) Let $d(s,n)=(1+\gamma)^{-1/\gamma}\frac{1+3\beta}{3+3\beta}$, where $\beta=\frac{s}{1+(n-1)s}$ and $\gamma=\frac{\beta}{1+\beta}$. For any $u\in\R^n$ such that $\|u\|\le d(s,n)$, we have \red{$f(u)\ge \left(\frac{\|u\|}{d}((2-2^{-(n+1)s})^{-1}-1)+1\right)^{1/s}f(0)$}.

(b) $f(x)\le f(0)\left[\left(\frac{1+\beta}{1+3\beta}\sqrt{3(1+\gamma)^{3/\gamma}}2^{n-1+1/s}\right)^{s}-1\right]^{1/s}$ for every $x$.

(c) There exists an $x\in\R^n$ such that $f(x)>(4e\pi)^{-n/2}$.

(d) $(4e\pi)^{-n/2}\left[\left(\frac{1+\beta}{1+3\beta}\sqrt{3(1+\gamma)^{3/\gamma}}2^{n-1+\frac{1}{s}}\right)^{s}-1\right]^{-\frac{1}{s}}<f(0)\le \red{(2-2^{-(n+1)s})^{1/s}\frac{n\Gamma(n/2)}{2\pi^{n/2}d^n}}$.

(e) \red{$f(x)\le (2-2^{-(n+1)s})^{1/s}\frac{n\Gamma(n/2)}{2\pi^{n/2}d^n}\left[\left(\frac{1+\beta}{1+3\beta}\sqrt{3(1+\gamma)^{3/\gamma}}2^{n-1+1/s}\right)^{s}-1\right]^{1/s}$} for every $x$.

(f) For any line $\ell$ through the origin, $\int_{\ell} f \le \red{(2-2^{-ns})^{1/s}\frac{(n-1)\Gamma((n-1)/2)}{2\pi^{(n-1)/2}d^{n-1}}}$.
\end{theorem}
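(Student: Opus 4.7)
The plan is to exploit $s$-concavity --- equivalently, for $s<0$, convexity of $f^s$ --- together with the one-dimensional tools (Lemmas~\ref{lemma: halfspace probability} and~\ref{lemma: bound for 1 dim}), the halfspace bound (Lemma~\ref{lemma: probability of halfspace}), and the marginal theorem (Theorem~\ref{theorem: marginal}) already established. All six parts reduce to analyzing $f$ along a single direction and its $1$-dimensional marginal, so I would prove them roughly in the order (c), (a), the upper bound of (d), (b), the lower bound of (d), (e), and (f), since each later part builds on earlier ones.

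For (c), by isotropy $\E\|X\|^2=n$, so Markov gives $\Pr[\|X\|\le\sqrt{2n}]\ge 1/2$; hence $\max_x f(x)\cdot\mathrm{Vol}(\ball(0,\sqrt{2n}))\ge 1/2$, and plugging in $\mathrm{Vol}(\ball(0,r))=\pi^{n/2}r^n/\Gamma(n/2+1)$ together with Stirling's approximation yields $\max_x f(x)>(4e\pi)^{-n/2}$. For (a), set $\hat u=u/\|u\|$; convexity of $f^s$ on the segment from $0$ to $d\hat u$ immediately gives $f(u)^s \le (\|u\|/d)\,f(d\hat u)^s + (1-\|u\|/d)\,f(0)^s$, so it suffices to establish the endpoint estimate $f(d\hat u)\ge (2-2^{-(n+1)s})^{-1/s}\,f(0)$. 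For this I would pass to the $1$-dim marginal $g(t)=\int_{\hat u^\perp}f(t\hat u+y)\,dy$ along $\hat u$; by Theorem~\ref{theorem: marginal} it is $\beta$-concave, and by the last sentence of Theorem~\ref{theorem: marginal} it remains isotropic. Lemma~\ref{lemma: bound for 1 dim}(b) gives $g(0)\ge 1/\sqrt{3(1+\gamma)^{3/\gamma}}$, Lemma~\ref{lemma: bound for 1 dim}(a) gives $g(t)\le (1+\beta)/(1+3\beta)$, and Lemma~\ref{lemma: probability of halfspace} gives $\int_0^\infty g\ge (1+\gamma)^{-1/\gamma}$; combining these with convexity of $t\mapsto f(t\hat u)^s$ on the $\hat u$-axis pins down the claimed constant exactly at $t=d$. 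The upper bound on $f(0)$ in (d) then falls out by integrating (a) over $\ball(0,d)$: $1\ge\int_{\ball(0,d)}f\ge (2-2^{-(n+1)s})^{-1/s}\,f(0)\,\mathrm{Vol}(\ball(0,d))$, and rearranging with $\mathrm{Vol}(\ball(0,d))=2\pi^{n/2}d^n/(n\Gamma(n/2))$ produces exactly the stated bound.

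For (b), I would apply the same $1$-dim marginal machinery in the direction $\hat x=x/\|x\|$: Lemma~\ref{lemma: bound for 1 dim} controls the ratio $g(\hat x\cdot x)/g(0)$, and lifting this to a pointwise ratio $f(x)/f(0)$ uses $s$-concavity on the $n-1$ perpendicular directions (this iterated use of convex combinations across dimensions is what produces the factor $2^{n-1+1/s}$ in the exponent). The lower bound on $f(0)$ in (d) then follows by taking $x^\ast$ with $f(x^\ast)>(4e\pi)^{-n/2}$ from (c) and inverting (b); part (e) is immediate by chaining (b) with the upper bound on $f(0)$ from (d). For (f), I would apply Theorem~\ref{theorem: marginal} in the other direction: integrating out the one-dimensional $\ell$-direction produces an $(n-1)$-dim marginal $g(y)=\int_{\R}f(t\hat u+y)\,dt$ that is $(s/(1+s))$-concave and isotropic on $\ell^\perp$, and a short check shows $\beta$, $\gamma$, and hence $d$ are invariant under this reduction; then $\int_\ell f=g(0)$ is bounded by applying the upper bound of (d) to $g$.

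The main obstacle will be the endpoint estimate $f(d\hat u)\ge (2-2^{-(n+1)s})^{-1/s}\,f(0)$ needed inside the proof of (a), because it requires transferring a global, integrated bound on the $1$-dim marginal $g$ back to a pointwise bound on $f$ at a specific distance. That step uses isotropy of $f$ (not merely $s$-concavity) and relies on the specific constant $d=(1+\gamma)^{-1/\gamma}(1+3\beta)/(3+3\beta)$ being calibrated so that Lemma~\ref{lemma: probability of halfspace} (giving the halfspace mass $(1+\gamma)^{-1/\gamma}$) together with the pointwise bound $g(t)\le(1+\beta)/(1+3\beta)$ from Lemma~\ref{lemma: bound for 1 dim} leaves precisely enough margin for the interpolation of $f^s$ to close with the prescribed constant. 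Propagating this calibration faithfully through the reductions used in (b), (e), and (f) is what drives the somewhat intricate exponents $(2-2^{-(n+1)s})^{1/s}$ and $2^{n-1+1/s}$ appearing in the statement.
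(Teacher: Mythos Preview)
Your overall architecture is right and matches the paper: (c) by Markov on $\|X\|^2$ plus the ball-volume formula; the upper bound in (d) by integrating (a) over $\ball(0,d)$; the lower bound in (d) by combining (b) with (c); (e) by chaining (b) and (d); and (f) by integrating out the $\ell$-direction to get an isotropic $(n-1)$-dimensional marginal and applying the upper bound of (d) at the origin (your observation that $\beta$, $\gamma$, and $d$ are unchanged under this one-dimensional marginalization is exactly what is needed).

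The gap is in (a) and (b): the mechanism you describe for moving between pointwise values of $f$ and the one-dimensional marginal $g$ does not work as written, and the key missing device is a \emph{supporting hyperplane to a superlevel set of $f$}. Taking the marginal along $\hat u$ gives you information about $g(t)=\int_{\hat u^\perp}f(t\hat u+y)\,dy$, but knowing that $f(d\hat u)$ is small says nothing about $g(d)$, because $f$ can be large elsewhere on the slice $\{\hat u\cdot x=d\}$. The paper proceeds by contradiction: if $f(u)$ violates the bound in (a), then $s$-concavity forces $f(v)<(2-2^{-(n+1)s})^{-1/s}f(0)$ at $v=(d/\|u\|)u$. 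Now take a hyperplane $H$ supporting the convex set $\{f\ge f(v)\}$ at $v$; on the far side of $H$ one has $f(x)<(2-2^{-(n+1)s})^{-1/s}f(0)$ for \emph{every} $x$, not just for $x=v$. This whole-halfspace control is what lets $s$-concavity (comparing $x$, $2x$, $0$) yield the pointwise doubling estimate $f(2x)<2^{-(n+1)}f(x)$ there, which in turn gives $g(2b)<g(b)/4$ for the marginal \emph{in the direction normal to $H$} (not along $\hat u$). That geometric decay, together with Lemma~\ref{lemma: bound for 1 dim}(a), forces $\int_0^\infty g<3d\,\tfrac{1+\beta}{1+3\beta}=(1+\gamma)^{-1/\gamma}$, contradicting Lemma~\ref{lemma: halfspace probability}. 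Lemma~\ref{lemma: bound for 1 dim}(b) is not used here at all; your plan mixes the ingredients of (a) and (b).

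For (b) the same idea appears: the relevant direction is the normal to a hyperplane $H_0$ supporting $\{f\ge f(0)\}$ at the origin, so that $f\le f(0)$ everywhere on $H_0$. A single midpoint interpolation from each $x\in H_0$ to the point $w$ (where $f(w)>f(0)$), integrated over $H_0$, gives $g(b/2)\ge 2^{-(n-1+1/s)}\bigl[1+(f(w)/f(0))^s\bigr]^{1/s}g(0)$; the $2^{n-1}$ is the Jacobian of the midpoint map on an $(n-1)$-dimensional affine slice, not an iterated dimension-by-dimension argument. Then Lemma~\ref{lemma: bound for 1 dim}(a) bounds $g(b/2)$ above and Lemma~\ref{lemma: bound for 1 dim}(b) bounds $g(0)$ below, and solving for $f(w)/f(0)$ gives (b).
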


Theorem \ref{theorem: upper and lower bounds} provides uniform bounds on the density function. To obtain more refined upper bound on the image of $s$-concave densities, we have the following lemma. The proof is built upon Theorem \ref{theorem: upper and lower bounds}.
\begin{lemma}[More Refined Upper Bound on Densities]
\label{lemma: more refined upper bound}
Let $f:\R^n\rightarrow \R_+$ be an isotropic $s$-concave density. Then $f(x)\le \beta_1(n,s)(1-s\beta_2(n,s)\|x\|)^{1/s}$ for every $x\in\R^n$, where
\begin{equation*}
\beta_1(n,s)=\frac{(2-2^{-(n+1)s})^{\frac{1}{s}}}{2\pi^{n/2}d^n}(1-s)^{-1/s}n\Gamma(n/2)\left[\left(\frac{1+\beta}{1+3\beta}\sqrt{3(1+\gamma)^{3/\gamma}}2^{n-1+1/s}\right)^{s}-1\right]^{1/s},
\end{equation*}
\begin{equation*}
\beta_2(n,s)=\frac{2\pi^{(n-1)/2}d^{n-1}}{(n-1)\Gamma((n-1)/2)}(2-2^{-ns})^{-\frac{1}{s}}\frac{[(a(n,s)+(1-s)\beta_1(n,s)^s)^{1+\frac{1}{s}}-a(n,s)^{1+\frac{1}{s}}]s}{\beta_1(n,s)^s(1+s)(1-s)},
\end{equation*}
$a(n,s)=(4e\pi)^{-ns/2}\left[\left(\frac{1+\beta}{1+3\beta}\sqrt{3(1+\gamma)^{3/\gamma}}2^{n-1+1/s}\right)^{s}-1\right]^{-1}$, $\gamma=\frac{\beta}{1+\beta}$,
$\beta=\frac{s}{1+(n-1)s}$, and $d=(1+\gamma)^{-\frac{1}{\gamma}}\frac{1+3\beta}{3+3\beta}$.
\end{lemma}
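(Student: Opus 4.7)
The plan is to reduce everything to a one-dimensional problem along the ray through $x$ and then exploit convexity of $\psi^s$ together with the three ingredients from Theorem \ref{theorem: upper and lower bounds}. Fix $x\in\R^n$ with $T:=\|x\|>0$ (the case $x=0$ is trivial since $\beta_1\ge f(0)$), let $\hat x:=x/T$, and define $\psi(t):=f(t\hat x)$ on $\R$. Since the restriction of an $s$-concave function to a line is $s$-concave, $\psi$ is $s$-concave; equivalently, for $s<0$, the function $\phi:=\psi^s$ is nonnegative and convex on $\R$. The target bound $f(x)\le\beta_1(1-s\beta_2 T)^{1/s}$ is equivalent to $\phi(T)\ge\beta_1^s(1-s\beta_2 T)$.

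I collect three 1D ingredients. Part (d) of Theorem \ref{theorem: upper and lower bounds} gives $f(0)\ge f_0^{\min}$, so $\phi(0)=f(0)^s\le a(n,s)$, where $a(n,s)$ is exactly $(f_0^{\min})^s$ (the inequality flips since $s<0$). Part (e) gives the uniform upper bound $\psi\le M$, equivalent to $\phi\ge M^s=(1-s)\beta_1^s$, using that $\beta_1=M(1-s)^{-1/s}$ implies $(1-s)\beta_1^s=M^s$. Part (f) gives $\int_\R\psi(t)\,dt\le L$. Set $T^*:=1/\beta_2$.

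For $T\in[0,T^*]$, the uniform bound on $\phi$ is already enough: $\beta_2 T\le 1$ gives $1-s\beta_2 T\le 1-s$, hence $\beta_1^s(1-s\beta_2 T)\le(1-s)\beta_1^s=M^s\le\phi(T)$. For the tail range $T\ge T^*$, I would first upgrade the lower bound at $T^*$ to $\phi(T^*)\ge a+(1-s)\beta_1^s$. Convexity of $\phi$ on $[0,T^*]$ gives $\phi(t)\le(1-t/T^*)\phi(0)+(t/T^*)\phi(T^*)$, and since $w\mapsto w^{1/s}$ is decreasing for $s<0$, this yields $\psi(t)\ge[(1-t/T^*)\phi(0)+(t/T^*)\phi(T^*)]^{1/s}$. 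Integrating and invoking $\int_0^{T^*}\psi\le L$ produces
\[
L/T^*\ge\int_0^1\bigl[(1-u)\phi(0)+u\phi(T^*)\bigr]^{1/s}\,du.
\]
Because the integrand is decreasing in $\phi(0)$, replacing $\phi(0)$ by its upper bound $a$ only decreases it, so $L/T^*\ge\int_0^1[(1-u)a+u\phi(T^*)]^{1/s}\,du$. A direct substitution $w=a+u(1-s)\beta_1^s$ recognizes $L\beta_2=\int_0^1(a+u(1-s)\beta_1^s)^{1/s}\,du$ as exactly the closed form of $\beta_2$ given in the statement; then, by strict monotonicity of the integrand in the second endpoint, the inequality forces $\phi(T^*)\ge a+(1-s)\beta_1^s$. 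Now convexity extends the secant from $(0,\phi(0))$ to $(T^*,\phi(T^*))$ as a lower bound for $T\ge T^*$, giving $\phi(T)\ge\phi(0)+(\phi(T^*)-\phi(0))\beta_2 T\ge\phi(0)+(1-s)\beta_1^s\beta_2 T$ (using $\phi(0)\le a$). Combining with $\phi(0)\ge M^s\ge\beta_1^s$ and the elementary inequality $1-s\ge -s$ yields $\phi(T)\ge\beta_1^s+(1-s)\beta_1^s\beta_2 T\ge\beta_1^s(1-s\beta_2 T)$, completing the tail range.

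The main obstacle is the direction-sensitive algebra in the tail argument, where the monotonicity of $w\mapsto w^{1/s}$ reverses each subsequent inequality when converting the integral bound on $\int_0^{T^*}\psi$ into the pointwise bound $\phi(T^*)\ge a+(1-s)\beta_1^s$. Once the identity $L\beta_2=\int_0^1(a+u(1-s)\beta_1^s)^{1/s}\,du$ is matched against the lemma's closed form for $\beta_2$ via the substitution noted above, the rest is the routine secant-extension argument from convexity. The factor $(1-s)^{-1/s}$ in $\beta_1$ emerges from this matching: it is exactly the slack needed so that the uniform-bound ``flat region'' $[0,T^*]$ connects smoothly with the ``decay region'' $[T^*,\infty)$ via convexity of $\phi$.
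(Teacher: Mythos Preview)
Your argument is correct and follows essentially the same approach as the paper. Both split into the near range $\|x\|\le 1/\beta_2$ (handled by the uniform bound from part~(e)) and the far range, and both hinge on the same integral identity $L\beta_2=\int_0^1\bigl(a+u(1-s)\beta_1^s\bigr)^{1/s}\,du$ coming from the closed form of $\beta_2$; the only organizational difference is that the paper argues the far range by contradiction (assuming a bad point $v$ with $\|v\|>1/\beta_2$, using $s$-concavity along $[0,v]$ to force $\int_\ell f>L$), whereas you argue directly via convexity of $\phi=\psi^s$, first extracting $\phi(1/\beta_2)\ge a+(1-s)\beta_1^s$ from the integral bound and then extrapolating by the secant inequality for convex functions.
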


We also give an \emph{absolute} bound on the measure of band.
\begin{theorem}[Probability inside Band]
\label{theorem: probability within margin}
Let $\mathcal{D}$ be an isotropic $s$-concave distribution in $\R^n$. Denote by $f_3(s,n)=2(1+ns)/(1+(n+2)s)$. Then for any unit vector $w$,
$
\Pr_{x\sim\mathcal{D}}[|w\cdot x|\le t]\le f_3(s,n)t.
$
Moreover, if $t\le d(s,n)\triangleq\left(\frac{1+2\gamma}{1+\gamma}\right)^{-\frac{1+\gamma}{\gamma}}\frac{1+3\gamma}{3+3\gamma}$ where $\gamma=\frac{s}{1+(n-1)s}$, then
$
\Pr_{x\sim\mathcal{D}}[|w\cdot x|\le t]> f_2(s,n)t,
$
where $f_2(s,n)=2(2-2^{-2\gamma})^{-1/\gamma}(4e\pi)^{-1/2}\left(2\left(\frac{1+\gamma}{1+3\gamma}\sqrt{3}\left(\frac{1+2\gamma}{1+\gamma}\right)^{\frac{3+3\gamma}{2\gamma}}\right)^\gamma-1\right)^{-1/\gamma}$.
\end{theorem}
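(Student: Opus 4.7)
The plan is to reduce to a one-dimensional problem by marginalization and then combine the 1-D density bounds already developed in the section. Without loss of generality align $w$ with the first coordinate axis, so that $\Pr_{x\sim\cD}[|w\cdot x|\le t]=\int_{-t}^{t}g(u)\,du$, where $g$ is the first marginal of $\cD$. By Theorem \ref{theorem: marginal} (together with its isotropy clause), $g$ is an isotropic $\gamma$-concave density on $\R$ with $\gamma=s/(1+(n-1)s)$.

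The upper bound is immediate: Lemma \ref{lemma: bound for 1 dim}(a) applied to $g$ gives $g(u)\le(1+\gamma)/(1+3\gamma)$ pointwise, and integrating over $[-t,t]$ together with the algebraic identity $(1+\gamma)/(1+3\gamma)=(1+ns)/(1+(n+2)s)$ produces $\Pr[|w\cdot x|\le t]\le f_3(s,n)t$.

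The lower bound proceeds in two sub-steps. First, the one-dimensional instance of Theorem \ref{theorem: upper and lower bounds}(a) applied to $g$ yields $g(u)\ge((|u|/d)((2-2^{-2\gamma})^{-1}-1)+1)^{1/\gamma}g(0)$ for $|u|\le d(s,n)$. Since $\gamma<0$ the map $a\mapsto a^{1/\gamma}$ is monotone decreasing while the inner affine factor is increasing in $|u|$, so on $[-t,t]$ with $t\le d(s,n)$ the minimum is attained at $|u|=t$ and gives the uniform bound $g(u)\ge(2-2^{-2\gamma})^{-1/\gamma}g(0)$. Second, the 1-D case of Theorem \ref{theorem: upper and lower bounds}(d) furnishes a strict lower bound on $g(0)$. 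Introducing $\delta=\gamma/(1+\gamma)=s/(1+ns)$ and using the identities $(1+\delta)=(1+2\gamma)/(1+\gamma)$ and $3/\delta=(3+3\gamma)/\gamma$, that bound rewrites as $g(0)>(4e\pi)^{-1/2}[2(\frac{1+\gamma}{1+3\gamma}\sqrt{3}(\frac{1+2\gamma}{1+\gamma})^{(3+3\gamma)/(2\gamma)})^{\gamma}-1]^{-1/\gamma}$, which is exactly the remaining factor of $f_2(s,n)$. Combining with the first sub-step and integrating produces $\Pr[|w\cdot x|\le t]\ge 2t(2-2^{-2\gamma})^{-1/\gamma}g(0)>f_2(s,n)t$.

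All analytic content is drawn from results already established earlier in the section; the main obstacle I expect is algebraic bookkeeping of the chain of exponents $s\to\gamma\to\delta$ induced by the successive marginalizations (once in reducing to $g$, and once implicitly inside Theorem \ref{theorem: upper and lower bounds}(d) to bound the centered density) and verifying that the resulting constants simplify precisely to the forms stated for $d(s,n)$ and $f_2(s,n)$. A secondary point is that the passage from the pointwise bound of Theorem \ref{theorem: upper and lower bounds}(a) to the uniform constant $(2-2^{-2\gamma})^{-1/\gamma}g(0)$ on $[-t,t]$ essentially uses the hypothesis $t\le d(s,n)$, which is exactly the range of $t$ assumed in the statement.
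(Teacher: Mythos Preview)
Your proposal is correct and matches the paper's proof essentially line for line: reduce to the one-dimensional marginal $g$ via Theorem~\ref{theorem: marginal}, apply Lemma~\ref{lemma: bound for 1 dim}(a) for the upper bound, and combine Theorem~\ref{theorem: upper and lower bounds}(a) with the lower half of Theorem~\ref{theorem: upper and lower bounds}(d) for the lower bound. One minor wording point: on $[-t,t]$ the minimum of the lower-bound function from Theorem~\ref{theorem: upper and lower bounds}(a) is $F(t)g(0)$, and you then further relax this to $F(d)g(0)=(2-2^{-2\gamma})^{-1/\gamma}g(0)$ using $t\le d$; the paper does exactly this relaxation too, so just make that second inequality explicit.
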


To analyze the problem of learning linear separators, we are interested in studying the disagreement between the hypothesis of the output and the hypothesis of the target. The following theorem captures such a characteristic under $s$-concave distributions.

\begin{theorem}[Probability of Disagreement]
\label{theorem: disagreement and angle}
Assume $\mathcal{D}$ is an isotropic $s$-concave distribution in $\R^n$. Then for any two unit vectors $u$ and $v$ in $\R^n$, we have $d_\mathcal{D}(u,v)=\Pr_{x\sim\mathcal{D}}[\sign(u\cdot x)\not=\sign(v\cdot x)]\ge f_1(s,n)\theta(u,v)$, where $f_1(s,n)=c\red{(2-2^{-3\alpha})^{-\frac{1}{\alpha}}\hspace{-0.1cm}\left[\left(\frac{1+\beta}{1+3\beta}\sqrt{3(1+\gamma)^{3/\gamma}}2^{1+1/\alpha}\right)^{\alpha}\hspace{-0.2cm}-1\right]^{-\frac{1}{\alpha}}}(1+\gamma)^{-2/\gamma}\left(\frac{1+3\beta}{3+3\beta}\right)^2$, $c$ is an absolute constant, $\alpha=\frac{s}{1+(n-2)s}$, $\beta=\frac{s}{1+(n-1)s}$, $\gamma=\frac{s}{1+ns}$.
\end{theorem}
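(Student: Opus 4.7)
The plan is to reduce the $n$-dimensional problem to a 2-dimensional one by marginalization onto $V=\mathrm{span}(u,v)$, and then lower-bound the disagreement probability in $V$ by integrating a uniform density lower bound over a small disk. Both the event $\{\sign(u\cdot x)\ne\sign(v\cdot x)\}$ and the density depend only on the projection onto $V$, so after rotating so that $V=\mathrm{span}(e_1,e_2)$ one may replace $\mathcal{D}$ by its 2-dimensional marginal $g$. By Theorem~\ref{theorem: marginal}, $g$ is isotropic and $\alpha$-concave on $\R^2$ with $\alpha=s/(1+(n-2)s)$.

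In the plane, the event is the union of two opposite circular wedges, each of interior angle $\theta(u,v)$; its intersection with any disk $B(0,R)$ therefore has Lebesgue area exactly $R^2\theta(u,v)$. Hence
\[
d_{\mathcal D}(u,v)\;\ge\;R^2\,\theta(u,v)\inf_{\|y\|\le R}g(y).
\]
I would take $R=d(\alpha,2)$, the radius supplied by Theorem~\ref{theorem: upper and lower bounds}(a). The auxiliary parameters of that theorem for ambient dimension $2$ and concavity $\alpha$ simplify to $\alpha/(1+\alpha)=s/(1+(n-1)s)=\beta$ and $\beta/(1+\beta)=s/(1+ns)=\gamma$, so $R=d(\alpha,2)=(1+\gamma)^{-1/\gamma}(1+3\beta)/(3+3\beta)$. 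Part~(a) of that theorem then yields $\inf_{\|y\|\le R}g(y)\ge(2-2^{-3\alpha})^{-1/\alpha}g(0)$, and the lower bound in part~(d) applied to $g$ gives
\[
g(0)\;\ge\;(4e\pi)^{-1}\left[\left(\tfrac{1+\beta}{1+3\beta}\sqrt{3(1+\gamma)^{3/\gamma}}\,2^{1+1/\alpha}\right)^{\alpha}-1\right]^{-1/\alpha}.
\]
Multiplying the three contributions (ball-radius-squared, uniform density lower bound in the ball, and $\theta(u,v)$) and absorbing the numerical factor $(4e\pi)^{-1}$ into an absolute constant $c$ reproduces $f_1(s,n)\,\theta(u,v)$ in exactly the claimed form.

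The main difficulty is purely bookkeeping: one must verify that the parameter shift $(s,n)\mapsto(\alpha,2)$ induced by marginalizing out $n-2$ coordinates feeds the correct nested quantities $\alpha,\beta,\gamma$ into the density bounds of Theorem~\ref{theorem: upper and lower bounds}, so that the three bounds combine to the stated expression for $f_1(s,n)$. No new analytic step is needed beyond the marginalization machinery and the previously established density bounds, and one can sanity-check the argument by letting $s\to 0$: then $\alpha,\beta,\gamma\to 0$ and all three bracketed factors tend to absolute constants, recovering the classical log-concave result $d_{\mathcal D}(u,v)\gtrsim\theta(u,v)$ of~\cite{balcan2013active}.
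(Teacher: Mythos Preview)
Your proposal is correct and follows essentially the same approach as the paper: marginalize to the 2-dimensional span of $u$ and $v$ via Theorem~\ref{theorem: marginal}, intersect the disagreement wedge with the ball of radius $d$, and combine the area $d^2\theta$ with the uniform density lower bound from Theorem~\ref{theorem: upper and lower bounds}~(a) and~(d). Your parameter bookkeeping $(s,n)\mapsto(\alpha,2)$ yielding $\beta$ and $\gamma$ as the auxiliary constants is exactly what the paper (tersely) uses, and your derivation supplies the details the paper leaves implicit.
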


Due to space constraints, all missing proofs are deferred to the supplementary material.

\section{Applications: Provable Algorithms under S-Concave Distributions}

In this section, we show that many algorithms that work under log-concave distributions behave well under $s$-concave distributions by applying the above-mentioned geometric properties. For simplicity, we will frequently use the notations in Theorem \ref{theorem: geometry of s-concavity}.

\subsection{Margin Based Active Learning}

We first investigate margin-based active learning under isotropic $s$-concave distributions in both \emph{realizable} and \emph{adversarial noise} models. The algorithm (see Algorithm \ref{algorithm: margin based active learning adversarial noise}) follows a localization technique: It proceeds in rounds, aiming to cut the error down by half in each round in the margin~\cite{balcan2007margin}.

\begin{algorithm}[h]
\footnotesize
\caption{Margin Based Active Learning under S-Concave Distributions}
\begin{algorithmic}
\label{algorithm: margin based active learning adversarial noise}
\STATE {\bfseries Input:} Parameters $b_k$, $\tau_k$, $r_k$, $m_k$, $\kappa$, and $T$ as in Theorem \ref{theorem: margin-based active learning adversarial noise}.
\STATE {\bfseries 1:} Draw $m_1$ examples from $\cD$, label them and put them into $W$.
\STATE {\bfseries 2:} \ \ \textbf{For} $k=1,2,...,T$
\STATE {\bfseries 3:} \quad Find $v_k\in \ball(w_{k-1},r_k)$ to approximately minimize the hinge loss over $W$ s.t. $\|v_k\|\le 1$:\\
\qquad\ $\ell_{\tau_k}\le \min_{w\in \ball(w_{k-1},r_k)\cap \ball(0,1)} \ell_{\tau_k}(w,W)+\kappa/8$.
\STATE {\bfseries 4:} \quad Normalize $v_k$, yielding $w_k=\frac{v_k}{\|v_k\|}$; Clear the working set $W$.
\STATE {\bfseries 5:} \quad \textbf{While} $m_{k+1}$ additional data points are not labeled
\STATE {\bfseries 6:} \quad \quad Draw sample $x$ from $\cD$.
\STATE {\bfseries 7:} \quad \quad \textbf{If} $|w_k\cdot x|\ge b_k$, reject $x$; \textbf{else} ask for label of $x$ and put into $W$.
\STATE {\bfseries Output:} Hypothesis $w_T$.
\end{algorithmic}
\end{algorithm}

\subsubsection{Relevant Properties of S-Concave Distributions}

The analysis requires more refined geometric properties as below. Theorem \ref{theorem: disagreement outside band} basically claims that the error mostly concentrates in a band, and Theorem \ref{theorem: 1-D variance} guarantees that the variance in any 1-D direction cannot be too large. We defer the detailed proofs to the supplementary material.


\begin{theorem}[Disagreement outside Band]
\label{theorem: disagreement outside band}
Let $u$ and $v$ be two vectors in $\R^n$ and assume that $\theta(u,v)=\theta<\pi/2$. Let $\mathcal{D}$ be an isotropic $s$-concave distribution. Then for any absolute constant $c_1>0$ and any function $f_1(s,n)>0$, there exists a function $f_4(s,n)>0$ such that
$
\Pr_{x\sim\mathcal{D}}[\mathrm{sign}(u\cdot x)\not=\mathrm{sign}(v\cdot x)\ \text{and}\ |v\cdot x|\ge f_4(s,n)\theta]\le c_1f_1(s,n)\theta,
$
where $f_4(s,n)=\frac{4\beta_1(2,\alpha)B(-1/\alpha-3,3)}{-c_1f_1(s,n)\alpha^3\beta_2(2,\alpha)^3}$, $B(\cdot,\cdot)$ is the beta function, $\alpha=s/(1+(n-2)s)$, $\beta_1(2,\alpha)$ and $\beta_2(2,\alpha)$ are given by Lemma \ref{lemma: more refined upper bound}.
\end{theorem}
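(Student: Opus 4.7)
The plan is to reduce the event to a two-dimensional geometric problem and then estimate a wedge integral explicitly using the pointwise density bound. Since the event depends on $x$ only through its projection onto the plane $\mathrm{span}(u,v)$, Theorem \ref{theorem: marginal} lets me replace $\mathcal{D}$ with its two-dimensional marginal $g$, which is an isotropic $\alpha$-concave density on $\R^2$ with $\alpha=s/(1+(n-2)s)$. I would then set orthonormal coordinates so that $v=(1,0)$ and $u=(\cos\theta,\sin\theta)$; the disagreement region $\{\sign(u\cdot x)\neq \sign(v\cdot x)\}$ decomposes into two opposite planar wedges of angular width $\theta$, and by symmetry of the centered distribution it suffices to bound the probability $P_W$ of one such wedge intersected with $\{|v\cdot x|\geq T\}$ (with $T:=f_4(s,n)\theta$) and then double.

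Passing to polar coordinates $(r,\psi)$, aligned so the chosen wedge is $\{\psi\in(0,\theta)\}$, one has $|v\cdot x|=r\sin\psi$ on the wedge, so
\[
P_W=\int_0^\theta\!\!\int_{T/\sin\psi}^\infty g(r,\psi)\,r\,dr\,d\psi.
\]
Next I would invoke the pointwise upper bound from Lemma \ref{lemma: more refined upper bound}, $g(x)\leq \beta_1(2,\alpha)\bigl(1+|\alpha|\beta_2(2,\alpha)\,r\bigr)^{1/\alpha}$, which depends only on $r$, together with the Markov-type inequality $\mathbf{1}_{r\geq T/\sin\psi}\leq r\sin\psi/T$ to decouple the angular and radial integrals:
\[
P_W\leq \frac{\beta_1(2,\alpha)}{T}\Bigl(\int_0^\theta \sin\psi\,d\psi\Bigr)\Bigl(\int_0^\infty r^2\bigl(1+|\alpha|\beta_2(2,\alpha)\,r\bigr)^{1/\alpha}dr\Bigr).
\]

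The angular factor equals $1-\cos\theta\leq \theta^2/2$, and the radial factor, after the substitution $u=|\alpha|\beta_2(2,\alpha)\,r$, is the beta integral $B(3,-1/\alpha-3)/(|\alpha|\beta_2(2,\alpha))^3$; it converges precisely when $-1/\alpha-3>0$, i.e.\ $\alpha>-1/3$, which is consistent with the standing working range $s\in[-1/(2n+3),0]$. Using the symmetry $B(-1/\alpha-3,3)=B(3,-1/\alpha-3)$ and $|\alpha|^3=-\alpha^3$, doubling for the two wedges, and demanding the resulting bound be at most $c_1f_1(s,n)\theta$, I solve for $T$ to obtain $f_4(s,n)$ proportional to $\beta_1(2,\alpha)\,B(-1/\alpha-3,3)/[-c_1f_1(s,n)\,\alpha^3\beta_2(2,\alpha)^3]$, which is exactly the claimed form.

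The principal obstacle is constant-tracking: direct use of the sharp bounds above yields the stated form but with a slightly smaller numerical coefficient than the $4$ appearing in the theorem, so recovering the exact $4$ requires absorbing the slack in a uniform way (for instance via $1-\cos\theta\leq \theta^2$ in place of $\theta^2/2$, or by combining the two symmetric wedges more loosely). A secondary bookkeeping point is ensuring that Lemma \ref{lemma: more refined upper bound} is applied at dimension two with parameter $\alpha$ after marginalization, and that the range on which the radial beta integral converges is compatible with the standing assumption on $s$. No deep new ingredient is needed beyond the marginalization reduction, the refined density upper bound, a single Markov step, and the explicit beta-function evaluation.
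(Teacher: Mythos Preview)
Your argument is correct and arrives at the same beta-integral, but the route differs from the paper's. The paper first proves the geometric claim that every point of the event $E$ satisfies $\|x\|\ge f_4$, then decomposes $E$ into dyadic annuli $\ball((i{+}1)f_4)\setminus\ball(if_4)$, bounds the density on each shell by Lemma~\ref{lemma: more refined upper bound}, estimates the area of $E$ inside each shell by the full wedge area $\theta(i{+}1)^2f_4^2$, and finally replaces the resulting sum $\sum_{i\ge1} i^2(1-\alpha\beta_2 i f_4)^{1/\alpha}$ by the corresponding integral to get the beta function. Your version bypasses both the norm lower bound and the annulus machinery: you work directly in polar coordinates and use the one-line Markov inequality $\mathbf{1}_{r\ge T/\sin\psi}\le r\sin\psi/T$ to decouple the angular and radial parts. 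This is cleaner and in fact yields a smaller constant than the paper's $4$ (which comes from the crude step $(i{+}1)^2\le 4i^2$ in the annulus bound); since the event is monotone decreasing in $f_4$, the stated value with the factor $4$ is then automatically valid, so your worry about matching the exact constant is unnecessary. One small point: where you invoke ``symmetry of the centered distribution'' to double the single-wedge bound, note that isotropy does not imply central symmetry; what actually justifies the doubling is that the radial density upper bound from Lemma~\ref{lemma: more refined upper bound} applies identically on both opposite wedges.
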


\begin{theorem}[1-D Variance]
\label{theorem: 1-D variance}
Assume that $\cD$ is isotropic $s$-concave. For $d$ given by Theorem \ref{theorem: upper and lower bounds} (a), there is an absolute $C_0$ such that for all $0<t\le d$ and for all $a$ such that $\|u-a\|\le r$ and $\|a\|\le 1$,
$
\E_{x\sim \cD_{u,t}}[(a\cdot x)^2]\le f_5(s,n)(r^2+t^2),
$
where $f_5(s,n)=16+C_0\frac{8\beta_1(2,\eta)B(-1/\eta-3,2)}{f_2(s,n)\beta_2(2,\eta)^3(\eta+1)\eta^2}$, $(\beta_1(2,\eta), \beta_2(2,\eta))$ and $f_2(s,n)$ are given by Lemma \ref{lemma: more refined upper bound} and Theorem \ref{theorem: probability within margin}, and $\eta=\frac{s}{1+(n-2)s}$.
\end{theorem}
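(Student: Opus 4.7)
The plan is to split $a$ into its components parallel and perpendicular to $u$, reduce the perpendicular contribution to a $2$-dimensional problem via Theorem~\ref{theorem: marginal}, and then apply the pointwise density bound of Lemma~\ref{lemma: more refined upper bound} together with the band lower bound of Theorem~\ref{theorem: probability within margin}.

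First I would write $a=(a\cdot u)u+v$ with $v\perp u$. Since $a-u=(a\cdot u-1)u+v$, we have $\|v\|\le\|a-u\|\le r$, and $\|a\|\le 1$ gives $(a\cdot u)^2\le 1$. Using $(p+q)^2\le 2p^2+2q^2$,
\begin{equation*}
(a\cdot x)^2 \;\le\; 2(a\cdot u)^2(u\cdot x)^2 + 2\|v\|^2(\hat v\cdot x)^2,\qquad \hat v:=v/\|v\|.
\end{equation*}
On $\{|u\cdot x|\le t\}$ the first term is at most $2t^2$, so $\E_{\cD_{u,t}}[(a\cdot x)^2]\le 2t^2+2r^2 C$, where $C:=\E_{\cD_{u,t}}[(\hat v\cdot x)^2]$ depends only on the joint law of $u\cdot x$ and $\hat v\cdot x$.

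Next I would reduce $C$ to a $2$-D estimate. Let $g$ be the marginal of $\cD$ on $\mathrm{span}(u,\hat v)$; by Theorem~\ref{theorem: marginal} applied with $m=n-2$ coordinates integrated out, $g$ is an isotropic $\eta$-concave density on $\R^2$ with $\eta=s/(1+(n-2)s)$. Writing $y_1=u\cdot x$ and $y_2=\hat v\cdot x$,
\begin{equation*}
C \;=\; \frac{\int_{-t}^{t}\!\int_{\R}y_2^2\,g(y_1,y_2)\,dy_2\,dy_1}{\int_{-t}^{t}\!\int_{\R}g(y_1,y_2)\,dy_2\,dy_1}.
\end{equation*}
Theorem~\ref{theorem: probability within margin} lower-bounds the denominator by $f_2(s,n)\,t$. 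For the numerator, Lemma~\ref{lemma: more refined upper bound} in dimension~$2$ yields $g(y_1,y_2)\le\beta_1(2,\eta)(1-\eta\beta_2(2,\eta)\sqrt{y_1^2+y_2^2})^{1/\eta}$; since $\eta<0$ this expression is decreasing in $\|(y_1,y_2)\|$, and $\sqrt{y_1^2+y_2^2}\ge|y_2|$ gives the clean upper bound $\beta_1(2,\eta)(1-\eta\beta_2(2,\eta)|y_2|)^{1/\eta}$. Integrating first in $y_1\in[-t,t]$ reduces the numerator to $4t\,\beta_1(2,\eta)\int_0^\infty y_2^2(1-\eta\beta_2(2,\eta)y_2)^{1/\eta}dy_2$. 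The substitution $v=1/(1-\eta\beta_2(2,\eta)y_2)$ identifies the inner integral with $B(-1/\eta-3,3)/(-\eta\beta_2(2,\eta))^3$, and the Beta identity $B(p,3)=\tfrac{2}{p+2}B(p,2)$ at $p=-1/\eta-3$ converts this into $\tfrac{2\,B(-1/\eta-3,2)}{(\eta+1)\eta^2\beta_2(2,\eta)^3}$. Dividing by the denominator yields $C\le \tfrac{8\beta_1(2,\eta)B(-1/\eta-3,2)}{f_2(s,n)(\eta+1)\eta^2\beta_2(2,\eta)^3}$, which is exactly the non-constant summand in $f_5(s,n)$ up to the absolute prefactor $C_0$.

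Finally, substituting into $\E_{\cD_{u,t}}[(a\cdot x)^2]\le 2t^2+2r^2C$, the leftover $2t^2$ (and any $r^2t^2$ cross term arising from looser intermediate splits) is absorbed into $16(r^2+t^2)$ using $t\le d(s,n)\le 1$, while $2r^2C$ supplies the second summand of $f_5(s,n)$. The main technical obstacle is the integrability of $y_2^2(1-\eta\beta_2(2,\eta)y_2)^{1/\eta}$ at infinity: this tail second moment converges only when $-1/\eta>3$, i.e.\ $\eta>-1/3$. One must check that the standing range $-1/(2n+3)\le s\le 0$ forces $\eta\ge -1/(n+5)>-1/3$ for every $n\ge 2$, which is precisely why the paper's working hypothesis on $s$ is sharp enough here. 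A secondary bookkeeping concern is that Lemma~\ref{lemma: more refined upper bound} must be applied with the marginal's own concavity parameter $\eta$ rather than the ambient $s$, so the marginalization step of Theorem~\ref{theorem: marginal} has to be executed before the pointwise density estimate is invoked.
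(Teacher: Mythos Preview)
Your argument is correct and uses the same underlying ingredients as the paper --- marginalization to the $2$-D span via Theorem~\ref{theorem: marginal}, the pointwise density bound of Lemma~\ref{lemma: more refined upper bound} at parameter~$\eta$, and the band lower bound of Theorem~\ref{theorem: probability within margin} --- but the route is genuinely different. The paper does not compute $\E_{\cD_{u,t}}[(a\cdot x)^2]$ directly; instead it first proves an auxiliary tail bound (Lemma~\ref{lemma: probability of outside of margin}) of the form $\Pr_{\cD_{u,t}}\big[|a\cdot x|>K\sqrt{r^2+t^2}\big]\lesssim (1-c\eta\beta_2(2,\eta)K)^{(\eta+1)/\eta}$, obtained by the same $a=(a\cdot u)u+v$ decomposition and the same $2$-D density estimate, and then recovers the variance via the layer-cake identity $\E[(a\cdot x)^2]=\int_0^\infty\Pr[(a\cdot x)^2\ge z]\,dz$, splitting the integral at $16(r^2+t^2)$. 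The $16$ in $f_5$ is simply the contribution of that cutoff, and the Beta factor $B(-1/\eta-3,2)$ appears when the tail is integrated in $K$. Your approach short-circuits this by integrating $y_2^2$ against the density bound directly; it is cleaner and yields the stated $f_5$ with $C_0=2$. The paper's detour has the side benefit that the tail lemma is of independent use (e.g.\ for truncation arguments), whereas your single-shot computation is tighter and avoids the extra lemma. One cosmetic point: there is no $r^2t^2$ cross term in your derivation, and $2t^2\le 16(r^2+t^2)$ holds without invoking $t\le d(s,n)$, so that sentence can be simplified.
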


\subsubsection{Realizable Case}

We show that margin-based active learning works under $s$-concave distributions in the realizable case.
\begin{theorem}
\label{theorem: margin-based active learning realizable case}
In the realizable case, let $\cD$ be an isotropic $s$-concave distribution in $\R^n$. Then for $0<\epsilon<1/4$, $\delta>0$, and absolute constants $c$, there is an algorithm (see the supplementary material) that runs in $T=\lceil\log \frac{1}{c\epsilon}\rceil$ iterations, requires $m_k\hspace{-0.1cm}=\hspace{-0.1cm}O\hspace{-0.1cm}\left(\frac{f_3\min\{2^{-k}f_4f_1^{-1},d\}}{2^{-k}}\left(n\log \frac{f_3\min\{2^{-k}f_4f_1^{-1},d\}}{2^{-k}}\hspace{-0.1cm}+\hspace{-0.1cm}\log \frac{1+s-k}{\delta}\right)\hspace{-0.1cm}\right)$ labels in the $k$-th round, and outputs a linear separator of error at most $\epsilon$ with probability at least $1-\delta$. In particular, when $s\rightarrow 0$ (a.k.a. log-concave), we have $m_k=O\left(n+\log(\frac{1+s-k}{\delta})\right)$.
\end{theorem}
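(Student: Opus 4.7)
The plan is to follow the standard localization template of margin-based active learning, adapting the log-concave analysis to the $s$-concave geometry through the structural toolkit developed earlier in the paper. Specifically, I would maintain the invariant $\theta_k := \theta(w_k, w^*) \le 2^{-k}$ and prove by induction on $k$ that after round $k$ this invariant holds with probability at least $1 - \delta/T$, where $T = \lceil\log(1/(c\epsilon))\rceil$. The base case follows by drawing $O(n + \log(1/\delta))$ unconditioned labeled examples and fitting any halfspace consistent with them, which by VC theory yields constant error and hence $\theta_0 \le O(1)$.

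For the inductive step, suppose the invariant holds at round $k-1$. Set $r_k = \theta_{k-1}$ and choose the band width $b_k = \min\{c' \cdot 2^{-k} f_4(s,n)/f_1(s,n),\ d(s,n)\}$, where $c'$ is a small absolute constant. The key decomposition is: for any candidate $w \in \mathrm{ball}(w_{k-1}, r_k)$ on the unit sphere,
\[
\Pr_{x \sim \cD}[\mathrm{sign}(w \cdot x) \neq \mathrm{sign}(w^* \cdot x)]
\le \underbrace{\Pr[\mathrm{disagree}\ \text{and}\ |w_{k-1} \cdot x| \le b_k]}_{\text{in-band}} + \underbrace{\Pr[\mathrm{disagree}\ \text{and}\ |w_{k-1} \cdot x| > b_k]}_{\text{out-of-band}}.
\]
Theorem \ref{theorem: disagreement outside band} (applied to both $(w, w_{k-1})$ and $(w^*, w_{k-1})$ by triangle inequality, since both lie in $\mathrm{ball}(w_{k-1}, r_k)$) bounds the out-of-band term by a small constant fraction of $f_1(s,n)\, 2^{-k}$, which by Theorem \ref{theorem: disagreement and angle} is at most a small constant fraction of the target error $\epsilon_k := c'' 2^{-(k+1)}$.

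For the in-band term, the realizable assumption implies $w^*$ is consistent with all labels, so the hinge-loss minimizer $v_k$ over $W$ at margin $\tau_k \asymp b_k$ has small empirical error; by uniform convergence on the conditional distribution $\cD_{w_{k-1}, b_k}$ (which has VC dimension $n$ for halfspaces), with $m_k = O\bigl(\Pr[|w_{k-1} \cdot x| \le b_k]/\epsilon_k \cdot (n \log(\cdot) + \log(T/\delta))\bigr)$ labels it generalizes to in-band error $O(\epsilon_k)$. Applying Theorem \ref{theorem: probability within margin} to bound $\Pr[|w_{k-1} \cdot x| \le b_k] \le f_3(s,n)\, b_k$ yields exactly the stated $m_k = O\bigl(f_3 b_k / 2^{-k} \cdot (n\log(\cdot) + \log(\cdot/\delta))\bigr)$. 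Combining the in-band and out-of-band bounds gives $d_\cD(w_k, w^*) \le f_1(s,n) \cdot 2^{-(k+1)}$, so by Theorem \ref{theorem: disagreement and angle} the angle $\theta_k \le 2^{-(k+1)}$, closing the induction; after $T$ rounds the error drops below $\epsilon$.

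The main technical obstacle is the calibration of $b_k$: it must be large enough that the in-band probability mass is $\Theta(2^{-k})$ so that the in-band sample complexity remains comparable to the log-concave case, yet small enough that Theorem \ref{theorem: disagreement outside band} makes the out-of-band error negligible. Since both $f_1, f_3, f_4$ depend on $s$ and $n$ in a non-trivial way (and the band-probability upper bound of Theorem \ref{theorem: probability within margin} only holds for $t \le d(s,n)$), the "min" with $d(s,n)$ in the formula for $m_k$ is unavoidable once $2^{-k} f_4/f_1$ exceeds $d(s,n)$; in that regime the analysis degrades gracefully, and this is what the $\min\{2^{-k} f_4 f_1^{-1}, d\}$ factor records. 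The union bound over the $T$ rounds contributes the $\log(T/\delta)$ factor, and in the log-concave limit $s \to 0$ all $f_i$ become absolute constants, so $m_k$ collapses to the known $O(n + \log(k/\delta))$ bound.
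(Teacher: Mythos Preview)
Your proposal is correct and follows essentially the same approach as the paper: induction on $k$ with the in-band/out-of-band decomposition, invoking Theorems \ref{theorem: disagreement and angle}, \ref{theorem: probability within margin}, and \ref{theorem: disagreement outside band} in exactly the roles you describe, and reading off $m_k$ from the VC bound applied to the conditional distribution in the band. Two cosmetic remarks: the paper's realizable-case algorithm simply finds a hypothesis \emph{consistent} with the labeled band examples (your mention of a hinge-loss minimizer $v_k$ at margin $\tau_k$ is imported from the adversarial-noise algorithm and is unnecessary here), and in Theorem \ref{theorem: probability within margin} it is the \emph{lower} bound on band probability that requires $t\le d(s,n)$, not the upper bound you actually invoke---neither point affects the argument.
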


By Theorem \ref{theorem: margin-based active learning realizable case}, we see that the algorithm of margin-based active learning under $s$-concave distributions works almost as well as the log-concave distributions in the resizable case, improving exponentially w.r.t. the variable $1/\epsilon$ over passive learning algorithms.

\subsubsection{Efficient Learning with Adversarial Noise}

In the adversarial noise model, an adversary can choose any distribution $\widetilde\cP$ over $\R^n\times \{+1,-1\}$ such that the marginal $\cD$ over $\R^n$ is $s$-concave but an $\eta$ fraction of labels can be flipped adversarially. The analysis builds upon an induction technique where in
each round we do hinge loss minimization in the band and cut down the 0/1 loss by half. The algorithm was previously analyzed in \cite{awasthi2014power,awasthi2017power} for the special class of log-concave distributions. In this paper, we analyze it for the much more general class of $s$-concave distributions.

\begin{theorem}
\label{theorem: margin-based active learning adversarial noise}
Let $\cD$ be an isotropic $s$-concave distribution in $\R^n$ over $x$ and the label $y$ obey the adversarial noise model. If the rate $\eta$ of adversarial noise satisfies $\eta<c_0\epsilon$ for some absolute constant $c_0$, then for $0<\epsilon<1/4$, $\delta>0$, and an absolute constant $c$, Algorithm \ref{algorithm: margin based active learning adversarial noise} runs in $T=\lceil\log \frac{1}{c\epsilon}\rceil$ iterations, outputs a linear separator $w_T$ such that $\Pr_{x\sim \cD}[\sign(w_T\cdot x)\not=\sign(w^*\cdot x)]\le\epsilon$ with probability at least $1-\delta$. The label complexity in the $k$-th round is $m_k=O\left( \frac{[b_{k-1}s+\tau_k(1+ns)[1-(\delta/(\sqrt{n}(k+k^2)))^{s/(1+ns)}]+\tau_ks]^2}{\kappa^2\tau_k^2s^2}n\left(n+\log \frac{k+k^2}{\delta}\right)\right)$, where $\kappa=\max\left\{\frac{f_3\tau_k}{f_2\min\{b_{k-1},d\}},\frac{b_{k-1}\sqrt{f_5}}{\tau_k\sqrt{f_2}}\right\}$, $\tau_k=\Theta\left(f_1^{-2}f_2^{-1/2}f_3f_4^2f_5^{1/2}2^{-(k-1)}\right)$, $b_k=\min\{\Theta(2^{-k}f_4f_1^{-1}),d\}$. In particular, if $s\rightarrow 0$, $m_k=O\left(n\log(\frac{n}{\epsilon\delta})(n+\log(\frac{k}{\delta}))\right)$.
\end{theorem}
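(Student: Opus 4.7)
The plan is to mirror the localization-based analysis of \cite{awasthi2014power,awasthi2017power} for log-concave distributions, but plug in the $s$-concave geometric toolkit (Theorems \ref{theorem: probability within margin}, \ref{theorem: disagreement and angle}, \ref{theorem: disagreement outside band}, \ref{theorem: 1-D variance}) wherever the log-concave bounds were used. I will argue by induction on $k$ that after round $k$ the hypothesis $w_k$ satisfies $\theta(w_k,w^*)\le \alpha_k$ with $\alpha_k=\Theta(2^{-k})$, which together with Theorem \ref{theorem: disagreement and angle} gives $\Pr_{x\sim\cD}[\mathrm{sign}(w_k\cdot x)\ne\mathrm{sign}(w^*\cdot x)]\le f_1^{-1}\alpha_k$, so $T=\lceil\log(1/(c\epsilon))\rceil$ rounds suffice. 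The base case $k=0$ follows from a standard VC/sample-complexity bound on $m_1$ under any distribution.

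For the inductive step I condition on the band $S_k=\{x:|w_{k-1}\cdot x|\le b_{k-1}\}$ and write every vector $w$ of interest as $w=w_{k-1}+(w-w_{k-1})$ with $\|w-w_{k-1}\|\le r_k$. First I upper-bound the hinge loss of $w^*$ in $S_k$: on clean points, $w^*$ has margin, so its truncated hinge loss at scale $\tau_k$ is $O(\tau_k/b_{k-1})$ times the band mass, and by Theorem \ref{theorem: probability within margin} the band has mass $\le f_3 b_{k-1}$; the adversarial fraction $\eta\le c_0\epsilon$ contributes an additional $O(\eta\, b_{k-1}/\tau_k)$ term once we rescale by the conditional band mass $\ge f_2 b_{k-1}$ (again from Theorem \ref{theorem: probability within margin}). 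Picking $\tau_k,b_k$ as stated balances these two contributions and keeps the expected hinge loss $\ell_{\tau_k}(w^*,\cD|S_k)$ below $\kappa/4$.

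Next I convert the hinge loss guarantee of the empirical minimizer $v_k$ into an angle guarantee. Since $\ell_{\tau_k}(v_k,W)\le \ell_{\tau_k}(w^*,W)+\kappa/8$, a uniform convergence argument on the class of linear hinge losses over $\ball(w_{k-1},r_k)\cap \ball(0,1)$ yields $\ell_{\tau_k}(v_k,\cD|S_k)\le \kappa/2$ with the claimed $m_k$; the sample size formula is obtained by a Bernstein/local-Rademacher bound whose variance term is exactly $\E_{x\sim\cD_{w_{k-1},b_{k-1}}}[((v_k-w^*)\cdot x)^2]\le f_5(r_k^2+b_{k-1}^2)$ supplied by Theorem \ref{theorem: 1-D variance}, and whose range term uses the band mass bound $f_2 b_{k-1}$ and the upper density bound; this is where the factor $\kappa=\max\{f_3\tau_k/(f_2\min\{b_{k-1},d\}),\ b_{k-1}\sqrt{f_5}/(\tau_k\sqrt{f_2})\}$ comes from. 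A $0/1$-vs-hinge comparison then gives $\Pr_{\cD|S_k}[\mathrm{sign}(v_k\cdot x)\ne\mathrm{sign}(w^*\cdot x)]\le \kappa$, and Theorem \ref{theorem: disagreement outside band} with the choice $b_{k-1}\ge f_4\alpha_{k-1}$ shows that disagreement outside the band contributes only $c_1 f_1\alpha_{k-1}$, so the total disagreement, divided through by $f_1$ via Theorem \ref{theorem: disagreement and angle}, yields $\theta(w_k,w^*)\le \alpha_{k-1}/2=\alpha_k$, closing the induction.

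The main obstacle is controlling the sample complexity $m_k$ in step two: the class of $s$-concave distributions is fat-tailed, so the hinge losses are not uniformly bounded, and one cannot appeal to the constant-density/constant-variance estimates used in the log-concave proof. This is exactly why Theorems \ref{theorem: probability within margin} and \ref{theorem: 1-D variance} are stated with explicit $f_2,f_3,f_5$ depending on $s,n$; substituting them into a Bernstein bound for truncated hinge losses (truncation uses the tail bound of Theorem \ref{theorem: tail probability} to discard a $\delta/(k+k^2)$-mass tail) and unwinding the algebra produces the stated $m_k$. Finally, a union bound over $k=1,\dots,T$ with failure probability $\delta/(k+k^2)$ per round, together with $\alpha_T\le c\epsilon$ giving error $\le\epsilon$, completes the proof; the log-concave specialization $m_k=O(n\log(n/(\epsilon\delta))(n+\log(k/\delta)))$ follows by letting $s\to 0$ and noting all the $f_i,d$ become absolute constants.
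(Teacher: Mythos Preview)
Your overall strategy (localization, induction over rounds, band decomposition via Theorems \ref{theorem: probability within margin}, \ref{theorem: disagreement and angle}, \ref{theorem: disagreement outside band}) matches the paper's. Two pieces, however, are off.

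First, you invoke Theorem \ref{theorem: disagreement and angle} in the wrong direction. That theorem gives the \emph{lower} bound $d_\cD(u,v)\ge f_1\,\theta(u,v)$, so from $\theta(w_k,w^*)\le\alpha_k$ you cannot conclude $\Pr[\sign(w_k\cdot x)\ne\sign(w^*\cdot x)]\le f_1^{-1}\alpha_k$. The paper inducts on the \emph{error} $\err_\cD(w_k)\le c2^{-k}$ directly and uses Theorem \ref{theorem: disagreement and angle} only to extract an angle bound (the correct direction) as input to Theorem \ref{theorem: disagreement outside band}. Since your induction step does compute the total disagreement before dividing by $f_1$, this is a framing issue rather than a fatal one, but the invariant you should carry is the error, not the angle.

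Second, and more substantively, you have misplaced the role of Theorem \ref{theorem: 1-D variance}. You propose it as the variance term in a Bernstein concentration bound, and you handle the adversarial noise by the crude estimate $O(\eta\, b_{k-1}/\tau_k)$. That estimate is not justified: on the band $\{|w_{k-1}\cdot x|\le b_{k-1}\}$ there is no a priori bound on $|w^*\cdot x|$ (only $|w_{k-1}\cdot x|$ is controlled, and $w^*\ne w_{k-1}$), so the hinge loss on a corrupted example is not $O(b_{k-1}/\tau_k)$. The paper instead controls the clean-vs-noisy hinge gap by Cauchy--Schwarz,
\[
\E_{\widetilde\cP_k}\Bigl[\mathbf{1}_{x\in N}\,\tfrac{|w^*\cdot x|}{\tau_k}\Bigr]\le \tfrac{1}{\tau_k}\sqrt{\Pr[x\in N]}\sqrt{\E_{\cD_{w_{k-1},b_{k-1}}}[(w^*\cdot x)^2]},
\]
and it is precisely here that Theorem \ref{theorem: 1-D variance} is used, bounding the second factor by $f_5(r_k^2+b_{k-1}^2)$ and yielding the $\sqrt{\eta}$-type term $\tfrac{2}{\tau_k}\sqrt{\eta f_5(r_k^2+b_{k-1}^2)/(f_2 b_{k-1})}$. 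The concentration step itself is not Bernstein with a variance term: the paper applies the pseudo-dimension bound (Theorem \ref{theorem: pseudo-dimension argument}) after truncating $\|x\|$ via Theorem \ref{theorem: tail probability}, and the factor $[b_{k-1}s+\tau_k(1+ns)(1-(\delta/(\sqrt n(k+k^2)))^{s/(1+ns)})+\tau_ks]^2/(\kappa\tau_k s)^2$ in the stated $m_k$ is exactly the squared range of the truncated loss, not a variance.
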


By Theorem \ref{theorem: margin-based active learning adversarial noise}, the label complexity of margin-based active learning improves exponentially over that of passive learning w.r.t. $1/\epsilon$ even under fat-tailed $s$-concave distributions and challenging adversarial noise model.

\subsection{Disagreement Based Active Learning}

We apply our results to the analysis of disagreement-based active learning under $s$-concave distributions. The key is estimating the disagreement coefficient, a measure of complexity of active learning problems that can be used to bound the label complexity~\cite{hanneke2007bound}. Recall the definition of the disagreement coefficient w.r.t. classifier $w^*$, precision $\epsilon$, and distribution $\mathcal{D}$ as follows. For any $r>0$, define $\ball_{\mathcal{D}}(w,r)=\{u\in\mathcal{H}:d_{\mathcal{D}}(u,w)\le r\}$ where $d_{\mathcal{D}}(u,w)=\Pr_{x\sim\mathcal{D}}[(u\cdot x)(w\cdot x)<0]$. Define the disagreement region as $\mbox{DIS}(\mathcal{H})=\{x:\exists u,v\in\mathcal{H}\ \mbox{s.t.}\ (u\cdot x)(v\cdot x)<0\}$. Let the Alexander capacity $\mbox{cap}_{w^*,\mathcal{D}}=\frac{\Pr_{\mathcal{D}}(\text{DIS}(\ball_{\mathcal{D}}(w^*,r)))}{r}$. The disagreement coefficient is defined as $\Theta_{w^*,\mathcal{D}}(\epsilon)=\sup_{r\ge\epsilon}[\text{cap}_{w^*,\mathcal{D}}(r)]$. Below, we state our results on the disagreement coefficient under isotropic $s$-concave distributions.

\begin{theorem}[Disagreement Coefficient]
\label{theorem: disagreement coefficient}
Let $\mathcal{D}$ be an isotropic $s$-concave distribution over $\R^n$. For any $w^*$ and $r>0$, the disagreement coefficient is $\Theta_{w^*,\mathcal{D}}(\epsilon)=O\left(\frac{\sqrt{n}(1+ns)^2}{s(1+(n+2)s)f_1(s,n)}(1-\epsilon^{\frac{s}{1+ns}})\right)$. In particular, when $s\rightarrow0$ (a.k.a. log-concave), $\Theta_{w^*,\mathcal{D}}(\epsilon)=O(\sqrt{n}\log(1/\epsilon))$.
\end{theorem}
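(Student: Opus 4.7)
The plan is to relate the disagreement region $\mbox{DIS}(\ball_\mathcal{D}(w^*,r))$ to a thin band around the hyperplane $\{w^*\cdot x=0\}$, estimate that band by combining Theorem \ref{theorem: probability within margin} with the $s$-concave tail bound in Theorem \ref{theorem: tail probability}, and then take the supremum over $r\ge\epsilon$. First, by Theorem \ref{theorem: disagreement and angle}, any $u\in\ball_\mathcal{D}(w^*,r)$ satisfies $\theta(u,w^*)\le r/f_1(s,n)$. A standard wedge argument then shows that whenever $u,v\in\ball_\mathcal{D}(w^*,r)$ disagree on $x$, the point $x$ must lie in $\{z:|w^*\cdot z|\le\|z\|\sin(r/f_1(s,n))\}\subseteq\{z:|w^*\cdot z|\le r\|z\|/f_1(s,n)\}$, because otherwise both $u$ and $v$ would agree with $w^*$ on the sign of $w^*\cdot x$. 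Hence $\mbox{DIS}(\ball_\mathcal{D}(w^*,r))\subseteq\{x:|w^*\cdot x|\le r\|x\|/f_1(s,n)\}$.

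Next, I would split by the norm of $x$: for any $t>0$,
\[
\Pr[\mbox{DIS}(\ball_\mathcal{D}(w^*,r))]\le\Pr\!\left[|w^*\cdot x|\le\frac{r\sqrt{n}\,t}{f_1(s,n)}\right]+\Pr[\|x\|>\sqrt{n}\,t].
\]
By Theorem \ref{theorem: probability within margin}, the first term is at most $f_3(s,n)\cdot r\sqrt{n}\,t/f_1(s,n)$ with $f_3(s,n)=2(1+ns)/(1+(n+2)s)$, and by Theorem \ref{theorem: tail probability} the second is at most $[1-cst/(1+ns)]^{(1+ns)/s}$. Balancing by setting the tail equal to $r$ gives $t=(1+ns)(1-r^{s/(1+ns)})/(cs)$, which is positive since $s<0$ and $1+ns>0$ in our working range $s\ge-1/(2n+3)$. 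Dividing by $r$ and substituting the formula for $f_3(s,n)$ yields
\[
\mbox{cap}_{w^*,\mathcal{D}}(r)\le\frac{f_3(s,n)\sqrt{n}\,t}{f_1(s,n)}+1=O\!\left(\frac{\sqrt{n}(1+ns)^2}{s(1+(n+2)s)f_1(s,n)}\bigl(1-r^{s/(1+ns)}\bigr)\right).
\]
Since $s/(1+ns)<0$, the quantity $(1-r^{s/(1+ns)})/s$ is decreasing in $r$, so the supremum over $r\ge\epsilon$ is attained at $r=\epsilon$, giving the stated bound on $\Theta_{w^*,\mathcal{D}}(\epsilon)$. The log-concave limit follows from the Taylor expansion $(1-\epsilon^{s/(1+ns)})/s\to-\log\epsilon$ as $s\to 0$, together with $(1+ns)^2/(1+(n+2)s)\to 1$ and $f_1(s,n)$ tending to an absolute constant, recovering the classical $O(\sqrt{n}\log(1/\epsilon))$ bound.

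The main obstacle is careful bookkeeping rather than any deep new idea: one must verify the wedge inclusion (using $\sin\alpha\le\alpha$) and check that $s\in[-1/(2n+3),0]$ keeps $1+ns$ and $1+(n+2)s$ strictly positive so that $f_3(s,n)$ is finite and the exponent $(1+ns)/s$ in the tail bound is well-defined; the absolute constant $c$ from Theorem \ref{theorem: tail probability} must also be tracked through the optimization step and absorbed in the big-$O$. Modulo these algebraic details, the theorem is a direct consequence of the three earlier structural results on angle–disagreement equivalence, band probability, and tail decay.
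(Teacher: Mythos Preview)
Your proposal is correct and follows essentially the same route as the paper's proof: convert the disagreement radius $r$ to an angle bound via Theorem~\ref{theorem: disagreement and angle}, observe that any disagreement point must satisfy $|w^*\cdot x|\lesssim (r/f_1)\|x\|$, split into a band event and a large-norm event, bound these with Theorems~\ref{theorem: probability within margin} and~\ref{theorem: tail probability}, and choose the norm threshold so the tail term equals $O(r)$. The paper phrases the first step as $\|w-w^*\|<\theta(w,w^*)\le r/f_1$ rather than via a wedge inclusion with $\sin$, and fixes the threshold directly instead of optimizing over $t$, but these are cosmetic differences leading to the same computation.
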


Our bounds on the disagreement coefficient match the best known results for the much less general case of log-concave distributions~\cite{balcan2013active}; Furthermore, they apply to the $s$-concave case where we allow arbitrary number of discontinuities, a case not captured by \cite{friedman2009active}. The result immediately implies concrete bounds on the label complexity of disagreement-based active learning algorithms, e.g., CAL~\cite{cohn1994improving} and $A^2$~\cite{balcan2009agnostic}. For instance, by composing it with the result from \cite{dasgupta2007general}, we obtain a bound of
$$\widetilde O\left(n^{3/2}\frac{(1+ns)^2}{s(1+(n+2)s)f(s)}(1-\epsilon^{s/(1+ns)})\left(\log^2 \frac{1}{\epsilon}+\frac{OPT^2}{\epsilon^2}\right)\right)$$
for \emph{agnostic} active learning under an isotropic $s$-concave distribution $\cD$. Namely, it suffices to output a halfspace with error at most $OPT+\epsilon$, where $OPT=\min_{w}\err_\cD(w)$.

\subsection{Learning Intersections of Halfspaces}

Baum~\cite{baum1990polynomial} provided a polynomial-time algorithm for learning the intersections of halfspaces w.r.t. symmetric distributions. Later, Klivans~\cite{klivans2009baum} extended the result by showing that the algorithm works under any distribution $\cD$ as long as $\mu_\cD(E)\approx\mu_\cD(-E)$ for any set $E$. In this section, we show that it is possible to learn intersections of halfspaces under the broader class of s-concave distributions.

\begin{theorem}
\label{theorem: Baum's algorithm}
In the PAC realizable case, there is an algorithm (see the supplementary material) that outputs a hypothesis $h$ of error at most $\epsilon$ with probability at least $1-\delta$ under isotropic $s$-concave distributions. The label complexity is $M(\epsilon/2,\delta/4,n^2)+\max\{2m_2/\epsilon,(2/\epsilon^2)\log(4/\delta)\}$, where $M(\epsilon,\delta,m)$ is defined by $M(\epsilon,\delta,n)=O\left(\frac{n}{\epsilon}\log \frac{1}{\epsilon}+\frac{1}{\epsilon}\log\frac{1}{\delta}\right)$, $m_2=M(\max\{\delta/(4eKm_1),\epsilon/2\},\delta/4,n)$, $d=(1+\gamma)^{-1/\gamma}\frac{1+3\beta}{3+3\beta}$, $K=\beta_1(3,\kappa)\frac{B(-1/\kappa-3,3)}{(-\kappa\beta_2(3,\kappa))^3}\frac{3+1/\kappa}{h(\kappa)d^{3+1/\kappa}}$, $\beta=\frac{\kappa}{1+2\kappa}$, $\gamma=\frac{\kappa}{1+\kappa}$, $\kappa=\frac{s}{1+(n-3)s}$, and $h(\kappa)=\left(\frac{1}{d}((2-2^{-4\kappa})^{-1}-1)+1\right)^{\frac{1}{\kappa}}(4e\pi)^{-\frac{3}{2}}\left[\left(\frac{1+\beta}{1+3\beta}\sqrt{3(1\hspace{-0.1cm}+\hspace{-0.1cm}\gamma)^{3/\gamma}}2^{2+\frac{1}{\kappa}}\right)^{\kappa}\hspace{-0.2cm}-\hspace{-0.1cm}1\right]^{-1/\kappa}$. In particular, if $s\rightarrow 0$ (a.k.a. log-concave), $K$ is an absolute constant.
\end{theorem}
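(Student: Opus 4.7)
The plan is to adapt the framework of Klivans et al.~\cite{klivans2009baum}, which showed that Baum's algorithm succeeds under any distribution $\cD$ satisfying an \emph{approximate point-symmetry} condition: there exists a constant $K$ such that $\mu_\cD(E) \le K\,\mu_\cD(-E)$ for every measurable $E$ in the relevant low-dimensional subspaces. Once such a constant $K$ is in hand, the remainder of the argument --- sampling $m_1$ positive points, using their reflections to produce synthetic negative points on which the PAC learner achieves accuracy via $m_2$ fresh labels, and an empirical verification step --- proceeds essentially verbatim as in~\cite{klivans2009baum,baum1990polynomial}. The factor $\delta/(4eKm_1)$ appearing inside the argument of $m_2$ comes precisely from a union bound over the $m_1$ reflected points, each of whose mis-classification probability is inflated multiplicatively by $K$; and the outer $M(\epsilon/2,\delta/4,n^2)$ accounts for the PAC sample complexity of the concept class of intersections of at most $n^2$ halfspaces. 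So the core technical task is to establish $K$ with the exact closed form stated in the theorem.

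First I would reduce to three dimensions: the Klivans--Baum analysis only requires the approximate symmetry condition to hold on the $3$-dimensional marginals of $\cD$ naturally arising from the reflection step, so by Theorem~\ref{theorem: marginal} I may assume throughout that $f$ is an isotropic $\kappa$-concave density on $\R^3$ with $\kappa = s/(1+(n-3)s)$. To upper bound $\mu_\cD(E)$ I would apply Lemma~\ref{lemma: more refined upper bound} in dimension three, giving $f(x) \le \beta_1(3,\kappa)\bigl(1-\kappa\beta_2(3,\kappa)\|x\|\bigr)^{1/\kappa}$, and integrate this radial bound in spherical coordinates; the resulting one-dimensional integral $\int_0^{\infty} r^2 \bigl(1-\kappa\beta_2(3,\kappa)\, r\bigr)^{1/\kappa}\,dr$ can be evaluated in closed form via the substitution $v = 1/(1-\kappa\beta_2 r)$, producing $B(-1/\kappa-3,\,3)/(-\kappa\beta_2(3,\kappa))^3$, which is exactly the numerator of $K$ in the statement.

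To lower bound $\mu_\cD(-E)$ I would restrict $-E$ to the ball $\|y\| \le d$ of radius $d=(1+\gamma)^{-1/\gamma}\tfrac{1+3\beta}{3+3\beta}$, where on this ball Theorem~\ref{theorem: upper and lower bounds}(a) yields a pointwise lower bound on $f(-y)$ in terms of $f(0)$ and a radial factor $\bigl((\|y\|/d)((2-2^{-4\kappa})^{-1}-1)+1\bigr)^{1/\kappa}$ (the exponent $4\kappa$ is $(n+1)s$ evaluated at $n=3$, $s=\kappa$); combining with the centroid lower bound on $f(0)$ from Theorem~\ref{theorem: upper and lower bounds}(c,d) produces the absolute lower bound $h(\kappa)$ as defined in the statement. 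Integrating this against the $3$-dimensional volume element yields the denominator of the form $h(\kappa)\,d^{3+1/\kappa}/(3+1/\kappa)$; dividing the two bounds gives exactly the stated $K$. Plugging this $K$ into the Klivans template immediately yields the displayed label complexity. For the log-concave limit $s \to 0$, an easy check (already used throughout Section~4) shows that $\beta_1(3,\kappa), \beta_2(3,\kappa), d, h(\kappa)$ all tend to absolute constants, so $K = O(1)$, recovering the result of~\cite{klivans2009baum}.

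The main obstacle is controlling $K$ despite fat tails. In the log-concave setting the ratio $\mu(E)/\mu(-E)$ can be bounded by an absolute constant using only the exponential decay of the density; under $s$-concavity the density decays only polynomially, so the radial integral used to upper bound $\mu(E)$ is finite only because of the refined beta-function estimate of Lemma~\ref{lemma: more refined upper bound}, which in turn requires $\kappa > -1/3$. This is guaranteed by the standing assumption $s \ge -1/(2n+3)$, so the analysis just barely goes through; this tightness is why the explicit form of $K$ inflates as $s$ decreases. A secondary subtlety is that the reflection step of Baum's algorithm only ever applies to sampled positive points, whose norms are controlled by the tail bound of Theorem~\ref{theorem: tail probability}; this ensures we never evaluate the symmetry ratio on sets too deep in the tails, which is what prevents $K$ from blowing up further with $n$.
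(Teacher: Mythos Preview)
Your approach is essentially the paper's: reduce to the 3-dimensional marginal via Theorem~\ref{theorem: marginal}, upper bound the density by Lemma~\ref{lemma: more refined upper bound}, lower bound it on the ball of radius $d$ via Theorem~\ref{theorem: upper and lower bounds}(a,d), and integrate both radial envelopes in spherical coordinates so that the angular factor cancels in the ratio, yielding the stated $K$; then plug into the Klivans--Baum template.

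Two small points. First, you should make explicit that the sets $E$ on which you need the symmetry estimate are \emph{cones} (intersections of three origin-centered halfspaces). This is what makes the spherical-coordinate factorization work: both $\mu(E)$ and $\mu(-E)$ split as (same angular measure $A$) $\times$ (radial integral), and only then does the ratio collapse to the quotient of the two radial integrals you computed. For a general measurable $E$ the argument would fail. You gesture at this with ``relevant low-dimensional subspaces'' and ``Klivans template,'' but the cancellation of $A$ is the actual crux and should be stated. Second, your final paragraph about controlling norms of sampled positive points via Theorem~\ref{theorem: tail probability} is a red herring: the paper's reflection lemma (and your own integration) already handles the full cone $R$ out to infinity, because the radial upper bound from Lemma~\ref{lemma: more refined upper bound} is integrable whenever $\kappa>-1/3$. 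No truncation by tail bounds is needed or used.
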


\section{Lower Bounds}

In this section, we give information-theoretic lower bounds on the label complexity of passive and active learning of homogeneous halfspaces under $s$-concave distributions.
\begin{theorem}
\label{theorem: lower bounds of learning halfspace}
For a fixed value $-\frac{1}{2n+3}\le s\le 0$ we have: (a) For any $s$-concave distribution $\cD$ in $\R^n$ whose covariance matrix is of full rank, the sample complexity of learning origin-centered linear separators under $\cD$ in the passive learning scenario is $\Omega\left(nf_1(s,n)/\epsilon\right)$; (b) The label complexity of active learning of linear separators under $s$-concave distributions is $\Omega\left(n\log\left(f_1(s,n)/\epsilon\right)\right)$.
\end{theorem}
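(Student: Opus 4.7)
The plan is to prove both lower bounds via classical information-theoretic reductions, with Theorem \ref{theorem: disagreement and angle} playing the role of the bridge between angular packings on $S^{n-1}$ and disagreement packings under the $s$-concave distribution $\cD$: the inequality $d_\cD(u,v) \ge f_1(s,n)\theta(u,v)$ turns any angular $\theta$-packing into an $(f_1(s,n)\theta)$-packing in the disagreement pseudo-metric. The two parts differ only in how many bits each observation carries: one bit for an adaptively chosen active query versus $O(\epsilon)$ bits on average for a passive sample that rarely lands in the informative region.

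For part (b), I would build a maximal angular packing $\mathcal{N}=\{w_1,\ldots,w_N\}\subseteq S^{n-1}$ with mutual angle at least $3\epsilon/f_1(s,n)$. A standard volumetric estimate on the sphere gives $N \ge (c f_1(s,n)/\epsilon)^{n-1}$ for an absolute constant $c$, non-trivial precisely when $\epsilon\le f_1(s,n)/c$. By Theorem \ref{theorem: disagreement and angle} these points are pairwise at $\cD$-disagreement at least $3\epsilon$, so the radius-$\epsilon$ disagreement balls around them are pairwise disjoint and any $\epsilon$-accurate learner must identify the target index. Letting the adversary draw the target uniformly from $\mathcal{N}$, each queried label reveals at most one bit, so Shannon/Fano forces the label complexity to exceed $\log_2 N = \Omega(n\log(f_1(s,n)/\epsilon))$; standard boosting handles $\delta$.

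For part (a), I would use a VC-type hard instance in the spirit of Ehrenfeucht-Haussler-Kearns-Valiant, realized inside the $s$-concave distribution via the geometry of Theorem \ref{theorem: disagreement and angle}. Fix an anchor direction $w_0\in S^{n-1}$ and orthogonal perturbation directions $u_1,\ldots,u_n$; consider the hypercube $\{w_\sigma \propto w_0+\tau\sum_i \sigma_i u_i : \sigma\in\{\pm1\}^n\}$ with $\tau$ chosen so that each single-coordinate flip produces a disagreement region of $\cD$-mass $\Theta(\epsilon)$. By Theorem \ref{theorem: disagreement and angle} this requires $\tau = \Theta(\epsilon/f_1(s,n))$, and the $n$ coordinatewise disagreement wedges are essentially mutually disjoint since each is supported near a distinct two-dimensional subspace. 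A passive random sample from $\cD$ is then informative about the $i$-th coordinate only if it lands in wedge $i$, which occurs with probability $\Theta(\epsilon)$; the standard Assouad/Le Cam lemma yields that distinguishing the $n$ bits requires $\Omega(n f_1(s,n)/\epsilon)$ random samples, where the $f_1(s,n)$ factor enters because the feasible perturbation $\tau$, and hence the informative-wedge mass per coordinate, is capped through the isoperimetric constant $f_1(s,n)$.

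The main obstacle is part (a): making the EHKV-style hypercube construction consistent with a genuinely isotropic $s$-concave distribution with full-rank covariance, and showing that the $n$ coordinatewise wedges behave as independent informative events so that Assouad applies cleanly. Both issues are controlled by the marginalization theorem (Theorem \ref{theorem: marginal}) together with the band-probability bound (Theorem \ref{theorem: probability within margin}): projecting $\cD$ onto the two-dimensional plane spanned by $\{w_0,u_i\}$ preserves $s$-concavity (with a reindexed parameter $\gamma$), so the wedge mass can be estimated by the same low-dimensional $\gamma$-concave calculation that already underlies Theorem \ref{theorem: disagreement and angle}, and the independence of wedges follows from choosing the $u_i$ orthogonal and exploiting the rapid decay of the $s$-concave density outside any given band.
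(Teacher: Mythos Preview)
Your approach to part (b) is essentially the paper's: build an angular packing on the sphere, use Theorem~\ref{theorem: disagreement and angle} to convert it into a $d_\cD$-packing of the concept class, and invoke an information-theoretic query lower bound (the paper cites \cite{kulkarni1993active} for the statement that $\Omega(\log M_\cD(\mathbb{C},\epsilon))$ binary queries are necessary).

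For part (a) the paper takes a much shorter route than your Assouad/EHKV construction. It reuses the \emph{same} packing-number estimate already needed for (b), namely $M_\cD(\mathbb{C},\epsilon)\ge \tfrac{\sqrt{n}}{2}\bigl(f_1(s,n)/2\epsilon\bigr)^{n-1}-1$ (Lemma~\ref{lemma: packing number}), and plugs it into a black-box result of Long~\cite{long1995sample} stating that the passive sample complexity under any distribution is at least $\tfrac{n-1}{e}\bigl(M_\cD(\mathbb{C},2\epsilon)/4\bigr)^{1/(n-1)}$. Taking the $(n-1)$-th root of the packing bound immediately yields $\Omega(nf_1(s,n)/\epsilon)$ with no additional geometry; the non-isotropic full-rank case is handled by whitening, not by any $s$-concave analysis.

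Your hypercube route has a genuine gap. The explanation of how $f_1(s,n)$ enters is backwards: Theorem~\ref{theorem: disagreement and angle} is a \emph{lower} bound $d_\cD(u,v)\ge f_1\theta(u,v)$, so it does not ``cap'' wedge mass, and if each wedge really had mass $\Theta(\epsilon)$ the Assouad count would give $\Omega(n/\epsilon)$ with no $f_1$ factor at all. More seriously, the family $\{w_\sigma\}$ does not shatter $n$ fixed wedges in the way EHKV requires: flipping $\sigma_i$ rotates the entire hyperplane, so the disagreement region between $w_\sigma$ and $w_{\sigma^{(i)}}$ depends on all of $\sigma$, and there is no fixed region whose label is controlled solely by the $i$-th bit. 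Orthogonality of the $u_i$ and marginal $s$-concavity do not restore this product structure; getting it is exactly the work that Long's packing-to-sample-complexity reduction lets you bypass.
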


If the covariance matrix of $\cD$ is not of full rank, then the intrinsic dimension is less than $d$. So our lower bounds essentially apply to all $s$-concave distributions. According to Theorem \ref{theorem: lower bounds of learning halfspace}, it is possible to have an exponential improvement of label complexity w.r.t. $1/\epsilon$ over passive learning by active sampling, even though the underlying distribution is a fat-tailed $s$-concave distribution. This observation is captured by Theorems \ref{theorem: margin-based active learning realizable case} and \ref{theorem: margin-based active learning adversarial noise}.

\section{Conclusions}

In this paper, we study the geometric properties of $s$-concave distributions. Our work advances the state-of-the-art results on the margin-based active learning, disagreement-based active learning, and learning intersections of halfspaces w.r.t. the distributions over the instance space. When $s\rightarrow 0$, our results reduce to the best-known results for log-concave distributions. The geometric properties of $s$-concave distributions can be potentially applied to other learning algorithms, which might be of independent interest more broadly.

\medskip
\noindent \textbf{Acknowledgements.} This work was supported in part by grants  NSF-CCF 1535967, NSF CCF-1422910, NSF CCF-1451177, a Sloan Fellowship, and a Microsoft Research
Fellowship.

\bibliographystyle{abbrv}
\bibliography{reference}

\begin{thebibliography}{10}

\bibitem{anthony2009neural}
M.~Anthony and P.~L. Bartlett.
\newblock {\em Neural network learning: Theoretical foundations}.
\newblock Cambridge University Press, 2009.

\bibitem{applegate1991sampling}
D.~Applegate and R.~Kannan.
\newblock Sampling and integration of near log-concave functions.
\newblock In {\em ACM Symposium on Theory of Computing}, pages 156--163, 1991.

\bibitem{awasthi2015efficient}
P.~Awasthi, M.-F. Balcan, N.~Haghtalab, and R.~Urner.
\newblock Efficient learning of linear separators under bounded noise.
\newblock In {\em Annual Conference on Learning Theory}, pages 167--190, 2015.

\bibitem{awasthi2016learning}
P.~Awasthi, M.-F. Balcan, N.~Haghtalab, and H.~Zhang.
\newblock Learning and 1-bit compressed sensing under asymmetric noise.
\newblock In {\em Annual Conference on Learning Theory}, pages 152--192, 2016.

\bibitem{awasthi2014power}
P.~Awasthi, M.-F. Balcan, and P.~M. Long.
\newblock The power of localization for efficiently learning linear separators
  with noise.
\newblock In {\em ACM Symposium on Theory of Computing}, pages 449--458, 2014.

\bibitem{awasthi2017power}
P.~Awasthi, M.-F. Balcan, and P.~M. Long.
\newblock The power of localization for efficiently learning linear separators
  with noise.
\newblock {\em Journal of the ACM}, 63(6):50, 2017.

\bibitem{balcan2009agnostic}
M.-F. Balcan, A.~Beygelzimer, and J.~Langford.
\newblock Agnostic active learning.
\newblock {\em Journal of Computer and System Sciences}, 75(1):78--89, 2009.

\bibitem{balcan2007margin}
M.-F. Balcan, A.~Broder, and T.~Zhang.
\newblock Margin based active learning.
\newblock In {\em Annual Conference on Learning Theory}, pages 35--50, 2007.

\bibitem{balcan2013active}
M.-F. Balcan and P.~M. Long.
\newblock Active and passive learning of linear separators under log-concave
  distributions.
\newblock In {\em Annual Conference on Learning Theory}, pages 288--316, 2013.

\bibitem{balcan2016noise}
M.-F. Balcan and H.~Zhang.
\newblock Noise-tolerant life-long matrix completion via adaptive sampling.
\newblock In {\em Advances in Neural Information Processing Systems}, pages
  2955--2963, 2016.

\bibitem{baum1990polynomial}
E.~B. Baum.
\newblock A polynomial time algorithm that learns two hidden unit nets.
\newblock {\em Neural Computation}, 2(4):510--522, 1990.

\bibitem{bertsimas2004solving}
D.~Bertsimas and S.~Vempala.
\newblock Solving convex programs by random walks.
\newblock {\em Journal of the ACM}, 51(4):540--556, 2004.

\bibitem{beygelzimer2009importance}
A.~Beygelzimer, S.~Dasgupta, and J.~Langford.
\newblock Importance weighted active learning.
\newblock In {\em International Conference on Machine Learning}, pages 49--56,
  2009.

\bibitem{beygelzimer2016search}
A.~Beygelzimer, D.~J. Hsu, J.~Langford, and C.~Zhang.
\newblock Search improves label for active learning.
\newblock In {\em Advances in Neural Information Processing Systems}, pages
  3342--3350, 2016.

\bibitem{beygelzimer2010agnostic}
A.~Beygelzimer, D.~J. Hsu, J.~Langford, and T.~Zhang.
\newblock Agnostic active learning without constraints.
\newblock In {\em Advances in Neural Information Processing Systems}, pages
  199--207, 2010.

\bibitem{blum1996polynomial}
A.~Blum, A.~Frieze, R.~Kannan, and S.~Vempala.
\newblock A polynomial-time algorithm for learning noisy linear threshold
  functions.
\newblock In {\em IEEE Symposium on Foundations of Computer Science}, pages
  330--338, 1996.

\bibitem{blumer1989learnability}
A.~Blumer, A.~Ehrenfeucht, D.~Haussler, and M.~K. Warmuth.
\newblock Learnability and the {V}apnik-{C}hervonenkis dimension.
\newblock {\em Journal of the ACM}, 36(4):929--965, 1989.

\bibitem{bobkov2007large}
S.~G. Bobkov.
\newblock Large deviations and isoperimetry over convex probability measures
  with heavy tails.
\newblock {\em Electronic Journal of Probability}, 12:1072--1100, 2007.

\bibitem{bousquet2005theory}
O.~Bousquet, S.~Boucheron, and G.~Lugosi.
\newblock Theory of classification: A survey of recent advances.
\newblock {\em ESAIM: Probability and Statistics}, 9(9):323--375, 2005.

\bibitem{brascamp2002extensions}
H.~J. Brascamp and E.~H. Lieb.
\newblock On extensions of the {B}runn-{M}inkowski and {P}r{\'e}kopa-{L}eindler
  theorems, including inequalities for log concave functions, and with an
  application to the diffusion equation.
\newblock {\em Journal of Functional Analysis}, 22(4):366--389, 1976.

\bibitem{caramanis2004inequality}
C.~Caramanis and S.~Mannor.
\newblock An inequality for nearly log-concave distributions with applications
  to learning.
\newblock In {\em Annual Conference on Learning Theory}, pages 534--548, 2004.

\bibitem{caramanis2007inequality}
C.~Caramanis and S.~Mannor.
\newblock An inequality for nearly log-concave distributions with applications
  to learning.
\newblock {\em IEEE Transactions on Information Theory}, 53(3):1043--1057,
  2007.

\bibitem{chandrasekaran2009sampling}
K.~Chandrasekaran, A.~Deshpande, and S.~Vempala.
\newblock Sampling s-concave functions: The limit of convexity based
  isoperimetry.
\newblock In {\em Approximation, Randomization, and Combinatorial Optimization.
  Algorithms and Techniques}, pages 420--433. 2009.

\bibitem{cohn1994improving}
D.~Cohn, L.~Atlas, and R.~Ladner.
\newblock Improving generalization with active learning.
\newblock {\em Machine Learning}, 15(2):201--221, 1994.

\bibitem{daniely2016complexity}
A.~Daniely.
\newblock Complexity theoretic limitations on learning halfspaces.
\newblock In {\em ACM Symposium on Theory of computing}, pages 105--117, 2016.

\bibitem{dasgupta2004analysis}
S.~Dasgupta.
\newblock Analysis of a greedy active learning strategy.
\newblock In {\em Advances in Neural Information Processing Systems},
  volume~17, pages 337--344, 2004.

\bibitem{dasgupta2007general}
S.~Dasgupta, D.~J. Hsu, and C.~Monteleoni.
\newblock A general agnostic active learning algorithm.
\newblock In {\em Advances in Neural Information Processing Systems}, pages
  353--360, 2007.

\bibitem{dasgupta2005analysis}
S.~Dasgupta, A.~T. Kalai, and C.~Monteleoni.
\newblock Analysis of perceptron-based active learning.
\newblock In {\em Annual Conference on Learning Theory}, pages 249--263, 2005.

\bibitem{dunagan2004simple}
J.~Dunagan and S.~Vempala.
\newblock A simple polynomial-time rescaling algorithm for solving linear
  programs.
\newblock In {\em ACM Symposium on Theory of computing}, pages 315--320, 2004.

\bibitem{freund1993information}
Y.~Freund, H.~S. Seung, E.~Shamir, and N.~Tishby.
\newblock Information, prediction, and query by committee.
\newblock {\em Advances in Neural Information Processing Systems}, pages
  483--483, 1993.

\bibitem{friedman2009active}
E.~Friedman.
\newblock Active learning for smooth problems.
\newblock In {\em Annual Conference on Learning Theory}, 2009.

\bibitem{guruswami2009hardness}
V.~Guruswami and P.~Raghavendra.
\newblock Hardness of learning halfspaces with noise.
\newblock {\em SIAM Journal on Computing}, 39(2):742--765, 2009.

\bibitem{han2016approximation}
Q.~Han and J.~A. Wellner.
\newblock Approximation and estimation of s-concave densities via {R}{\'e}nyi
  divergences.
\newblock {\em The Annals of Statistics}, 44(3):1332--1359, 2016.

\bibitem{hanneke2007bound}
S.~Hanneke.
\newblock A bound on the label complexity of agnostic active learning.
\newblock In {\em International Conference on Machine Learning}, pages
  353--360, 2007.

\bibitem{hanneke2014theory}
S.~Hanneke et~al.
\newblock Theory of disagreement-based active learning.
\newblock {\em Foundations and Trends in Machine Learning}, 7(2-3):131--309,
  2014.

\bibitem{kalai2008agnostically}
A.~T. Kalai, A.~R. Klivans, Y.~Mansour, and R.~A. Servedio.
\newblock Agnostically learning halfspaces.
\newblock {\em SIAM Journal on Computing}, 37(6):1777--1805, 2008.

\bibitem{kalai2006simulated}
A.~T. Kalai and S.~Vempala.
\newblock Simulated annealing for convex optimization.
\newblock {\em Mathematics of Operations Research}, 31(2):253--266, 2006.

\bibitem{kane2017active}
D.~M. Kane, S.~Lovett, S.~Moran, and J.~Zhang.
\newblock Active classification with comparison queries.
\newblock In {\em IEEE Symposium on Foundations of Computer Science}, pages
  355--366, 2017.

\bibitem{kearns1993learning}
M.~Kearns and M.~Li.
\newblock Learning in the presence of malicious errors.
\newblock {\em SIAM Journal on Computing}, 22(4):807--837, 1993.

\bibitem{kearns1994toward}
M.~J. Kearns, R.~E. Schapire, and L.~M. Sellie.
\newblock Toward efficient agnostic learning.
\newblock {\em Machine Learning}, 17(2-3):115--141, 1994.

\bibitem{kearns1994introduction}
M.~J. Kearns and U.~V. Vazirani.
\newblock {\em An introduction to computational learning theory}.
\newblock MIT press, 1994.

\bibitem{klivans2014embedding}
A.~Klivans and P.~Kothari.
\newblock Embedding hard learning problems into gaussian space.
\newblock {\em International Workshop on Approximation Algorithms for
  Combinatorial Optimization Problems}, 28:793--809, 2014.

\bibitem{klivans2009learning}
A.~R. Klivans, P.~M. Long, and R.~A. Servedio.
\newblock Learning halfspaces with malicious noise.
\newblock {\em Journal of Machine Learning Research}, 10:2715--2740, 2009.

\bibitem{klivans2009baum}
A.~R. Klivans, P.~M. Long, and A.~K. Tang.
\newblock Baum{'}s algorithm learns intersections of halfspaces with respect to
  log-concave distributions.
\newblock In {\em Approximation, Randomization, and Combinatorial Optimization.
  Algorithms and Techniques}, pages 588--600. 2009.

\bibitem{klivans2002learning}
A.~R. Klivans, R.~O'Donnell, and R.~A. Servedio.
\newblock Learning intersections and thresholds of halfspaces.
\newblock In {\em IEEE Symposium on Foundations of Computer Science}, pages
  177--186, 2002.

\bibitem{kulkarni1993active}
S.~R. Kulkarni, S.~K. Mitter, and J.~N. Tsitsiklis.
\newblock Active learning using arbitrary binary valued queries.
\newblock {\em Machine Learning}, 11(1):23--35, 1993.

\bibitem{littlestone1988learning}
N.~Littlestone.
\newblock Learning quickly when irrelevant attributes abound: A new
  linear-threshold algorithm.
\newblock {\em Machine Learning}, 2(4):285--318, 1988.

\bibitem{long1995sample}
P.~M. Long.
\newblock On the sample complexity of pac learning half-spaces against the
  uniform distribution.
\newblock {\em IEEE Transactions on Neural Networks}, 6(6):1556--1559, 1995.

\bibitem{lovasz2007geometry}
L.~Lov{\'a}sz and S.~Vempala.
\newblock The geometry of logconcave functions and sampling algorithms.
\newblock {\em Random Structures $\&$ Algorithms}, 30(3):307--358, 2007.

\bibitem{minsky1987perceptrons}
M.~Minsky and S.~Papert.
\newblock Perceptrons--extended edition: An introduction to computational
  geometry, 1987.

\bibitem{servedio2001efficient}
R.~A. Servedio.
\newblock {\em Efficient algorithms in computational learning theory}.
\newblock PhD thesis, Harvard University, 2001.

\bibitem{shalev2010learning}
S.~Shalev-Shwartz, O.~Shamir, and K.~Sridharan.
\newblock Learning kernel-based halfspaces with the zero-one loss.
\newblock {\em arXiv preprint arXiv:1005.3681}, 2010.

\bibitem{tsybakov2004optimal}
A.~B. Tsybakov.
\newblock Optimal aggregation of classifiers in statistical learning.
\newblock {\em The Annals of Statistics}, pages 135--166, 2004.

\bibitem{vapnik1982estimations}
V.~Vapnik.
\newblock {\em Estimations of dependences based on statistical data}.
\newblock Springer, 1982.

\bibitem{vapnik1998statistical}
V.~Vapnik.
\newblock {\em The nature of statistical learning theory}.
\newblock Springer Science \& Business Media, 2013.

\bibitem{wang2011smoothness}
L.~Wang.
\newblock Smoothness, disagreement coefficient, and the label complexity of
  agnostic active learning.
\newblock {\em Journal of Machine Learning Research}, 12(Jul):2269--2292, 2011.

\bibitem{xu2017noise}
Y.~Xu, H.~Zhang, A.~Singh, A.~Dubrawski, and K.~Miller.
\newblock Noise-tolerant interactive learning using pairwise comparisons.
\newblock In {\em Advances in Neural Information Processing Systems}, pages
  2428--2437, 2017.

\bibitem{yan2017revisiting}
S.~Yan and C.~Zhang.
\newblock Revisiting perceptron: Efficient and label-optimal active learning of
  halfspaces.
\newblock {\em arXiv preprint arXiv:1702.05581}, 2017.

\bibitem{zhang2014beyond}
C.~Zhang and K.~Chaudhuri.
\newblock Beyond disagreement-based agnostic active learning.
\newblock In {\em Advances in Neural Information Processing Systems}, pages
  442--450, 2014.

\end{thebibliography}

\newpage
\appendix

\section{Proof of Theorem \ref{theorem: marginal}}
\noindent{\textbf{Theorem \ref{theorem: marginal} (restated)}
\emph{Let $f(x,y)$ be an $s$-concave density on a convex set $K\subseteq \mathbb{R}^{n+m}$ with $s\ge -\frac{1}{m}$. Denote by $K|_{\mathbb{R}^n}=\{x\in\R^n: \exists y\in\R^m\ \mbox{s.t.}\ (x,y)\in K\}$. For every x in $K|_{\mathbb{R}^n}$, consider the section $K(x)\triangleq\{y\in\mathbb{R}^m: (x,y)\in K\}$. Then the marginal density $g(x)\triangleq\int_{K(x)}f(x,y)dy$ is $\gamma$-concave on $K|_{\mathbb{R}^n}$, where $\gamma=\frac{s}{1+ms}$. Moreover, if $f(x,y)$ is isotropic, then $g(x)$ is isotropic.}

\begin{proof}
\comment{
Roughly, for any $(x_1,y_1),(x_2,y_2)\in K|_{\mathbb{R}^{n+m}}$ and any point $(x,y)$ such that $(x,y)=\lambda (x_1,y_1)+(1-\lambda)(x_2,y_2)$, by the fact that $f(x,y)$ is $s$-concave we have $f(x,y)\ge (\lambda f(x_1,y_1)^s+(1-\lambda)f(x_2,y_2)^s)^{1/s}$ for $\lambda\in[0,1]$. Let in Theorem \ref{theorem: generalized prekopa-leindler inequality} $G_1(x_1)=f(x_1,y_1)$, $G_2(x_2)=f(x_2,y_2)$, and $H_s(\lambda x_1+(1-\lambda)x_2)=f(\lambda x_1+(1-\lambda)x_2,\lambda y_1+(1-\lambda)y_2)$, the proof is completed.}
The proof that $g(x)$ is isotropic is standard~\cite{lovasz2007geometry}. We now prove the first part. Let $x_1$, $x_2$ be any two points.
Define $g_i(y)=f(x_i,y)$ for $i=1,2$. So the functions $g_i(y)$ is defined on $K(x_i)$, $i=1,2$. Now let $x=\lambda x_1+(1-\lambda)x_2$ for $\lambda\in(0,1)$ and define $h_s(y)=f(x,y)$ on $K(x)$. Notice that for any $y_i\in K(x_i)$, $i=1,2$, $y=\lambda y_1+(1-\lambda)y_2\in K(x)$. To see this, by the convexity of the set $K$, the point
$(x,y)=\lambda(x_1,y_1)+(1-\lambda)(x_2,y_2)$
belongs to $K$. So $y\in K(x)$, i.e., $\lambda K(x_1)+(1-\lambda)K(x_2)\subseteq K(x)$. Using the $s$-concavity of $f(x,y)$, we have
$f(x,y)=f(\lambda(x_1,y_1)+(1-\lambda)(x_2,y_2))\ge \left[\lambda f(x_1,y_1)^s+(1-\lambda)f(x_2,y_2)^s\right]^{1/s}$,
which implies that
$h_s(y)=h_s(\lambda y_1+(1-\lambda)y_2)\ge \left[\lambda g_1(y_1)^s+(1-\lambda)g_2(y_2)^s\right]^{1/s}$.
Denote by $I_{(\cdot)}$ the indicator function. So
$h_s(\lambda y_1+(1-\lambda)y_2)I_{K(x)}(y)\ge \left[\lambda (g_1(y_1)I_{K(x_1)}(y_1))^s\oplus(1-\lambda)(g_2(y_2)I_{K(x_2)}(y_2))^s\right]^{1/s}$.
Set $H_s(y)\hspace{-0.1cm}=\hspace{-0.1cm}h_s(\lambda y_1\hspace{-0.1cm}+\hspace{-0.1cm}(1\hspace{-0.1cm}-\hspace{-0.1cm}\lambda)y_2)I_{K(x)}$, $G_1(y_1)\hspace{-0.1cm}=\hspace{-0.1cm}g_1(y_1)I_{K(x_1)}$, $G_2(y_1)=g_2(y_1)I_{K(x_2)}$. By Theorem \ref{theorem: generalized prekopa-leindler inequality},
\begin{equation*}
\begin{split}
g(x)&=\hspace{-0.1cm}\int_{\mathbb{R}^m}H_s(y)dy=\hspace{-0.1cm}\int_{\mathbb{R}^m}h_s(y)I_{K(x)}(y)dy\ge \hspace{-0.1cm}\left[(1\hspace{-0.1cm}-\hspace{-0.1cm}\lambda)\hspace{-0.1cm}\left(\int_{\mathbb{R}^n}\hspace{-0.2cm}G_1(y)dy\right)^\gamma\hspace{-0.2cm}+\hspace{-0.1cm}\lambda\left(\int_{\mathbb{R}^n}\hspace{-0.2cm}G_2(y)dy\right)^\gamma\right]^{1/\gamma}\\
&=\left[(1-\lambda)\left(\int_{\mathbb{R}^n}f(x_1,y_1)I_{K(x_1)}(y_1)dy_1\right)^\gamma+\lambda\left(\int_{\mathbb{R}^n}f(x_2,y_2)I_{K(x_2)}(y_2)dy_2\right)^\gamma\right]^{1/\gamma}\\
&=\left[(1-\lambda)g(x_1)^\gamma+\lambda g(x_2)^\gamma\right]^{1/\gamma},
\end{split}
\end{equation*}
where $\gamma=s/(1+ms)$. Namely, the marginal function $g(x)$ is $\gamma$-concave.
\end{proof}

\section{Proof of Theorem \ref{theorem: distribution function}}
Similar to the marginal, the CDF of an $s$-concave distribution might not remain in the same class. This is in sharp contrast with the log-concave distributions. The following lemma from \cite{brascamp2002extensions} provides a useful tool for our analysis of CDF, which basically claims that the measure of any $s$-concave distribution is $\gamma$-concave with a closed-form $\gamma$.

\begin{lemma}[\cite{brascamp2002extensions}, Cor 3.4]
\label{lemma: generalized brunn-minkowski inequality}
The density function $f(x)$ is $s$-concave for $s\ge -1/n$ where $x\in\mathbb{R}^n$, if and only if the corresponding probability measure $\mu$ is $\gamma$-concave for $\gamma=\frac{s}{1+ns}$, namely,
$\mu(\lambda A+(1-\lambda)B)\ge \left[\lambda\mu(A)^\gamma+(1-\lambda)\mu(B)^\gamma\right]^{1/\gamma}$,
for any $A,B\subseteq \mathbb{R}^n$ and $\lambda\in[0,1]$, where $\mu(A)=\int_A f(x)dx$.
\end{lemma}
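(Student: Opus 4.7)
I would prove both directions of the equivalence. The forward direction ($f$ is $s$-concave $\Rightarrow$ $\mu$ is $\gamma$-concave) would be a direct application of the already-cited generalized Pr\'ekopa--Leindler inequality (Theorem~\ref{theorem: generalized prekopa-leindler inequality}). The converse would be obtained by applying $\gamma$-concavity of $\mu$ to small balls and calibrating their radii so as to extract $s$-concavity, rather than just the weaker $\gamma$-concavity that equal radii would give.

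\emph{Forward direction.} For any Borel sets $A, B \subseteq \mathbb{R}^n$ and any $\lambda \in [0,1]$, I would set $G_1 = f \cdot \mathbf{1}_A$, $G_2 = f \cdot \mathbf{1}_B$, and $H_s = f \cdot \mathbf{1}_{\lambda A + (1-\lambda) B}$. For any $x \in A$ and $y \in B$, the point $\lambda x + (1-\lambda) y$ lies in the Minkowski sum, so $s$-concavity of $f$ together with the $\oplus$ convention verifies the pointwise hypothesis of Theorem~\ref{theorem: generalized prekopa-leindler inequality}. Invoking that theorem with $m = n$ (valid because $s \ge -1/n$) and $\gamma = s/(1+ns)$ gives
\[ \mu(\lambda A + (1-\lambda) B) \;=\; \int H_s \;\ge\; \big[\lambda\, \mu(A)^\gamma + (1-\lambda)\, \mu(B)^\gamma\big]^{1/\gamma}, \]
which is precisely $\gamma$-concavity of $\mu$.

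\emph{Converse direction.} I would fix Lebesgue points $x_0, x_1$ of $f$ with $f(x_i) > 0$, a weight $\lambda \in (0,1)$, and scale parameters $r_0, r_1 > 0$ to be chosen. Applying $\gamma$-concavity of $\mu$ to the balls $A_\epsilon = B(x_0, \epsilon r_0)$ and $B_\epsilon = B(x_1, \epsilon r_1)$, and using the Minkowski-sum identity $\lambda A_\epsilon + (1-\lambda) B_\epsilon = B(\bar x, \epsilon(\lambda r_0 + (1-\lambda) r_1))$ where $\bar x = \lambda x_0 + (1-\lambda) x_1$, then normalizing by the unit-ball volume times $\epsilon^n$ and letting $\epsilon \to 0$ via Lebesgue differentiation would yield
\[ f(\bar x)\big(\lambda r_0 + (1-\lambda) r_1\big)^n \;\ge\; \big[\lambda\, f(x_0)^\gamma r_0^{n\gamma} + (1-\lambda)\, f(x_1)^\gamma r_1^{n\gamma}\big]^{1/\gamma}. \]
The crucial step is then the calibrated choice $r_i = f(x_i)^s$. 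Since $\gamma(1+ns) = s$, the right-hand side collapses to $[\lambda f(x_0)^s + (1-\lambda) f(x_1)^s]^{1/\gamma}$ and the coefficient on the left becomes $[\lambda f(x_0)^s + (1-\lambda) f(x_1)^s]^n$; the algebraic identity $1/\gamma - n = 1/s$ then delivers $f(\bar x) \ge [\lambda f(x_0)^s + (1-\lambda) f(x_1)^s]^{1/s}$, i.e.\ $s$-concavity of $f$.

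\emph{Main obstacle.} The forward direction is essentially mechanical once Theorem~\ref{theorem: generalized prekopa-leindler inequality} is in hand. The real work is in the converse: using equal-radius balls only recovers the weaker conclusion that $f$ is $\gamma$-concave, so the non-obvious step is calibrating the test radii to the local density values via $r_i = f(x_i)^s$, after which the identity $1/\gamma - n = 1/s$ is the engine that makes the exponents line up to produce exactly $s$-concavity. Remaining technical care would be needed for boundary cases (points outside $\mathrm{supp}(\mu)$ or $f(x_i) = 0$, handled by the $\oplus$ convention), and for upgrading the a.e.\ Lebesgue-point inequality to a pointwise statement, which uses the standard fact that $s$-concave densities admit an upper semicontinuous representative on the interior of their support.
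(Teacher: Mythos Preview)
The paper does not prove this lemma at all: it is quoted verbatim from \cite{brascamp2002extensions}, Cor.~3.4, and used as a black box in the proof of Theorem~\ref{theorem: distribution function}. So there is no ``paper's own proof'' to compare against.

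Your proposal is a correct outline of a proof and therefore goes strictly beyond what the paper does. The forward direction is indeed the standard reduction to Theorem~\ref{theorem: generalized prekopa-leindler inequality} (exactly the mechanism the paper already used to prove Theorem~\ref{theorem: marginal}). The converse via calibrated radii $r_i = f(x_i)^s$ and Lebesgue differentiation is the right idea, and your algebra checks out: with $\gamma = s/(1+ns)$ one has $\gamma(1+ns)=s$ so that $f(x_i)^\gamma r_i^{n\gamma} = f(x_i)^s$, and $1/\gamma - n = 1/s$ closes the argument. The caveats you flag (Lebesgue points, the $\oplus$ convention for points of zero density, and upgrading from an a.e.\ inequality to a genuine $s$-concave representative on the interior of the support) are exactly the technical points that would need to be written out in full, but none of them hides a real obstruction.
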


Lemma \ref{lemma: generalized brunn-minkowski inequality} is an extension of celebrated Brunn-Minkowski theorem. The following theorem concerning the CDF of an $s$-concave distribution is a straightforward result from Lemma \ref{lemma: generalized brunn-minkowski inequality}.

\noindent{\textbf{Theorem \ref{theorem: distribution function} (restated)}
\emph{The CDF of $s$-concave distribution in $\mathbb{R}^n$ is $\gamma$-concave, where $\gamma=\frac{s}{1+ns}$ and $s\ge -\frac{1}{n}$.}

\begin{proof}
Denote by $F(x)$ the CDF. Applying Lemma \ref{lemma: generalized brunn-minkowski inequality} to the set $A=\{x: x\le x_1\}$ and $B=\{x: x\le x_2\}$ and taking into account that $F(x_1)=\mu(A)$, $F(x_2)=\mu(B)$, and $F(\lambda x_1+(1-\lambda)x_2)=\mu(\lambda A+(1-\lambda)B)$, we have the result.
\end{proof}

\section{Proof of Theorem \ref{theorem: tail probability}}
Tail probability is one of the most distinct characteristic of $s$-concave distributions compared to the nearly log-concavity.  To study this, we first require a concentration result from \cite{bobkov2007large}.

\begin{lemma}[\cite{bobkov2007large}, Thm 5.2]
\label{lemma: tail probability median}
Let $f$ be a Borel function on $\R^n$ and let $m$ be a median for $|f|$ w.r.t. a $\kappa$-concave measure $\mu$, where $\kappa<0$. Then for every $t>1$ such that $4\delta_f(\frac{1}{t})\le 1$, we have
$
\mu[|f|>mt]\le\left[1-\frac{c\kappa}{\delta_f(\frac{1}{t})}\right]^{1/\kappa},
$
where $c$ is a constant,
$
\delta_f(\epsilon)=\sup_{x,y} \mbox{mes}\{t\in(0,1):|f(tx+(1-t)y)|\le\epsilon|f(x)|\},\ 0\le\epsilon\le 1,
$
and mes stands for the Lebesgue measure.
\end{lemma}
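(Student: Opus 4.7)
The plan is to prove this $\kappa$-concave tail inequality by reduction to a one-dimensional $\gamma$-concave computation, following the strategy used for log-concave tail bounds which this statement generalizes. Note that $[1 - c\kappa/\delta]^{1/\kappa} \to e^{-c/\delta}$ as $\kappa \to 0^-$, so the bound recovers the classical log-concave inequality $\mu(|f|>mt)\le e^{-c/\delta_f(1/t)}$; the overall scheme should mirror that log-concave argument with Lemma \ref{lemma: generalized brunn-minkowski inequality} (the $\kappa$-concave Brunn--Minkowski inequality) replacing the Pr\'ekopa--Leindler step used in the log-concave case.

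First I would invoke a localization lemma for $\kappa$-concave measures (the analogue of the Lov\'asz--Simonovits localization lemma, extended to $\kappa$-concave measures in prior work) to reduce the $n$-dimensional tail inequality to verifying it for one-dimensional $\gamma$-concave measures on line segments, where $\gamma = \kappa/(1+(n-1)\kappa)$. Because $\delta_f$ is defined as a supremum over \emph{all} pairs $x,y \in \R^n$, its value passes without loss to the restriction of $|f|$ on any chord, and an appropriate median for $|f|$ on each selected segment survives the reduction. On the segment, the worst-case $\gamma$-concave density takes the extremal form $\rho(s) = (a+bs)_+^{1/\gamma}$, for which moments and tails admit closed-form expressions via the beta function.

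Second, on that one-dimensional slice the regularity $\delta_f(1/t)$ means that the set $\{s: |\tilde f(s)| \le |\tilde f(x)|/t\}$ occupies at most a $\delta_f(1/t)$-fraction of any sub-chord. Combining this with the explicit density $\rho$ and the hypothesis $4\delta_f(1/t)\le 1$ (which keeps the tail region away from the singular endpoint of $\rho$), a direct integration yields $\int_{\{\tilde f>\tilde m t\}}\rho\,ds \le [1 - c\kappa/\delta_f(1/t)]^{1/\kappa}$ after optimizing over the extremal parameters $(a,b)$. The exponent $1/\kappa$ emerges naturally: the $1/\gamma$ power in $\rho$, pushed through the localization, matches the ambient $1/\kappa$ in the final form.

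The main obstacle is making the localization rigorous and tracking constants through the $\kappa$-to-$\gamma$ conversion and back. If that route proves too delicate, a fallback is a direct $n$-dimensional Brunn--Minkowski argument: apply Lemma \ref{lemma: generalized brunn-minkowski inequality} to $A=\{|f|\le m\}$ (with $\mu(A)\ge 1/2$) and $B=\{|f|>mt\}$, choose $\lambda \asymp \delta_f(1/t)$, use the definition of $\delta_f$ to show that the Minkowski combination $(1-\lambda)A+\lambda B$ lies inside a level set of bounded measure, and solve the resulting concavity inequality for $\mu(B)$. Sign conventions for $\kappa<0$ (so that $x\mapsto x^\kappa$ is decreasing and $1/\kappa<0$) must be handled carefully at each step, and the factor $4$ in the hypothesis is there to keep the optimization well-conditioned.
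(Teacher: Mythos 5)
You should first be aware that the paper does not prove this statement at all: it is imported verbatim from Bobkov \cite{bobkov2007large} (Theorem 5.2 there) and used as a black box in the proof of Theorem \ref{theorem: tail probability}. So there is no internal proof to compare against; the question is whether your sketch would stand on its own as a proof of Bobkov's result, and as written it would not.

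The step that carries all the content is missing from both of your routes. In the fallback (which is the closer of the two to the actual mechanism, a Borell-type segment argument combined with the $\kappa$-concave measure inequality of Lemma \ref{lemma: generalized brunn-minkowski inequality}), you propose to ``use the definition of $\delta_f$ to show that the Minkowski combination $(1-\lambda)A+\lambda B$ lies inside a level set of bounded measure, and solve.'' Two things break here. First, the definition of $\delta_f$ only controls the \emph{measure} of the bad parameters on each chord: for $x\in B=\{|f|>mt\}$ and any $y$, the set of $s$ with $|f(sx+(1-s)y)|\le m$ has measure at most $\delta_f(1/t)$, which tells you that \emph{most} points of the chord lie in $\{|f|>m\}$ but gives no conclusion for a \emph{fixed} $s=\lambda$; hence no Minkowski inclusion with a fixed weight $\lambda$ follows, and bridging this gap is precisely the dilation lemma (proved by a one-dimensional reduction) at the heart of Bobkov's argument, which your sketch leaves as a phrase. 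Second, even granting such an inclusion, the natural landing set is $\{|f|>m\}$, of measure at most $1/2$, and the companion set has measure at least $1/2$; a single application of the $\kappa$-concavity inequality then yields only the trivial bound $\mu(B)\le 1/2$, not the power-type bound $[1-c\kappa/\delta_f(1/t)]^{1/\kappa}$, so the final ``solve the concavity inequality'' step fails without the finer one-dimensional analysis (this is also where the hypothesis $4\delta_f(1/t)\le 1$ and the constant $c$ actually enter, neither of which your sketch uses). Your primary route has the same unproved core plus additional unsupported bookkeeping: the localization lemma for convex measures applies to inequalities of a specific integral form, the needle measures are not simply $\gamma$-concave with $\gamma=\kappa/(1+(n-1)\kappa)$ (that is the marginalization exponent for densities as in Theorem \ref{theorem: marginal}, and $\kappa$ here is a measure-level, not density-level, parameter), and the extremality of $(a+bs)_+^{1/\gamma}$ is asserted rather than proved. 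Since the paper treats the lemma as a citation, the appropriate fix is either to do likewise or to carry out Bobkov's dilation/localization argument in full.
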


Now we are ready to bound the tail probability of $s$-concave density. While it can be shown that the nearly log-concave distribution has an exponentially small tail~(Theorem 11, \cite{balcan2013active}), the tail of $s$-concave distribution is fat, as proved by the following theorem.

\noindent{\textbf{Theorem \ref{theorem: tail probability} (restated)}
\emph{Let $x$ come from an isotropic distribution over $\R^n$ with an $s$-concave density. Then for every $t\ge 16$, we have
$\Pr[\|x\|>\sqrt{n}t]\le \left[1-\frac{cst}{1+ns}\right]^{(1+ns)/s}$,
where $c$ is an absolute constant.}

\begin{proof}
Set function $f(x)$ in Lemma \ref{lemma: tail probability median} as $\|x\|$. Bobkov~\cite{bobkov2007large} claimed that $\delta_f(\epsilon)\le2\epsilon$. Also, Lemma \ref{lemma: generalized brunn-minkowski inequality} implies that the probability measure is $\kappa=\frac{s}{1+ns}$-concave.

By the definition of the median $m$, the Markov inequality, and the Jensen inequality, we have
$\frac{1}{2}=\Pr[\|x\|\ge m]\le \frac{\E\|x\|}{m}\le\frac{\sqrt{\E\|x\|^2}}{m}=\frac{\sqrt{n}}{m}$,
where the last equality is due to the isotropicity assumption. So by Lemma \ref{lemma: tail probability median}, we have that for every $t\ge 8$,
$\Pr[\|x\|>2\sqrt{n}t]\le \Pr[\|x\|>mt]\le \left[1-cst/(1+ns)\right]^{(1+ns)/s}$.
Replacing $t$ with $t/2$, the proof is completed.
\end{proof}

\section{Proof of Lemma \ref{lemma: halfspace probability}}
\noindent{\textbf{Lemma \ref{lemma: halfspace probability} (restated)}
\emph{Let $X$ be a random variable drawn from a one-dimensional distribution with $s$-concave density for $-1/2\le s\le 0$. Then
\begin{equation*}
\Pr(X\ge \E X)\ge (1+\gamma)^{-1/\gamma},
\end{equation*}
for $\gamma=s/(1+s)$.}

\medskip
\begin{proof}
Without loss of generality, we assume that $\E X=0$ and $|X|\le K$. The general case then follows by translation transformation and approximating a general distribution with $s$-concave density by such bounded distributions.

Let $G(x)=\Pr(X\le x)$ be the CDF of the $s$-concave density. We first prove that $\Pr(X\le \E X)\ge (1+\gamma)^{-1/\gamma}$. By Theorem \ref{theorem: distribution function}, $G(x)$ is $\gamma$-concave, monotone increasing such that $G(x)=0$ for $x\le -K$ and $G(x)=1$ for $x\ge K$, where $-1\le \gamma=\frac{s}{1+s}\le0$. Notice that the assumption of centroid 0 implies that
$
\int_{-K}^KxG'(x)dx=0,
$
which equivalently means
$
\int_{-K}^KG(x)dx=K
$
by integration by parts. Our goal is to prove that $G(0)\ge (1+\gamma)^{-1/\gamma}$.

The function $G^\gamma$ is concave for $\gamma<0$. Thus it lies above its tangent at $0$. This means that $G(x)\le G(0)(1+\gamma cx)^\frac{1}{\gamma}$, where $c=G'(0)/G(0)>0$. We now set $K$ large enough so that $1/c<K$. Then
\begin{equation*}
G(x)\le
\begin{cases}
G(0)(1+\gamma cx)^\frac{1}{\gamma}, & \mbox{if}\ x\le 1/c,\\
1, & \mbox{if}\ x> 1/c.
\end{cases}
\end{equation*}
So
\begin{equation*}
\begin{split}
K&=\int_{-K}^KG(x)dx\\
&\le \int_{-K}^{1/c}G(0)(1+\gamma cx)^\frac{1}{\gamma}dx+\int_{1/c}^K1dx\\
&=\frac{G(0)}{c(\gamma+1)}[(1+\gamma)^{\frac{1}{\gamma}+1}-(1-\gamma cK)^{\frac{1}{\gamma}+1}]+K-\frac{1}{c}\\
&\le\frac{G(0)(1+\gamma)^{\frac{1}{\gamma}}}{c}+K-\frac{1}{c},
\end{split}
\end{equation*}
which implies that $G(0)\ge (1+\gamma)^{-1/\gamma}$ as claimed. Replacing $X$ with $Y=-X$, we obtain the result.
\end{proof}

\section{Proof of Lemma \ref{lemma: bound for 1 dim}}
As a preliminary, we first prove the following lemma concerning the moments of $s$-concave distribution.
\begin{lemma}
\label{lemma: moment property}
Let $g:\R_+\rightarrow\R_+$ be an integrable function. Define
$M_n(g)=\int_0^\infty t^ng(t)dt$,
and suppose it exists.
Then

(a) The sequence $\{M_n(g): n=0,1,...\}$ is log-convex, which means $\log M_n(g)$ is convex w.r.t. variable $n$, or equivalently $M_n(g)M_{n+2}(g)\ge M_{n+1}(g)^2$ for any $n\in N$.

(b) If $g$ is monotone decreasing, then the sequence defined by
\begin{equation*}
M_n'(g)=
\begin{cases}
nM_{n-1}(g), & \mbox{if}\ n>0,\\
g(0), & \mbox{if}\ n=0,
\end{cases}
\end{equation*}
is log-convex.

(c) If $g$ is $s$-concave ($s>-1/(n+1)$), then the sequence $T_n(g)\triangleq M_n(g)/B(-1/s-n-1,n+1)$ is log-concave, which means $\log T_n(g)$ is concave w.r.t. $n$, or equivalently $T_n(g)T_{n+2}(g)\le T_{n+1}(g)^2$ for any $n\in N$.

(d) If $g$ is $s$-concave, then
$g(0)M_1(g)\le M_0(g)^2\frac{1+s}{1+2s}$.
\end{lemma}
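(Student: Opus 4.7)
My plan is to handle parts (a) and (b) by a classical Cauchy--Schwarz argument, and parts (c) and (d) via the baseline-function technique outlined in the paper's introduction.

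For (a), writing $M_{n+1}(g) = \int_0^\infty (t^{n/2}\sqrt{g(t)})(t^{(n+2)/2}\sqrt{g(t)})\,dt$ and applying Cauchy--Schwarz gives $M_{n+1}(g)^2 \le M_n(g)M_{n+2}(g)$. For (b), when $g$ is monotone decreasing, integration by parts yields $n M_{n-1}(g) = \int_0^\infty t^n\,d\nu(t)$ where $\nu = -dg$ is a positive Borel measure on $[0,\infty)$ (the boundary terms vanish by integrability of $M_n$), and the $n=0$ endpoint matches $\nu([0,\infty)) = g(0)$. Applying (a) to the measure $\nu$ then yields the log-convexity of $\{M_n'(g)\}$.

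For (c), the core idea is to compare $g$ against the baseline $h(t) = \alpha(1+\beta t)^{1/s}$ with $\alpha,\beta>0$. The substitution $u = 1/(1+\beta t)$ converts $M_n(h)$ into a beta integral, giving the closed form $M_n(h) = \alpha\beta^{-n-1}B(-1/s-n-1,\,n+1)$; hence $T_n(h) = \alpha\beta^{-n-1}$ and the recursion $T_n(h)T_{n+2}(h) = T_{n+1}(h)^2$ holds with equality. Given a generic $s$-concave density $g$, I would choose $\alpha,\beta$ so that $M_n(h)=M_n(g)$ and $M_{n+2}(h)=M_{n+2}(g)$; the inequality then reduces to showing $M_{n+1}(g)\ge M_{n+1}(h)$, since this gives $T_{n+1}(g)^2\ge T_{n+1}(h)^2 = T_n(g)T_{n+2}(g)$. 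The key observation is that for $s<0$, $g^s$ is convex while $h^s = \alpha^s(1+\beta t)$ is affine, so $w(t) := g^s(t)-h^s(t)$ is convex on $[0,\infty)$ and has at most two zeros $t_1\le t_2$; the moment matches exclude zero or one sign change (each forces a definite-sign contradiction in $M_n$), and the convexity of $w$ together with the fact that $x\mapsto x^{1/s}$ is decreasing for $s<0$ forces the sign pattern $g\le h$ on $[0,t_1)$, $g\ge h$ on $(t_1,t_2)$, $g\le h$ on $(t_2,\infty)$. Then $t^n(t-t_1)(t-t_2)(g(t)-h(t))\le 0$ pointwise, and integrating with the two moment equalities collapses this to $-(t_1+t_2)[M_{n+1}(g)-M_{n+1}(h)]\le 0$, giving the desired bound.

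For (d), the same blueprint applies, with matching conditions $h(0)=g(0)$ and $M_0(h)=M_0(g)$. A direct beta computation yields $h(0)M_1(h) = \frac{1+s}{1+2s}M_0(h)^2$, so it suffices to show $M_1(g)\le M_1(h)$. Now $w = g^s-h^s$ is convex with $w(0)=0$; if $w'(0)\ge 0$, convexity forces $w\ge 0$ on $[0,\infty)$, hence $g\le h$ everywhere, contradicting $M_0(g)=M_0(h)$ unless $g\equiv h$. So $w'(0)<0$, and $w$ has exactly one further zero at some $t_1>0$ with $w\le 0$ on $[0,t_1]$ and $w\ge 0$ on $[t_1,\infty)$, translating (for $s<0$) into $g\ge h$ on $[0,t_1]$ and $g\le h$ beyond. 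Then $(t-t_1)(g(t)-h(t))\le 0$ pointwise, and integrating using $M_0(g)=M_0(h)$ yields $M_1(g)-M_1(h)\le t_1[M_0(g)-M_0(h)] = 0$. The main obstacle in both (c) and (d) is pinning down the forced sign pattern of $g-h$; ruling out the reversed configurations relies on the convexity of $g^s-h^s$ combined with the moment-matching constraints, which is where most of the care is needed.
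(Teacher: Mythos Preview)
Your proposal is correct and follows essentially the same baseline-function comparison as the paper: parts (a) and (b) via Cauchy--Schwarz (the paper simply cites \cite{lovasz2007geometry} for these), and parts (c) and (d) by matching two parameters of $h(t)=\alpha(1+\beta t)^{1/s}$ to two moments of $g$, then exploiting the forced sign pattern of $g-h$ to squeeze the intermediate moment. The only notable deviations are cosmetic: in (c) you justify the ``at most two crossings'' via convexity of $g^s-h^s$ (the paper invokes uni-modality of $g$ instead), and in (d) you match $h(0)=g(0)$, $M_0(h)=M_0(g)$ and bound $M_1(g)\le M_1(h)$, whereas the paper matches $h(0)=g(0)$, $M_1(h)=M_1(g)$ and bounds $M_0(h)\le M_0(g)$ --- both variants yield the same inequality for $h$ and hence for $g$.
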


\medskip
\begin{proof}
The proofs of Parts (a) and (b) are from \cite{lovasz2007geometry}.

(c) The intuition behind the proof is to choose a baseline $s$-concave function $h$ which is at the ``boundary" between the family of $s$-concave function and that of the non $s$-concave function. We show that $h$ satisfies the equation
\begin{equation}
\label{equ: equality for special function}
T_n(h)T_{n+2}(h)=T_{n+1}^2(h).
\end{equation}
Then by the facts that $h$ is at the ``boundary" and $g$ is any $s$-concave function, we have
\begin{equation}
\label{equ: comparison}
T_{n+1}(h)\le T_{n+1}(g).
\end{equation}
The conclusion follows from \eqref{equ: equality for special function} and \eqref{equ: comparison}, and from our choice of $h$ such that $T_n(h)=T_n(g)$ and $T_{n+2}(h)=T_{n+2}(g)$, by adjusting the slope and intercept of the linear function.

Formally, let $h(t)=\beta(1+\gamma t)^{1/s}$ be the above-mentioned baseline $s$-concave function ($\beta,\gamma>0$) such that
\begin{equation*}
M_n(h)=M_n(g)\ \ \ \ \mbox{and}\ \ \ \ M_{n+2}(h)=M_{n+2}(g)
\end{equation*}
(This holds because there are two parameters $\beta,\gamma$ and two equations). That means
\begin{equation*}
\int_0^\infty t^n(h(t)-g(t))dt=0\ \ \ \ \mbox{and}\ \ \ \ \int_0^\infty t^{n+2}(h(t)-g(t))dt=0.
\end{equation*}
Then it follows that the graph of $h$ must intersect the graph of $g$ at least twice. Since $g$ is $s$-concave, which implies the uni-modality, the graphs of $h$ and $g$ intersect exactly at two points $0\le a<b$. Moreover, $h\le g$ in the interval $[a,b]$ and $h\ge g$ outside the interval. That is to say, $(t-a)(t-b)$ has the same sign as $h-g$. Thus
\begin{equation*}
\int_0^{\infty} (t-a)(t-b)t^n(h(t)-g(t))dt\ge 0.
\end{equation*}
Namely,
\begin{equation*}
0=\int_0^{\infty}t^{n+2}(h(t)-g(t))dt+ab\int_0^{\infty}t^n(h(t)-g(t))dt\ge (a+b)\int_0^\infty t^{n+1}(h(t)-g(t))dt.
\end{equation*}
This implies that
\begin{equation*}
M_{n+1}(h)=\int_0^{\infty}t^{n+1}h(t)dt\le\int_0^\infty t^{n+1}g(t)dt=M_{n+1}(g).
\end{equation*}
Since
\begin{equation*}
\label{equ: integral}
M_n(h)=\int_0^\infty t^n\beta(1+\gamma t)^{1/s}dt=B(-1/s-n-1,n+1)\frac{\beta}{\gamma^{n+1}}
\end{equation*}
for $s>-1/(n+1)$, we have
\begin{equation*}
\begin{split}
\frac{M_n(g)}{B(-1/s-n-1,n+1)}\frac{M_{n+2}(g)}{B(-1/s-n-3,n+3)}&=\frac{M_n(h)}{B(-1/s-n-1,n+1)}\frac{M_{n+2}(h)}{B(-1/s-n-3,n+3)}\\
&=\frac{\beta}{\gamma^{n+1}}\cdot \frac{\beta}{\gamma^{n+3}}\\
&=\left(\frac{M_{n+1}(h)}{B(-1/s-n-2,n+2)}\right)^2\\
&\le \left(\frac{M_{n+1}(g)}{B(-1/s-n-2,n+2)}\right)^2,
\end{split}
\end{equation*}
as desired.

(d) The proof is almost the same as that of Part (c). Let $h(t)=\beta(1+\gamma t)^{1/s}$ be an $s$-concave function ($\beta,\gamma>0$) such that
\begin{equation*}
h(0)=g(0)\ \ \ \ \mbox{and}\ \ \ \ M_1(h)=M_1(g).
\end{equation*}
So the graphs of $h$ and $g$ intersect exactly at two points $0$ and $a>0$, and hence
\begin{equation*}
\int_0^{\infty}t(t-a)t^{-1}(h(t)-g(t))dt\ge 0.
\end{equation*}
That means
\begin{equation*}
0=\int_0^{\infty}t(h(t)-g(t))dt\ge a\int_0^{\infty}(h(t)-g(t))dt,
\end{equation*}
or equivalently,
\begin{equation*}
M_0(h)\le M_0(g).
\end{equation*}
Note that $h(0)M_1(h)=M_0(h)^2\frac{1+s}{1+2s}$ by \eqref{equ: integral}.
Then the conclusion follows by the fact
\begin{equation*}
g(0)M_1(g)=h(0)M_1(h)=M_0(h)^2\frac{1+s}{1+2s}\le M_0(g)^2\frac{1+s}{1+2s}.
\end{equation*}
\end{proof}

Now we are ready to prove Lemma \ref{lemma: bound for 1 dim}.

\medskip
\noindent{\textbf{Lemma \ref{lemma: bound for 1 dim}} (restated)}
\emph{Let $g:\mathbb{R}\rightarrow\mathbb{R}_+$ be an isotropic $s$-concave density function and $s> -1/3$.}

\emph{
(a) For all $x$, $g(x)\le \frac{1+s}{1+3s}$.}

\emph{
(b) We have $g(0)\ge\sqrt{\frac{1}{3(1+\gamma)^{3/\gamma}}}$, where $\gamma=\frac{s}{s+1}$.}
\medskip
\begin{proof}
(a) Let $z$ be the maximum point of function $g$. Intuitively, if the value of the function $g$ evaluated at $z$ is too large, the corresponding distribution has a small deviation from $z$ (Second part of the proof below). However, the moment property of Lemma \ref{lemma: moment property} restricts that the second moment cannot be too small (First part of the proof below), which leads to a contradiction.

Formally, suppose that $g(z)>\frac{1+s}{1+3s}$. Define
\begin{equation*}
M_i=\int_z^{\infty}(x-z)^ig(x)dx,\ \ \ \ \mbox{and}\ \ \ \ N_i=\int_{-\infty}^z(z-x)^ig(x)dx.
\end{equation*}
By the isotropicity of function $g$, we have
\begin{equation*}
M_0+N_0=1,\ \ \ \ N_1-M_1=z,\ \ \ \ M_2+N_2=1+z^2.
\end{equation*}
Thus
\begin{equation*}
\begin{split}
M_2+N_2&=(M_0+N_0)^2+(M_1-N_1)^2\\
&=(M_0-M_1)^2+(N_0-N_1)^2+2(M_0N_0-M_1N_1)+2(M_0M_1+N_0N_1)\\
&\ge 2(M_0M_1+N_0N_1),
\end{split}
\end{equation*}
where the last inequality holds since, by Lemma \ref{lemma: moment property} (d), we have $M_1\le\frac{M_0^2}{g(z)}\frac{1+s}{1+2s}\le M_0^2\le M_0$ and $N_1\le\frac{N_0^2}{g(z)}\frac{1+s}{1+2s}\le N_0^2\le N_0$.

On the other hand, by Lemma \ref{lemma: moment property} (c) (d),
\begin{equation*}
M_2\le \frac{2M_1^2}{M_0}\frac{1+2s}{1+3s}\le \frac{2M_1M_0}{g(z)}\frac{1+s}{1+3s}<2M_1M_0,
\end{equation*}
and similarly, $N_2<2N_1N_0$. That means
\begin{equation*}
M_2+N_2<2(M_0M_1+N_0N_1),
\end{equation*}
and we obtain a contradiction.

(b) The proof is by Lemma \ref{lemma: moment property} (b) which lower bounds $g(0)$ by the second order moment of $g$, which is $1$ according to isotropicity.

Specifically, without lose of generality, assume that $g(x)$ is monotone decreasing for $x\ge 0$ (otherwise consider $g(-x)$, since function $g$ is uni-modal). Define $g_0$ as the restriction of $g$ to the non-negative semi-line. Then by Lemma \ref{lemma: moment property} (b), we have
\begin{equation*}
M_1'(g_0)^3\le M_0'(g_0)^2M_3'(g_0),
\end{equation*}
which implies
\begin{equation*}
g(0)\ge \sqrt{\frac{M_0(g_0)^3}{3M_2(g_0)}}.
\end{equation*}
Note that $M_2(g_0)\le M_2(g)=1$, and by Lemma \ref{lemma: halfspace probability},
\begin{equation*}
M_0(g_0)=\int_0^\infty g(t)dt=\Pr[X\ge \E X]\ge (1+\gamma)^{-1/\gamma}.
\end{equation*}
Thus we have
\begin{equation*}
g(0)\ge \sqrt{\frac{1}{3(1+\gamma)^{3/\gamma}}},
\end{equation*}
where $\gamma=s/(1+s)$.
\end{proof}

\section{Proof of Theorem \ref{theorem: upper and lower bounds}}
\label{section: proof of upper and lower bound}

\noindent{\textbf{Theorem \ref{theorem: upper and lower bounds}} (restated)}
\emph{Let $f:\R^n\rightarrow\R_+$ be an isotropic $s$-concave density.}

\emph{
(a) Let $d=(1+\gamma)^{-1/\gamma}\frac{1+3\beta}{3+3\beta}$, where $\beta=\frac{s}{1+(n-1)s}$ and $\gamma=\frac{\beta}{1+\beta}$. For any $u\in\R^n$ such that $\|u\|\le d$, we have \red{$f(u)\ge \left(\frac{\|u\|}{d}((2-2^{-(n+1)s})^{-1}-1)+1\right)^{1/s}f(0)$}.}

\emph{
(b) $f(x)\le f(0)\left[\left(\frac{1+\beta}{1+3\beta}\sqrt{3(1+\gamma)^{3/\gamma}}2^{n-1+1/s}\right)^{s}-1\right]^{1/s}$ for every $x$ ($s\ge -\frac{1}{2n+3}$).}

\emph{
(c) There exists an $x\in\R^n$ such that $f(x)>(4e\pi)^{-n/2}$.}

\emph{
(d) We have $(4e\pi)^{-n/2}\left[\left(\frac{1+\beta}{1+3\beta}\sqrt{3(1+\gamma)^{3/\gamma}}2^{n-1+1/s}\right)^{s}-1\right]^{-1/s}<f(0)\le \red{(2-2^{-(n+1)s})^{1/s}\frac{n\Gamma(n/2)}{2\pi^{n/2}d^n}}$.}

\emph{
(e) \red{$f(x)\le (2-2^{-(n+1)s})^{1/s}\frac{n\Gamma(n/2)}{2\pi^{n/2}d^n}\left[\left(\frac{1+\beta}{1+3\beta}\sqrt{3(1+\gamma)^{3/\gamma}}2^{n-1+1/s}\right)^{s}-1\right]^{1/s}$} for every $x$.}

\emph{
(f) For any line $\ell$ through the origin, $\int_{\ell} f \le \red{(2-2^{-ns})^{1/s}\frac{(n-1)\Gamma((n-1)/2)}{2\pi^{(n-1)/2}d^{n-1}}}$.}

\medskip
\begin{proof}
(a) 
Formally, suppose that the conclusion does not hold true, i.e., there is a point $u$ such that $\|u\|=t\le d$ and \red{$f(u)<\left(\frac{t}{d}((2-2^{-(n+1)s})^{-1}-1)+1\right)^{1/s}f(0)$}. Define $v=\frac{d}{t}u$ and note that $0\le \frac{t}{d}\le 1$. Therefore, by the $s$-concavity of $f$, we have
\begin{equation*}
f(u)=f\left(\frac{t}{d}v+\left(1-\frac{t}{d}\right)0\right)\ge \left[\frac{t}{d}f(v)^s+\left(1-\frac{t}{d}\right)f(0)^s\right]^{1/s},
\end{equation*}
which together with $f(u)<\left(\frac{t}{d}((2-2^{-(n+1)s})^{-1}-1)+1\right)^{1/s}f(0)$ implies \red{$f(v)<(2-2^{-(n+1)s})^{-1/s}f(0)$}. Let $H$ be a hyperplane supporting the convex set $\{x\in\R^n: f(x)\ge f(v)\}$ through the point $v$ (the convexity follows from the $s$-concavity of $f$). Define an orthogonal coordinate system in which the hyperplane is parallel to coordinate plane so that it can be represented as $x_1=a$ for some $0<a\le d$. Thus \red{$f(x)<(2-2^{-(n+1)s})^{-1/s}f(0)$} for any $x$ such that $x_1\ge a$. We will prove that this implies that the 1-dimensional marginal is not flat.

Denote by $g$ the first marginal of the $n$-dimensional function $f$. Then $g$ is isotropic and $\beta=\frac{s}{1+(n-1)s}$-concave by Theorem \ref{theorem: marginal}, and $g(x)\le \frac{1+\beta}{1+3\beta}$ for all $x$ by Lemma \ref{lemma: bound for 1 dim} (a). We prove that
\begin{equation*}
g(2b)<\frac{g(b)}{4}
\end{equation*}
for any $b\ge a$, which means that the 1-dimensional function is not flat.
To see this, by the $s$-concavity of function $f$, we have that, for every $x$ such that $x_1\ge a$,
\begin{equation*}
f(2x)^s\ge 2f(x)^s-f(0)^s>2^{-(n+1)s}f(x)^s.
\end{equation*}
Namely, $f(2x)< 2^{-(n+1)}f(x)$. Hence
\begin{equation*}
g(2b)=\int_{(x_1=2b)}f(x)dx_2...dx_n< 2^{-(n+1)}2^{n-1}\int_{(x_1=b)}f(x)dx_2...dx_n=\frac{g(b)}{4}.
\end{equation*}
So
\begin{equation*}
\int_a^\infty g(y)dy=\int_a^{2a} g(y)dy+\int_{2a}^\infty g(y)dy< \int_a^{2a} g(y)dy+\frac{1}{2}\int_a^{\infty} g(y)dy.
\end{equation*}
Namely, by Lemma \ref{lemma: bound for 1 dim} (a),
\begin{equation*}
\int_a^{\infty} g(y)dy< 2\int_a^{2a} g(y)dy\le 2a\frac{1+\beta}{1+3\beta}.
\end{equation*}
So
\begin{equation*}
\int_0^{\infty} g(y)dy=\int_0^a g(y)dy+\int_a^{\infty} g(y)dy< 3a\frac{1+\beta}{1+3\beta}\le 3d\frac{1+\beta}{1+3\beta}=(1+\gamma)^{-1/\gamma},
\end{equation*}
which leads to a contradiction with Lemma \ref{lemma: halfspace probability}.

(b) If $f(w)\le f(0)$ for every $w$, then the conclusion holds true. Otherwise, let $w$ be the point such that $f(w)> f(0)$. Let $H_0$ be the hyperplane through $0$ which supports the convex set $\{x\in\R^n:f(x)\ge f(0)\}$. By defining an orthogonal system, we may set $H_0$ as the hyperplane $x_1=0$, and so $f(x)\le f(0)$ for any $x$ such that $x_1=0$. Define $g$, which is a $\beta=\frac{s}{1+(n-1)s}$-concave function, as the first marginal of function $f$. Denote by $H_t$ the hyperplane $x_1=t$. Without loss of generality, we assume that $w\in H_b$ with $b>0$.

Let $x$ be any point on $H_0$ and $x'$ be the intersection between line segment $[x,w]$ and $H_{b/2}$. Then by the $s$-concavity of $f$ and $f(x)\le f(0)$ for $x\in H_0$, we have
\begin{equation*}
f(x')\ge \left[\frac{1}{2}f(x)^s+\frac{1}{2}f(w)^s\right]^{1/s}\ge\left(\frac{1}{2}\right)^{1/s}f(x)\left[1+\left(\frac{f(w)}{f(0)}\right)^{s}\right]^{1/s}.
\end{equation*}
Thus
\begin{equation*}
g(b/2)=\int_{(x_1=b/2)}f(x)dx_2...dx_n\ge\frac{1}{2^{n-1+1/s}}\left[1+\left(\frac{f(w)}{f(0)}\right)^{s}\right]^{1/s}g(0).
\end{equation*}
By Lemma \ref{lemma: bound for 1 dim} (a) (b), we have
\begin{equation*}
\frac{1+\beta}{1+3\beta}\ge g(b/2)\ge \frac{1}{2^{n-1+1/s}}\left[1+\left(\frac{f(w)}{f(0)}\right)^{s}\right]^{1/s}\sqrt{\frac{1}{3(1+\gamma)^{3/\gamma}}},
\end{equation*}
where $\gamma=\beta/(1+\beta)$. Note that $s\ge -\frac{1}{2n+3}$ implies $\frac{1+\beta}{1+3\beta}2^{n-1+1/s}\sqrt{3(1+\gamma)^{3/\gamma}}<1$. So
\begin{equation*}
f(w)\le f(0)\left[\left(\frac{1+\beta}{1+3\beta}\sqrt{3(1+\gamma)^{3/\gamma}}2^{n-1+1/s}\right)^{s}-1\right]^{1/s}.
\end{equation*}

(c) The proof of Part (c) follows from \cite{lovasz2007geometry}.

(d) The proof of lower bound follows from Parts (b) and (c).

For the upper bound, by Part (a), we have
\begin{equation*}
1=\int_{\R^n}f(x)dx\ge \int_{\|x\|\le d}f(x)dx\ge \red{d^n \text{vol}(B_{n-1})(2-2^{-(n+1)s})^{-1/s}f(0),}
\end{equation*}
where $\text{vol}(B_{n-1})$ represents the volume of $n-1$-dimensional unit ball. So
\begin{equation*}
f(0)\le \red{\frac{(2-2^{-(n+1)s})^{1/s}}{d^n\text{vol}(B_{n-1})}=(2-2^{-(n+1)s})^{1/s}\frac{n\Gamma(n/2)}{2\pi^{n/2}d^n}.}
\end{equation*}

(e) The proof of (e) follows from Parts (b) and (d).

(f) Define an orthogonal coordinate system in which $\ell$ is the $x_n$-axis. Let $h$ be the marginal of function $f$ over first $n-1$ variables, namely,
\begin{equation*}
h(x_1,...,x_{n-1})=\int f(x_1,...,x_{n-1},x_n)dx_n.
\end{equation*}
Then
\begin{equation*}
\int_{\ell} f=h(0)\le \red{(2-2^{-ns})^{1/s}\frac{(n-1)\Gamma((n-1)/2)}{2\pi^{(n-1)/2}d^{n-1}}.}
\end{equation*}
\end{proof}

\section{Proof of Lemma \ref{lemma: more refined upper bound}}
\label{section: proof of more refined upper bound}

\noindent{\textbf{Lemma \ref{lemma: more refined upper bound}} (restated)}
\emph{Let $f:\R^n\rightarrow \R_+$ be an isotropic $s$-concave density. Then $f(x)\le \beta_1(n,s)(1-s\beta_2(n,s)\|x\|)^{1/s}$ for every $x\in\R^n$, where
\begin{equation*}
\red{\beta_1(n,s)=(2-2^{-(n+1)s})^{\frac{1}{s}}\frac{1}{2\pi^{n/2}d^n}(1\hspace{-0.1cm}-\hspace{-0.1cm}s)^{-\frac{1}{s}}n\Gamma(n/2)\hspace{-0.1cm}\left[\left(\frac{1+\beta}{1+3\beta}\sqrt{3(1+\gamma)^{3/\gamma}}2^{n-1+1/s}\right)^{s}\hspace{-0.2cm}-\hspace{-0.1cm}1\right]^{1/s},}
\end{equation*}
and
\begin{equation*}
\red{\beta_2(n,s)=\frac{2\pi^{(n-1)/2}d^{n-1}}{(n-1)\Gamma((n-1)/2)}(2-2^{-ns})^{-1/s}\frac{[(a+(1-s)\beta_1(n,s)^s)^{1+1/s}-a^{1+1/s}]s}{\beta_1(n,s)^s(1+s)(1-s)},}
\end{equation*}
$d=(1+\gamma)^{-\frac{1}{\gamma}}\frac{1+3\beta}{3+3\beta}$, $\beta=\frac{s}{1+(n-1)s}$, $\gamma=\frac{\beta}{1+\beta}$, $a\hspace{-0.1cm}=\hspace{-0.1cm}(4e\pi)^{-\frac{ns}{2}}\hspace{-0.1cm}\left[\left(\frac{1+\beta}{1+3\beta}\sqrt{3(1+\gamma)^{3/\gamma}}2^{n-1+1/s}\right)^{s}\hspace{-0.2cm}-\hspace{-0.1cm}1\right]^{-1}$.}

\medskip
\begin{proof}
We first note that when $\|x\|\le 1/\beta_2$, $\beta_1(1-s\beta_2\|x\|)^{1/s}\ge \beta_1(1-s)^{1/s}\ge f(x)$ by Theorem \ref{theorem: upper and lower bounds} (e). So the conclusion holds.

We now assume that there is a point $v$ such that $\|v\|>1/\beta_2$ but $f(v)>\beta_1(1-s\beta_2\|v\|)^{1/s}$. Denote by $[0,v]$ the line segment between the origin $0$ and the point $v$, and let $\ell$ be the line through $v$ and $0$. We will prove that
\begin{equation*}
\int_\ell f > \red{(2-2^{-ns})^{1/s}\frac{(n-1)\Gamma((n-1)/2)}{2\pi^{(n-1)/2}d^{n-1}},}
\end{equation*}
which leads to a contradiction with Theorem \ref{theorem: upper and lower bounds} (f). Let $x$ be the convex combination of points $0$ and $v$, i.e., $x=(1-\|x\|/\|v\|)0+(\|x\|/\|v\|)v$, where $0\le \|x\|\le\|v\|$. Then by the $s$-concavity of $f$ and Theorem \ref{theorem: upper and lower bounds} (d),
\begin{equation*}
\begin{split}
f(x)&\ge \left[\left(1-\frac{\|x\|}{\|v\|}\right)f(0)^s+\frac{\|x\|}{\|v\|}f(v)^s\right]^{1/s}\\&> \red{\left[f(0)^s+\frac{\|x\|}{\|v\|}f(v)^s\right]^{1/s}}\\
&> \left[f(0)^s+\frac{\|x\|}{\|v\|}\beta_1^s-s\beta_1^s\beta_2\|x\|\right]^{1/s}\\&>\left[f(0)^s+(1-s)\beta_1^s\beta_2\|x\|\right]^{1/s}\\
&> \left\{(4e\pi)^{-ns/2}\left[\left(\frac{1+\beta}{1+3\beta}\sqrt{3(1+\gamma)^{3/\gamma}}2^{n-1+1/s}\right)^{s}-1\right]^{-1}+(1-s)\beta_1^s\beta_2\red{\|x\|}\right\}^{1/s}.
\end{split}
\end{equation*}
Thus
\begin{equation*}
\begin{split}
\int_\ell f\ge \int_{[0,v]}f&=\int_0^{\|v\|} \hspace{-0.2cm}\left\{(4e\pi)^{-ns/2}\left[\left(\frac{1+\beta}{1+3\beta}\sqrt{3(1+\gamma)^{3/\gamma}}2^{n-1+1/s}\right)^{s}\hspace{-0.2cm}-\hspace{-0.1cm}1\right]^{-1}\hspace{-0.3cm}+\hspace{-0.1cm}(1-s)\beta_1^s\beta_2r\right\}^{1/s}\hspace{-0.2cm} dr\\
&>\int_0^{1/\beta_2} \hspace{-0.2cm}\left\{(4e\pi)^{-ns/2}\left[\left(\frac{1+\beta}{1+3\beta}\sqrt{3(1+\gamma)^{3/\gamma}}2^{n-1+1/s}\right)^{s}\hspace{-0.2cm}-\hspace{-0.1cm}1\right]^{-1}\hspace{-0.3cm}+\hspace{-0.1cm}(1-s)\beta_1^s\beta_2r\right\}^{1/s}\hspace{-0.2cm} dr\\
&\ge \red{(2-2^{-ns})^{1/s}\frac{(n-1)\Gamma((n-1)/2)}{2\pi^{(n-1)/2}d^{n-1}}.}
\end{split}
\end{equation*}
So the proof is completed.
\end{proof}

\section{Proof of Theorem \ref{theorem: probability within margin}}
\noindent{\textbf{Theorem \ref{theorem: probability within margin}} (restated)}
\emph{Let $\mathcal{D}$ be an isotropic $s$-concave distribution in $\R^n$. Denote by $f_3(s,n)=2(1+ns)/(1+(n+2)s)$. Then for any unit vector $w$,
\begin{equation}
\Pr_{x\sim\mathcal{D}}[|w\cdot x|\le t]\le f_3(s,n)t.
\end{equation}
Moreover, if $t\le d=\left(\frac{1+2\gamma}{1+\gamma}\right)^{-\frac{1+\gamma}{\gamma}}\frac{1+3\gamma}{3+3\gamma}$ where $\gamma=\frac{s}{1+(n-1)s}$, then
\begin{equation}
\Pr_{x\sim\mathcal{D}}[|w\cdot x|\le t]> f_2(s,n)t,
\end{equation}
where $f_2(s,n)=2(2-2^{-2\gamma})^{-1/\gamma}(4e\pi)^{-1/2}\left(2\left(\frac{1+\gamma}{1+3\gamma}\sqrt{3}\left(\frac{1+2\gamma}{1+\gamma}\right)^{\frac{3+3\gamma}{2\gamma}}\right)^\gamma-1\right)^{-1/\gamma}$.}

\begin{proof}
Define an orthogonal coordinate system in which $w$ is an axis. Then the distribution of $w\cdot x$ is equal to the first marginal of the distribution $\mathcal{D}$, with isotropic $\gamma=\frac{s}{1+(n-1)s}$-concave density $g$ by Theorem \ref{theorem: marginal}. According to the upper bound given by Lemma \ref{lemma: bound for 1 dim} (a),
\begin{equation*}
\Pr_{x\sim\mathcal{D}}[|w\cdot x|\le t]=\int_{-t}^t g(x)dx\le \frac{1+\gamma}{1+3\gamma}\int_{-t}^t dx=2\frac{1+ns}{1+(n+2)s}t.
\end{equation*}

We now prove the later part of the theorem by a similar argument. By Theorem \ref{theorem: upper and lower bounds} (a) (d), for 1-dimensional $\gamma$-concave density $f(u)$ and $\|u\|\le d$, we have
\begin{equation*}
\begin{split}
f(u)&\ge (2-2^{-2\gamma})^{-1/\gamma}f(0)\\
&>(2-2^{-2\gamma})^{-1/\gamma}(4e\pi)^{-1/2}\left(2\left(\frac{1+\gamma}{1+3\gamma}\sqrt{3}\left(\frac{1+2\gamma}{1+\gamma}\right)^{\frac{3+3\gamma}{2\gamma}}\right)^\gamma-1\right)^{-1/\gamma}\\
&\triangleq \frac{f_2(s,n)}{2}.
\end{split}
\end{equation*}
Therefore,
\begin{equation*}
\Pr_{x\sim\mathcal{D}}[|w\cdot x|\le t]=\int_{-t}^t g(x)dx> \frac{f_2(s,n)}{2}\int_{-t}^t dx=f_2(s,n)t.
\end{equation*}
\end{proof}

\section{Proof of Theorem \ref{theorem: disagreement and angle}}

\noindent{\textbf{Theorem \ref{theorem: disagreement and angle}} (restated)}
\emph{Assume $\mathcal{D}$ is an isotropic $s$-concave distribution in $\R^n$. Then for any two unit vectors $u$ and $v$ in $\R^n$, we have $d_\mathcal{D}(u,v)=\Pr_{x\sim\mathcal{D}}[\sign(u\cdot x)\not=\sign(v\cdot x)]\ge f_1(s,n)\theta(u,v)$, where $f_1(s,n)=c\red{(2-2^{-3\alpha})^{-\frac{1}{\alpha}}\hspace{-0.1cm}\left[\left(\frac{1+\beta}{1+3\beta}\sqrt{3(1+\gamma)^{3/\gamma}}2^{1+1/\alpha}\right)^{\alpha}\hspace{-0.2cm}-1\right]^{-\frac{1}{\alpha}}}(1+\gamma)^{-2/\gamma}\left(\frac{1+3\beta}{3+3\beta}\right)^2$, $c$ is an absolute constant, $\alpha=\frac{s}{1+(n-2)s}$, $\beta=\frac{s}{1+(n-1)s}$, $\gamma=\frac{s}{1+ns}$.}

\begin{proof}
Consider the 2-dimensional space spanned by vectors $u$ and $v$, and let $\mathcal{D}_2$ be the marginal of distribution $\mathcal{D}$ over that space. Since $d_{\mathcal{D}}(u,v)=d_{\mathcal{D}_2}(u',v')$, where $u'$ and $v'$ are projection of $u$ and $v$, respectively, we only need to focus on the marginal distribution $\mathcal{D}_2$, which has an $\alpha$-concave density according to Theorem \ref{theorem: marginal}, and is isotropic according to Theorem \ref{theorem: marginal}.

Let $A$ be the disagreement region of $u$ and $v$ intersected with the ball of radius $d=(1+\gamma)^{-1/\gamma}\frac{1+3\beta}{3+3\beta}$ in $\R^2$. By Theorem \ref{theorem: upper and lower bounds} (a) and (d),
$
d_{\mathcal{D}}(u,v)=d_{\mathcal{D}_2}(u',v')\ge \mbox{vol}(A)\inf_{x\in A} p(x)\ge f_1(s,n)\theta(u',v')=f_1(s,n)\theta(u,v),
$
where $p(x)$ is the density of distribution $\mathcal{D}_2$.
\end{proof}

\section{Proof of Theorem \ref{theorem: disagreement outside band}}
\label{section: proof of disagreement outside band}

\noindent{\textbf{Theorem \ref{theorem: disagreement outside band}} (restated)}
\emph{Let $u$ and $v$ be two vectors in $\R^n$ and assume that $\theta(u,v)=\theta<\pi/2$. Let $\mathcal{D}$ be an isotropic $s$-concave distribution. Then for any absolute constant $c_1>0$ and any function $f_1(s,n)>0$, there exists a function $f_4(s,n)>0$ such that
\begin{equation*}
\Pr_{x\sim\mathcal{D}}[\mathrm{sign}(u\cdot x)\not=\mathrm{sign}(v\cdot x)\ \text{and}\ |v\cdot x|\ge f_4(s,n)\theta]\le c_1f_1(s,n)\theta,
\end{equation*}
where $f_4(s,n)=\frac{4\beta_1(2,\alpha)B(-1/\alpha-3,3)}{-c_1f_1(s,n)\alpha^3\beta_2(2,\alpha)^3}$, $B(\cdot,\cdot)$ is the beta function, $\alpha=s/(1+(n-2)s)$, $\beta_1(2,\alpha)$ and $\beta_2(2,\alpha)$ are given by Lemma \ref{lemma: more refined upper bound}.}

\medskip
\begin{proof}
Let $E$ be the event that we want to bound. Theorem \ref{theorem: marginal} implies that, without loss of generality, we can focus on the case when $n=2$. Then the projected distribution $\mathcal{D}_2$ has an $\alpha$-concave density, where $\alpha=\frac{s}{1+(n-2)s}$.

We first claim that each member $x$ of $E$ satisfies $\|x\|\ge f_4$. To see this, without loss of generality, we assume that $v\cdot x$ is positive. Then for any $x\in E$, $u\cdot x<0$, i.e., $\theta(u,x)\ge \pi/2$. Hence $\theta(x,v)\ge \theta(u,x)-\theta(u,v)\ge \pi/2-\theta$. Since $|v\cdot x|\ge f_4\theta$ implies $\|x\|\cos\theta(v,x)\ge f_4\theta$, we must have $\|x\|\cos(\pi/2-\theta)\ge f_4\theta$, namely, $\|x\|\ge f_4\theta/\sin(\theta)\ge f_4$. Let $\ball(r)$ denote the ball of radius $r$ centered at the origin. This implies that
\begin{equation*}
\Pr[E]=\sum_{i=1}^\infty \Pr[E\cap (\ball((i+1)f_4)-\ball(if_4))].
\end{equation*}

Denote by $f(x_1,x_2)$ the $\alpha$-concave density function of $\mathcal{D}_2$. For any term $i\ge 1$, by Lemma \ref{lemma: more refined upper bound}, we have
\begin{equation*}
\begin{split}
&\ \ \ \ \Pr[E\cap (\ball((i+1)f_4)-\ball(if_4))]\\
&=\int_{(x_1,x_2)\in \ball((i+1)f_4)-\ball(if_4)}1_E(x_1,x_2)f(x_1,x_2)dx_1dx_2\\
&\le \int_{(x_1,x_2)\in \ball((i+1)f_4)-\ball(if_4)}1_E(x_1,x_2)\beta_1(2,\alpha)(1-\alpha\beta_2(2,\alpha)\|x\|)^{1/\alpha}dx_1dx_2\\
&\le \beta_1(2,\alpha)\left(1-\alpha\beta_2(2,\alpha)if_4\right)^{1/\alpha}\int_{(x_1,x_2)\in \ball((i+1)f_4)-\ball(if_4)}1_E(x_1,x_2)dx_1dx_2\\
&\le \beta_1(2,\alpha)\left(1-\alpha\beta_2(2,\alpha)if_4\right)^{1/\alpha}\int_{(x_1,x_2)\in \ball((i+1)f_4)}1_E(x_1,x_2)dx_1dx_2.
\end{split}
\end{equation*}
Denote by $B_1$ the unit ball in $\R^2$. Notice that
\begin{equation*}
\int_{(x_1,x_2)\in \ball((i+1)f_4)}1_E(x_1,x_2)dx_1dx_2\le \text{vol}(B_1)(i+1)^2f_4^2\theta/\pi.
\end{equation*}
Thus
\begin{equation*}
\begin{split}
\Pr[E]&\le\sum_{i=1}^\infty \beta_1(2,\alpha)\left(1-\alpha\beta_2(2,\alpha)if_4\right)^{1/\alpha}\text{vol}(B_1)(i+1)^2f_4^2\theta/\pi\\
&\le \frac{4f_4^2}{\pi}\text{vol}(B_1)\beta_1(2,\alpha)\theta\sum_{i=1}^\infty (1-\alpha\beta_2(2,\alpha)if_4)^{1/\alpha}i^2\\
&\le \frac{4f_4^2}{\pi}\text{vol}(B_1)\beta_1(2,\alpha)\theta\int_0^\infty x^2(1-\alpha\beta_2(2,\alpha)f_4x)^{1/\alpha}dx\\
&=\frac{4f_4^2}{\pi}\text{vol}(B_1)\beta_1(2,\alpha) \frac{B(-1/\alpha-3,3)}{(-\alpha\beta_2(2,\alpha)f_4)^3}\times \theta\\
&=4\beta_1(2,\alpha) \frac{B(-1/\alpha-3,3)}{-\alpha^3\beta_2(2,\alpha)^3f_4}\times \theta.
\end{split}
\end{equation*}
Choosing $f_4(s,n)=\frac{4\beta_1(2,\alpha)B(-1/\alpha-3,3)}{-c_1f_1(s,n)\alpha^3\beta_2(2,\alpha)^3}$, the proof is completed.
\end{proof}

\section{Proof of Theorem \ref{theorem: 1-D variance}}
\label{section: proof of 1D variance}

Before proceeding, we first prove the following lemma which is critical to the proof of Theorem \ref{theorem: 1-D variance}.
\begin{lemma}
\label{lemma: probability of outside of margin}
For $d$ given by Theorem \ref{theorem: upper and lower bounds} (a), there exist such that for any isotropic $s$-concave distribution $\cD$, for any $a$ such that $\|a\|\le 1$ and $\|u-a\|\le r$, for any $0<t\le d$, and for any $K\ge 4$, we have
\begin{equation*}
\Pr_{X\sim \cD_{u,t}}(|a\cdot x|> K\sqrt{r^2+t^2})\le \frac{4\beta_1(2,\eta)}{f_2(s,n)\beta_2(2,\eta)}\frac{1}{\eta+1}\left(1-c\eta\beta_2(2,\eta)K\sqrt{1+\left(\frac{t}{r}\right)^2}\right)^{\frac{\eta+1}{\eta}},
\end{equation*}
where $\beta_1(2,\eta)$, $\beta_2(2,\eta)$, and $Q(\gamma)$, are given by Lemma \ref{lemma: more refined upper bound} and Theorem \ref{theorem: probability within margin}, respectively, $\eta=\frac{s}{1+(n-2)s}$, and $c$ is an absolute constant.
\end{lemma}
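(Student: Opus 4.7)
The plan is to reduce the $n$-dimensional problem to an explicit two-dimensional tail integral. First, rewrite the conditional probability as the ratio
\[
\Pr_{X\sim \cD_{u,t}}\bigl(|a\cdot x|> K\sqrt{r^2+t^2}\bigr) = \frac{\Pr_{X\sim \cD}\bigl(|a\cdot x|> K\sqrt{r^2+t^2},\ |u\cdot x|\le t\bigr)}{\Pr_{X\sim \cD}(|u\cdot x|\le t)}.
\]
Both indicator events depend only on the projection of $x$ onto the plane $\Pi\subset\R^n$ spanned by $u$ and $a$, so we may work with the marginal density $g$ of $\cD$ on $\Pi$, which by iterating Theorem \ref{theorem: marginal} is isotropic and $\eta$-concave with $\eta=s/(1+(n-2)s)$. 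Theorem \ref{theorem: probability within margin} bounds the denominator below by $f_2(s,n)\,t$, so it suffices to control the numerator.

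Next, choose coordinates on $\Pi$ with $u$ along one axis and write $x=s\,u+q\,u^{\perp}$ and $a=\alpha u+\beta u^{\perp}$. Since $\|a-u\|\le r$ we have $|\beta|\le r$ and $|\alpha|\le 1$, so on the event of interest
\[
K\sqrt{r^2+t^2}<|\alpha s+\beta q|\le|s|+r|q|\le t+r|q|,
\]
which forces $|q|\ge M:=(K\sqrt{r^2+t^2}-t)/r$. Using the elementary inequality $K\sqrt{1+\rho^2}-\rho\ge(K-1)\sqrt{1+\rho^2}$ with $\rho=t/r$, the hypothesis $K\ge 4$ gives $M\ge cK\sqrt{1+(t/r)^2}$ for an absolute constant $c$ (say $c=3/4$).

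Now apply the refined upper bound $g(x)\le \beta_1(2,\eta)\bigl(1-\eta\beta_2(2,\eta)\|x\|\bigr)^{1/\eta}$ from Lemma \ref{lemma: more refined upper bound}. Since $\sqrt{s^2+q^2}\ge|q|$ and the bounding function is monotone decreasing in its argument (recall $\eta<0$), the numerator is at most
\[
\int_{-t}^t\!ds\int_{|q|\ge M}\beta_1(2,\eta)\bigl(1-\eta\beta_2(2,\eta)|q|\bigr)^{1/\eta}dq \;=\; 4t\cdot\frac{\beta_1(2,\eta)\bigl(1-\eta\beta_2(2,\eta)M\bigr)^{(1+\eta)/\eta}}{\beta_2(2,\eta)(\eta+1)},
\]
where the closed-form antiderivative in $q$ uses $(1+\eta)/\eta<0$ so that the boundary term at infinity vanishes. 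Dividing by $f_2(s,n)\,t$, and using that a negative exponent reverses the inequality $M\ge cK\sqrt{1+(t/r)^2}$ to give $(1-\eta\beta_2(2,\eta)M)^{(1+\eta)/\eta}\le (1-c\eta\beta_2(2,\eta)K\sqrt{1+(t/r)^2})^{(1+\eta)/\eta}$, yields the claimed bound.

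The main obstacle is purely bookkeeping: one must confirm that $-1<\eta<0$ in the working regime, which follows from the standing assumption $s\ge -1/(2n+3)$, so that both $(1+\eta)/\eta<0$ and the boundary term at $q=\infty$ indeed vanishes. No new ideas beyond combining the two-dimensional marginalization with the tail density bound of Lemma \ref{lemma: more refined upper bound} are needed.
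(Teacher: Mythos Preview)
Your proposal is correct and follows essentially the same approach as the paper's proof: both write the conditional probability as a ratio, bound the denominator by $f_2(s,n)t$ via Theorem \ref{theorem: probability within margin}, pass to the two-dimensional isotropic $\eta$-concave marginal on the span of $u$ and $a$ (the paper does this by picking coordinates with $u=e_1$ and then projecting onto $(a''\cdot x,\,x_1)$), show that the event forces the $u^\perp$-component to exceed $(K-1)\sqrt{r^2+t^2}/r$, and then integrate the density bound of Lemma \ref{lemma: more refined upper bound} to get the tail. The only cosmetic differences are that a single application of Theorem \ref{theorem: marginal} with $m=n-2$ already yields $\eta$ (no iteration is needed), and your bookkeeping constant $c=3/4$ matches the paper's implicit absorption of $(K-1)/K$ into an absolute constant.
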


\medskip
\begin{proof}
Without loss of generality, we assume that $u=(1,0,...,0)$. Let $a'=(0,a_2,...,a_d)$ and $X=(x_1,x_2,...,x_d)\sim \cD_{u,t}$. So the probability that we want to bound is
\begin{equation*}
\Pr_{X\sim \cD_{u,t}}(|a\cdot x|> K\sqrt{r^2+t^2})=\frac{\Pr_{x\sim \cD}(|a\cdot x|> K\sqrt{r^2+t^2},\ |x_1|\le t)}{\Pr_{x\sim \cD}(|x_1|\le t)}.
\end{equation*}
According to Theorem \ref{theorem: probability within margin}, there is a function $Q(\gamma)$ such that the denominator obeys the following lower bound when $t\le d$:
\begin{equation*}
\Pr_{X\sim \cD}(|x_1|\le t)\ge f_2(s,n)t.
\end{equation*}
So the remainder of the proof is to bound the numerator. Note that we have
\begin{equation*}
\begin{split}
\Pr_{x\sim \cD}(|a\cdot x|> K\sqrt{r^2+t^2},\ |x_1|\le t)&\le \Pr_{x\sim \cD}(|a'\cdot x|> K\sqrt{r^2+t^2}-t,\ |x_1|\le t)\\
&\le \Pr_{x\sim \cD}(|a'\cdot x|> (K-1)\sqrt{r^2+t^2},\ |x_1|\le t).
\end{split}
\end{equation*}
Denote by $a''=\frac{a'}{\|a'\|}$. Define random variable $Y$ as $a''\cdot x$ and $Z$ as $x_1$ where $x\sim\cD$. Then the joint distribution of $Y$ and $Z$ is isotropic $\beta$-concave with $\eta=\frac{s}{1+(n-2)s}$. Let $f(y,z)$ be the density of such a distribution. Then we can bound the numerator by
\begin{equation*}
\begin{split}
4\Pr_{x\sim \cD}(a'\cdot x>(K-1)\sqrt{r^2+t^2},\ 0\le x_1\le t)&=4\Pr_{X\sim \cD}(a''\cdot x>\frac{(K-1)\sqrt{r^2+t^2}}{\|a'\|},\ 0\le x_1\le t)\\
&\le 4\int_0^t\int_{\frac{(K-1)\sqrt{r^2+t^2}}{\|a'\|}}^\infty f(y,z) dydz.
\end{split}
\end{equation*}
By Lemma \ref{lemma: more refined upper bound}, we note that
\begin{equation*}
f(y,z)\le \beta_1(2,\eta)(1-\eta\beta_2(2,\eta)\sqrt{y^2+z^2})^{1/\eta}.
\end{equation*}
Therefore, the numerator can be upper bounded by
\begin{equation}
\begin{split}
&\quad \ 4\beta_1(2,\eta)\int_0^t\int_{\frac{(K-1)\sqrt{r^2+t^2}}{\|a'\|}}^\infty (1-\eta\beta_2(2,\eta)\sqrt{y^2+z^2})^{1/\eta} dydz\\
&\le 4\beta_1(2,\eta)\int_0^t\int_{\frac{(K-1)\sqrt{r^2+t^2}}{\|a'\|}}^\infty (1-\eta\beta_2(2,\eta)y)^{1/\eta} dydz\\
&=4t\beta_1(2,\eta)\int_{\frac{(K-1)\sqrt{r^2+t^2}}{\|a'\|}}^\infty (1-\eta\beta_2(2,\eta)y)^{1/\eta} dy\\
&=\frac{4t\beta_1(2,\eta)}{\beta_2(2,\eta)}\frac{1}{\eta+1}\left(1-\eta\beta_2(2,\eta)\frac{(K-1)\sqrt{r^2+t^2}}{\|a'\|}\right)^{\frac{\eta+1}{\eta}}.
\end{split}
\end{equation}
Note that $\|a'\|\le r$. Finally, we have
\begin{equation*}
\begin{split}
\Pr_{X\sim \cD_{u,t}}(|a\cdot x|> K\sqrt{r^2+t^2})&\le \frac{4\beta_1(2,\eta)}{f_2(s,n)\beta_2(2,\eta)}\frac{1}{\eta+1}\left(1-\eta\beta_2(2,\eta)\frac{(K-1)\sqrt{r^2+t^2}}{r}\right)^{\frac{\eta+1}{\eta}}\\
&\le \frac{4\beta_1(2,\eta)}{f_2(s,n)\beta_2(2,\eta)}\frac{1}{\eta+1}\left(1-c\eta\beta_2(2,\eta)\frac{K\sqrt{r^2+t^2}}{r}\right)^{\frac{\eta+1}{\eta}},
\end{split}
\end{equation*}
for an absolute constant $c$.
\end{proof}

\noindent{\textbf{Theorem \ref{theorem: 1-D variance}} (restated)}
\emph{Assume that $\cD$ is isotropic $s$-concave. For $d$ given by Theorem \ref{theorem: upper and lower bounds} (a), there is an absolute $C_0$ such that for all $0<t\le d$ and for all $a$ such that $\|u-a\|\le r$ and $\|a\|\le 1$,
$
\E_{X\sim \cD_{u,t}}[(a\cdot x)^2]\le f_5(s,n)(r^2+t^2),
$
where $f_5(s,n)=16+C_0\frac{8\beta_1(2,\eta)B(-1/\eta-3,2)}{f_2(s,n)\beta_2(2,\eta)^3(\eta+1)\eta^2}$, $(\beta_1(2,\eta), \beta_2(2,\eta))$ and $f_2(s,n)$ are given by Lemma \ref{lemma: more refined upper bound} and Theorem \ref{theorem: probability within margin}, respectively, and $\eta=\frac{s}{1+(n-2)s}$.}

\medskip
\begin{proof}
Denote by $z=\sqrt{r^2+t^2}$. Then we have
\begin{equation}
\begin{split}
\E_{x\sim \cD_{u,t}}[(a\cdot x)^2]&=\int_0^\infty \Pr_{x\sim \cD_{u,t}}[(a\cdot x)^2\ge z]dz\\
&\le 16z^2+\int_{16z^2}^\infty \Pr_{x\sim \cD_{u,t}}[(a\cdot x)^2\ge z]dz\\
&\le 16z^2+\frac{4\beta_1(2,\eta)}{f_2(s,n)\beta_2(2,\eta)}\frac{1}{\eta+1}\int_0^\infty\left(1-\eta\beta_2(2,\eta)\frac{c\sqrt{z}}{r}\right)^{\frac{\eta+1}{\eta}}dz\\
&=16z^2+\frac{8\beta_1(2,\eta)}{f_2(s,n)\beta_2(2,\eta)}\frac{1}{\eta+1}\int_0^\infty y\left(1-\eta\beta_2(2,\eta)\frac{cy}{r}\right)^{\frac{\eta+1}{\eta}}dy\\
&=16z^2+\frac{8\beta_1(2,\eta)}{f_2(s,n)\beta_2(2,\eta)}\frac{1}{\eta+1}C_0B(-1/\eta-3,2)\frac{r^2}{\eta^2\beta_2(2,\eta)^2}\\
&=\left(16+C_0\frac{8\beta_1(2,\eta)B(-1/\eta-3,2)}{f_2(s,n)\beta_2(2,\eta)^3(\eta+1)\eta^2}\right)r^2+16t^2\\
&\le \left(16+C_0\frac{8\beta_1(2,\eta)B(-1/\eta-3,2)}{f_2(s,n)\beta_2(2,\eta)^3(\eta+1)\eta^2}\right)(r^2+t^2),
\end{split}
\end{equation}
where $c$, $C_0$ are absolute constants.
\end{proof}

\section{Proof of Theorem \ref{theorem: margin-based active learning realizable case}}
\label{section: proof of margin-based active learning realizable case}

\noindent{\textbf{Theorem \ref{theorem: margin-based active learning realizable case}} (restated)}
\emph{In the realizable case, let $\cD$ be an isotropic $s$-concave distribution in $\R^n$. There exist constants $C$ and $c$ such that for any $0<\epsilon<1/4$ and $\delta>0$, Algorithm \ref{algorithm: margin based active learning realizable case} with $b_k=\min\{\Theta(2^{-k}f_4f_1^{-1}),d\}$ and $m_k=C\left(\frac{f_3b_{k-1}}{2^{-k}}\left(n\log \frac{f_3b_{k-1}}{2^{-k}}+\log \frac{1+s-k}{\delta}\right)\right)$, after $T=\lceil\log \frac{1}{c\epsilon}\rceil$ iterations, outputs a linear separator of error at most $\epsilon$ with probability at least $1-\delta$.}

\medskip
\begin{proof}
We will show by induction that for all $k\le s$, with probability at least $1-\frac{\delta}{2}\sum_{i<k}\frac{1}{(1+s-i)^2}$, any $w$ that is consistent with the examples in $W(k)$, e.g. $w_k$, has error at most $c2^{-k}$.

The case of $k=1$ follows from the VC theory (Theorem \ref{theorem: vc theory}). Assume now that the claim is true for $k-1$. We now consider the $k$th iteration. Denote by $S_{k-1}=\{x: |w_{k-1}\cdot x|\le b_{k-1}\}$ and $\bar S_{k-1}=\{x: |w_{k-1}\cdot x|> b_{k-1}\}$. By the induction hypothesis, with probability at least $1-\frac{\delta}{2}\sum_{i<k-1}\frac{1}{(1+s-i)^2}$, any $w$ that is consistent with $W(k-1)$, including $w_{k-1}$, has error at most $c2^{-(k-1)}$. For such a hypothesis $w$ and $w_{k-1}$, by Theorem \ref{theorem: disagreement and angle}, we have $\theta(w,w^*)\le cf_1^{-1}2^{-(k-1)}$ and $\theta(w_{k-1},w^*)\le cf_1^{-1}2^{-(k-1)}$. Thus $\theta(w_{k-1},w)\le \theta(w_{k-1},w^*)+\theta(w^*,w)\le 4cf_1^{-1} 2^{-k}$. So by Theorem \ref{theorem: disagreement outside band}, there is a choice of band width $b_{k-1}=\min\{\Theta(f_4f_1^{-1}2^{-k}),d\}$ such that $\Pr(\sign(w\cdot x)\not=\sign(w_{k-1}\cdot x),x\in \bar S_{k-1})\le \frac{c2^{-k}}{4}$ and $\Pr[\sign(w_{k-1}\cdot x)\not=\sign(w^*\cdot x),x\in \bar S_{k-1}]\le \frac{c2^{-k}}{4}$. Therefore,
$\Pr[\sign(w\cdot x)\not=\sign(w^*\cdot x), x\in\bar S_{k-1}]\le \frac{c2^{-k}}{2}$.

We now consider the case when $x\in S_{k-1}$. By Algorithm \ref{algorithm: margin based active learning realizable case}, we label $m_k$ data points in $S_{k-1}$ at the $(k-1)$th iteration. So according to the VC theory (Theorem \ref{theorem: vc theory}), with probability at least $1-\delta/(4(1+s-k)^2)$, for all $w$ that is consistent with the examples in $W(k)$,
$
\err(w|S_{k-1})=\Pr[\sign(w\cdot x)\not=\sign(w^*\cdot x)|x\in S_{k-1}]\le \frac{c2^{-k}}{2b_{k-1}f_3}
$.
Finally, note that Theorem \ref{theorem: probability within margin} implies that $\Pr(S_{k-1})\le f_3b_{k-1}$. So we have
$\err(w)=\Pr[\sign(w\cdot x)\not=\sign(w^*\cdot x), x\in\bar S_{k-1}]+\Pr[\sign(w\cdot x)\not=\sign(w^*\cdot x), x\in S_{k-1}]\le \frac{c2^{-k}}{2}+\frac{c2^{-k}}{2b_{k-1}f_3}\times f_3b_{k-1}=c2^{-k}$.
The proof is completed.
\end{proof}

\section{Proof of Theorem \ref{theorem: margin-based active learning adversarial noise}}
\label{appendix: adversarial noise induction}

Before proceeding, let $\ell_\tau(w,x,y)=\max\{0,1-y(w\cdot x)/\tau\}$, $\ell_\tau(w,T)\hspace{-0.1cm}=\hspace{-0.1cm}\frac{1}{|T|}\sum_{(x,y)\in T}\hspace{-0.1cm}\ell_\tau(w,x,y)$, and $L_\tau(w,\cD)=\E_{x\sim\cD}\ell_\tau(w,x,\sign(w^*\cdot x))$. Our analysis will involve the distribution $\cD_{w,t}$ obtained by conditioning $\cD$ on membership in the band, namely, the set $\{x:|w\cdot x|\le t\}$.

\begin{lemma}
\label{lemma: upper bound of expected loss}
$L_{\tau_k}(w^*,\cD_{w_{k-1},b_{k-1}})\le \kappa/6$, if $\kappa\ge \frac{6f_3\tau_k}{f_2b_{k-1}}$ and $b_{k-1}\le d$.
\end{lemma}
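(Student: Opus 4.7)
The plan is to bound the expected hinge loss by the probability mass of a band, and then apply Theorem \ref{theorem: probability within margin} in both directions. Since the label used in computing $L_{\tau_k}(w^*,\cD_{w_{k-1},b_{k-1}})$ is $y=\sign(w^*\cdot x)$, the product $y(w^*\cdot x)$ equals $|w^*\cdot x|\ge 0$. Hence
$\ell_{\tau_k}(w^*,x,y)=\max\{0,1-|w^*\cdot x|/\tau_k\}\le \mathbf{1}[|w^*\cdot x|<\tau_k]$,
so taking expectations under $\cD_{w_{k-1},b_{k-1}}$ reduces the problem to estimating the conditional probability $\Pr_{x\sim\cD}[|w^*\cdot x|<\tau_k \mid |w_{k-1}\cdot x|\le b_{k-1}]$.

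Next I would write this conditional probability as a ratio. The numerator $\Pr_{x\sim\cD}[|w^*\cdot x|<\tau_k,\ |w_{k-1}\cdot x|\le b_{k-1}]$ is bounded above by $\Pr_{x\sim\cD}[|w^*\cdot x|<\tau_k]$, which by the upper half of Theorem \ref{theorem: probability within margin} (applied to the unit vector $w^*$) is at most $f_3\tau_k$. For the denominator, the hypothesis $b_{k-1}\le d$ lets me invoke the lower half of Theorem \ref{theorem: probability within margin} for $w_{k-1}$, giving $\Pr_{x\sim\cD}[|w_{k-1}\cdot x|\le b_{k-1}]\ge f_2 b_{k-1}$.

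Combining these yields $L_{\tau_k}(w^*,\cD_{w_{k-1},b_{k-1}})\le f_3\tau_k/(f_2 b_{k-1})$, and the assumed inequality $\kappa\ge 6f_3\tau_k/(f_2 b_{k-1})$ then immediately gives the desired bound $\kappa/6$. There is no substantive obstacle here; the only subtleties are noticing the label identity $y(w^*\cdot x)=|w^*\cdot x|$ that renders the hinge loss at most the band-indicator, and checking that the lower-bound hypothesis $b_{k-1}\le d$ in Theorem \ref{theorem: probability within margin} is exactly what is assumed. The step that would require the most care in a fully detailed write-up is justifying the direction of the inequalities in the band-probability bounds, to make sure one uses the upper bound for the $w^*$ event and the lower bound for the conditioning event (not vice versa).
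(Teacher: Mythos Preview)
Your proposal is correct and follows essentially the same argument as the paper: bound the hinge loss by the indicator of the $\tau_k$-band around $w^*$, write the conditional probability as a ratio, and apply the upper and lower bounds of Theorem \ref{theorem: probability within margin} to the numerator and denominator respectively. The paper writes the denominator bound as $f_2\min\{b_{k-1},d\}$ before using $b_{k-1}\le d$, whereas you use $b_{k-1}\le d$ immediately, but this is a cosmetic difference only.
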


\begin{proof}
Note that $y(w^*\cdot x)$ cannot be negative on any clean example $(x,y)$. So we have
$\ell(w^*,x,y)=\max\{0,1-y(w^*\cdot x)/\tau_k\}\le 1$ and $w^*$ pays a non-zero hinge loss only inside the margin $\{x: |w^*\cdot x|\le \tau_k\}$. Thus $L_{\tau_k}(w^*,\cD_{w_{k-1},b_{k-1}})\le \Pr_{\cD_{w_{k-1},b_{k-1}}}(|w^*\cdot x|\le \tau_k)=\Pr_\cD(|w^*\cdot x|\le \tau_k,|w_{k-1}\cdot x|\le b_{k-1})/\Pr_\cD(|w_{k-1}\cdot x|\le b_{k-1})$. Notice that the numerator can be bounded by $\Pr_\cD(|w^*\cdot x|\le \tau_k)\le f_3\tau_k$ according to Theorem \ref{theorem: probability within margin}. As for the denominator, by Theorem \ref{theorem: probability within margin} we have $\Pr_\cD(|w_{k-1}\cdot x|\le b_{k-1})\ge f_2\min\{b_{k-1},d\}$. So we have $L_{\tau_k}(w^*,\cD_{w_{k-1},b_{k-1}})\le f_3\tau_k/(f_2\min\{b_{k-1},d\})\le \kappa/6$.
\end{proof}

Let $\widetilde \cP_k$ be the noisy distribution of $(x,y)$ where $x\sim \cD_{w_{k-1},b_{k-1}}$ and $y$ obeys the adversarial noise model, and denote by $\cP_k$ the clean distribution of $(x,y)$ where $x\sim \cD_{w_{k-1},b_{k-1}}$ and $y=\sign(w^*\cdot x)$. The following key lemma bounds the distance of expected loss w.r.t. the distributions $\widetilde \cP_k$ and $\cP_k$.
\begin{lemma}
\label{lemma: distance between clean and noisy distribution}
There exists an absolute constant $c$ such that for any $w\in \ball(w_{k-1},r_k)$, we have that
$
\left|\E_{(x,y)\sim \cP_k}\ell(w,x,y)-\E_{(x,y)\sim \widetilde{\cP}_k}\ell(w,x,y)\right|\le \frac{2}{\tau_k}\sqrt{\frac{\eta f_5(r_k^2+b_{k-1}^2)}{f_2 b_{k-1}}}.
$
\end{lemma}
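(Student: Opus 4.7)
The plan is to bound the difference in expected hinge losses by a Cauchy--Schwarz argument that separates the ``disagreement mass'' (where clean and noisy labels disagree) from the magnitude of the hinge-loss gap on a single example.

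First, I would observe that since $\ell_{\tau_k}(w,x,y) = \max\{0, 1 - y(w\cdot x)/\tau_k\}$, flipping $y$ changes the hinge loss by at most
\[
\bigl|\ell_{\tau_k}(w,x,y) - \ell_{\tau_k}(w,x,-y)\bigr| \;\le\; \frac{2|w\cdot x|}{\tau_k}.
\]
Because $\cP_k$ and $\widetilde\cP_k$ share the same marginal $\cD_{w_{k-1},b_{k-1}}$ on $x$, coupling them pointwise in $x$ gives
\[
\Bigl|\E_{\cP_k}\ell_{\tau_k} - \E_{\widetilde\cP_k}\ell_{\tau_k}\Bigr| \;\le\; \E_{x\sim\cD_{w_{k-1},b_{k-1}}}\!\left[\mathbf{1}[y\ne\sign(w^*\!\cdot x)]\,\frac{2|w\cdot x|}{\tau_k}\right],
\]
where $y$ denotes the (possibly corrupted) label from $\widetilde\cP_k$.

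Next, I would apply Cauchy--Schwarz to the right-hand side, splitting it into the probability of label disagreement inside the band and the second moment of $w\cdot x$ under $\cD_{w_{k-1},b_{k-1}}$:
\[
\frac{2}{\tau_k}\sqrt{\Pr_{\cD_{w_{k-1},b_{k-1}}}\!\bigl[y\ne\sign(w^*\!\cdot x)\bigr]\;\cdot\;\E_{x\sim\cD_{w_{k-1},b_{k-1}}}\!\bigl[(w\cdot x)^2\bigr]}.
\]
The disagreement probability, conditioned on the band, is at most $\eta / \Pr_\cD[|w_{k-1}\!\cdot x|\le b_{k-1}]$, since at most an $\eta$ fraction of labels is corrupted under $\widetilde\cP$. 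By Theorem \ref{theorem: probability within margin} (using $b_{k-1}\le d$), this denominator is at least $f_2 b_{k-1}$, so the disagreement factor is at most $\eta/(f_2 b_{k-1})$. For the second-moment factor, Theorem \ref{theorem: 1-D variance} applied with $u=w_{k-1}$, $a=w$, $t=b_{k-1}$, $r=r_k$ (using $w\in\ball(w_{k-1},r_k)$ and $\|w\|\le 1$) gives $\E[(w\cdot x)^2]\le f_5(r_k^2+b_{k-1}^2)$. Combining these two bounds yields exactly $\frac{2}{\tau_k}\sqrt{\eta f_5(r_k^2+b_{k-1}^2)/(f_2 b_{k-1})}$.

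The main obstacle is a bookkeeping subtlety rather than a deep one: one must verify that the hypotheses of Theorem \ref{theorem: 1-D variance} are actually met for the relevant $w$ (in particular, the norm bound $\|w\|\le 1$, which follows from the algorithm's projection step in Algorithm \ref{algorithm: margin based active learning adversarial noise}), and one must be careful that the bound $\Pr[\text{band}]\ge f_2 b_{k-1}$ applies, which requires $b_{k-1}\le d$ in Theorem \ref{theorem: probability within margin}. Both are ensured by the choice of parameters $b_k=\min\{\Theta(2^{-k}f_4 f_1^{-1}),d\}$ in Theorem \ref{theorem: margin-based active learning adversarial noise}, so the proof reduces to the Cauchy--Schwarz computation above.
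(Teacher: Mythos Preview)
Your proposal is correct and follows essentially the same approach as the paper: bound the per-example hinge-loss change by $2|w\cdot x|/\tau_k$, apply Cauchy--Schwarz to split into the noise probability inside the band and $\E[(w\cdot x)^2]$, then invoke Theorem~\ref{theorem: probability within margin} for the former and Theorem~\ref{theorem: 1-D variance} for the latter. The paper's written proof actually has $w^*$ in place of $w$ in the intermediate steps (a typo, since the statement is for general $w\in\ball(w_{k-1},r_k)$), and your version is cleaner on that point; your explicit check of the side conditions ($\|w\|\le 1$ and $b_{k-1}\le d$) is also a nice addition.
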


\begin{proof}
Denote by $N$ the set of noisy examples. Let $\widetilde \cP$ be the noisy distribution of $(x,y)$ where $x\sim \cD$ and $y$ obeys the adversarial noise model. We have
\begin{equation*}
\begin{split}
&  \left| \E_{(x,y)\sim \widetilde \cP_k} [\ell_{\tau_k}(w^*, x, y)] - \E_{(x,y)\sim  \cP_k} [\ell_{\tau_k}(w^*, x, y)] \right| \\
& \qquad \leq
\left| \E_{(x,y)\sim  \widetilde \cP_k} \left[ \mathbf{1}_{x\in N} \left( \ell_{\tau_k}(w^*, x, y) - \ell_{\tau_k}(w^*,  x, \sign(w^*\cdot x) )\right) \right] \right| \\
& \qquad \leq 2 ~ \E_{(x,y)\sim  \widetilde \cP_k} \left[ \mathbf{1}_{x\in N} \left(\frac{|w^*\cdot x|}{\tau_k} \right) \right] \\
&\qquad \leq
\frac{2}{\tau_k} \sqrt{\Pr_{(x,y)\sim \widetilde \cP_k}[x\in N]}  \times  \sqrt{\E_{(x,y)\sim \widetilde \cP_k}[(w^*\cdot x)^2]} \qquad \text{(By Cauchy Schwarz)}\\
&\qquad \le \frac{2}{\tau_k}\sqrt{\frac{\eta}{\Pr_{\widetilde \cP}(|w_{k-1}\cdot x|\le b_{k-1})}}\times \sqrt{f_5(r_k^2+b_{k-1}^2)} \qquad \text{(By Theorem \ref{theorem: 1-D variance})}\\
&\qquad \le \frac{2}{\tau_k}\sqrt{\frac{\eta f_5(r_k^2+b_{k-1}^2)}{f_2 b_{k-1}}}. \qquad \text{(By Theorem \ref{theorem: probability within margin})}\\
\end{split}
\end{equation*}
\end{proof}

\begin{lemma}
\label{lemma: concentration}
Denote by $W$ the samples drawn from the noisy distribution $\widetilde \cP_k$ and suppose that
$
|W|=O\left( \frac{[b_{k-1}s+\tau_k(1+ns)\sqrt{n}[1-(\delta/(\sqrt{n}(k+k^2)))^{s/(1+ns)}]+\tau_ks]^2}{\kappa^2\tau_k^2s^2}\left(n+\log \frac{k+k^2}{\delta}\right)\right).
$
Then with probability at least $1-\frac{\delta}{k+k^2}$, for all $w\in \ball(w_{k-1},r_k)$, we have
\begin{equation*}
\left|\E_{(x,y)\sim \widetilde\cP_k}\ell(w,x,y)-\ell(w,W)\right|\le\kappa/16.
\end{equation*}
\end{lemma}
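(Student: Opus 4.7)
The plan is a truncation-plus-uniform-convergence argument on the ball $\ball(w_{k-1}, r_k)$, where the truncation is needed because under the fat-tailed distribution $\cD$ the hinge loss can be arbitrarily large. First I would observe that for every $w \in \ball(w_{k-1}, r_k)$ with $\|w\| \le 1$ and every $(x,y)$ in the support of $\widetilde\cP_k$, the loss satisfies $\ell_{\tau_k}(w,x,y) \le 1 + |w \cdot x|/\tau_k$. Splitting $w \cdot x = w_{k-1} \cdot x + (w - w_{k-1}) \cdot x$, using $|w_{k-1} \cdot x| \le b_{k-1}$ from the band conditioning, and bounding $|(w - w_{k-1}) \cdot x| \le r_k \|x\| \le \|x\|$ (since $r_k \le 1$) yields $|w \cdot x| \le b_{k-1} + \|x\|$. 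Hence on the truncation event $\{\|x\| \le \sqrt{n}\,T\}$ the hinge loss is bounded by $M := 1 + (b_{k-1} + \sqrt{n}\,T)/\tau_k$.

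Next I would choose $T$ by inverting Theorem \ref{theorem: tail probability}: solving $[1 - csT/(1+ns)]^{(1+ns)/s} = \delta/(\sqrt{n}(k+k^2))$ gives $T = \frac{1+ns}{cs}\bigl[1 - (\delta/(\sqrt{n}(k+k^2)))^{s/(1+ns)}\bigr]$, so that the unconditional tail probability is at most $\delta/(\sqrt{n}(k+k^2))$. Dividing by the band measure $f_2 \min\{b_{k-1}, d\}$ from Theorem \ref{theorem: probability within margin} controls the conditional tail under $\widetilde\cP_k$; I use this twice: (i) a union bound over the $|W|$ samples keeps every sample inside the truncation event with probability $\ge 1 - \delta/(2(k+k^2))$, and (ii) Cauchy--Schwarz combined with the second-moment bound from Theorem \ref{theorem: 1-D variance} shows the truncation bias $\E_{\widetilde\cP_k}[\ell_{\tau_k}(w,x,y)\,\mathbf{1}\{\|x\| > \sqrt{n}T\}]$ is at most $\kappa/32$ uniformly in $w$. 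With this choice, one checks that $M \tau_k s = \tau_k s + b_{k-1} s + \sqrt{n}(1+ns)\bigl[1 - (\delta/(\sqrt{n}(k+k^2)))^{s/(1+ns)}\bigr]/c$, which is precisely (up to the absolute constant hidden in $O(\cdot)$) the bracket in the statement.

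Finally, on the good event the truncated loss class $\bigl\{\min\{\ell_{\tau_k}(w, \cdot, \cdot),\, M\} : w \in \ball(w_{k-1}, r_k)\bigr\}$ consists of $1/\tau_k$-Lipschitz functions of a single linear functional, so it has pseudo-dimension $O(n)$ and range $[0,M]$. Standard covering-number / pseudo-dimension uniform-convergence bounds then give that with probability $\ge 1 - \delta/(2(k+k^2))$, $\sup_{w} |\E_{\widetilde\cP_k} \ell(w,\cdot) - \ell(w,W)| \le \kappa/32$ whenever $|W| \ge C (M/\kappa)^2 (n + \log((k+k^2)/\delta))$ for an absolute $C$; adding the $\kappa/32$ truncation bias and a final union bound over the two failure events produces the claimed $\kappa/16$ concentration.

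The main obstacle is calibrating $T$ so that both (a) the tail bias stays $\le \kappa/32$ and (b) the range $M$ blows up only as the $(1+ns)\sqrt{n}/s$ factor in the statement, because these two requirements pull in opposite directions as $s \to 0^-$. It is precisely this balance that forces the working range $-1/(2n+3) \le s \le 0$: for smaller $s$ the tail of $\cD$ would decay too slowly for a choice of $T$ achieving (a) to still satisfy (b), and the stated polynomial sample complexity would no longer hold.
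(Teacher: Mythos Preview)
Your plan is essentially the paper's: bound the hinge loss on the band via $|w\cdot x|\le |w_{k-1}\cdot x|+\|w-w_{k-1}\|\,\|x\|$, control $\|x\|$ with the tail bound of Theorem~\ref{theorem: tail probability}, and then invoke the pseudo-dimension uniform-convergence result (Theorem~\ref{theorem: pseudo-dimension argument}) with the resulting range. The paper's argument is in fact terser than yours: it only takes a union bound over the $|W|$ samples to cap $\max_{x\in W}\|x\|$ and then applies Theorem~\ref{theorem: pseudo-dimension argument} directly, without isolating or bounding any truncation bias on the expectation side. Your explicit Cauchy--Schwarz step with Theorem~\ref{theorem: 1-D variance} is a more careful treatment of that gap.

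One genuine slip: in passing from $|(w-w_{k-1})\cdot x|\le r_k\|x\|$ to $\|x\|$ you throw away the factor $r_k$, and then assert that the resulting $M\tau_k s=\tau_k s+b_{k-1}s+\Theta\bigl(\sqrt{n}(1+ns)[1-(\cdot)^{s/(1+ns)}]\bigr)$ matches the bracket in the statement ``up to an absolute constant.'' It does not: the stated bracket carries an extra factor $\tau_k$ on that last term, coming from the paper's use of $\|w-w_{k-1}\|\le r_k=O(\tau_k)$ (this relation is part of the parameter choices in Theorem~\ref{theorem: margin-based active learning adversarial noise}). Since $\tau_k=\Theta(2^{-k})$ is not an absolute constant, your bound on $|W|$ is a factor $\Theta(\tau_k^{-2})$ larger than claimed. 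The fix is simply to keep $r_k$ and use $r_k=O(\tau_k)$ rather than $r_k\le 1$.
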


\begin{proof}
To establish the lemma, we apply some standard VC tools (Theorem \ref{theorem: pseudo-dimension argument}). Note that the pseudo-dimension of $\{\ell(w,\cdot):w\in\R^n\}$ is at most $n$~\cite{awasthi2014power}. To use Theorem \ref{theorem: pseudo-dimension argument}, we first provide the upper bound on the loss. On one hand, note that
\begin{equation*}
\begin{split}
\ell(w,x,y)&\le 1+\frac{|w\cdot x|}{\tau_k}\le 1+\frac{|w_{k-1}\cdot x|+\|w-w_{k-1}\|\|x\|}{\tau_k}\\
&\le 1+\frac{b_{k-1}+\tau_k\|x\|}{\tau_k}.
\end{split}
\end{equation*}
On the other hand, by Theorem \ref{theorem: tail probability} and the union bound, with probability at least $1-\frac{\delta}{k+k^2}$, we have that
$
\max_{x\in W} \|x\|\le C\frac{(1+ns)\sqrt{n}}{s}\left\{1-\left[\frac{\delta}{6(k+k^2)|W|}\right]^{s/(1+ns)}\right\},
$
for an absolute constant $C$. The conclusion then follows from Theorem \ref{theorem: pseudo-dimension argument}.
\end{proof}

\begin{lemma}
\label{lemma: adversarial noise induction}
Let $k\le\lceil\log (1/(c\epsilon))\rceil$ where $c$ is an absolute constant. If $\kappa=\max\left\{\frac{f_3\tau_k}{f_2\min\{b_{k-1},d\}},\frac{b_{k-1}\sqrt{f_5}}{\tau_k\sqrt{f_2}}\right\}$, $r_k\le O(b_{k-1})$, $\eta\le O(b_{k-1})$, $m_k=O\left( \frac{[b_{k-1}s+\tau_k(1+ns)\sqrt{n}[1-(\delta/(k+k^2))^{s/(1+ns)}]+\tau_ks]^2}{\kappa^2\tau_k^2s^2}\left(n+\log \frac{k+k^2}{\delta}\right)\right)$, and $b_{k-1}\le d$, then with probability at least $1-\frac{\delta}{k+k^2}$, we have $\err_{\cD_{w_{k-1},b_{k-1}}}(w_k)\le \kappa$.
\end{lemma}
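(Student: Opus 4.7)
\begin{proofoutline}
The plan is to bound the 0/1 error of $w_k$ on the conditional distribution $\cD_{w_{k-1},b_{k-1}}$ by the clean expected hinge loss of $v_k$, and then bound that hinge loss via a standard chain of inequalities that passes through the noisy empirical loss of $w^*$. Since $w_k = v_k/\|v_k\|$, the 0/1 errors of $w_k$ and $v_k$ coincide, and the clean hinge loss dominates the 0/1 loss pointwise, so the first reduction gives $\err_{\cD_{w_{k-1},b_{k-1}}}(w_k) \le \E_{(x,y)\sim\cP_k}[\ell_{\tau_k}(v_k,x,y)]$.

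Next, I would assume by induction that $w^* \in \ball(w_{k-1}, r_k)$ (this comes from the outer induction giving $\theta(w^*, w_{k-1}) = O(2^{-(k-1)})$ and the choice $r_k = O(b_{k-1}) = O(2^{-k})$), so $w^*$ is a feasible competitor inside the constraint set used by the optimization step. The core chain of inequalities is then
\begin{align*}
\E_{\cP_k}[\ell_{\tau_k}(v_k,\cdot)]
& \le \E_{\widetilde\cP_k}[\ell_{\tau_k}(v_k,\cdot)] + \tfrac{\kappa}{16}
  && \text{(Lemma \ref{lemma: distance between clean and noisy distribution})} \\
& \le \ell_{\tau_k}(v_k, W) + \tfrac{\kappa}{16} + \tfrac{\kappa}{16}
  && \text{(Lemma \ref{lemma: concentration})} \\
& \le \ell_{\tau_k}(w^*, W) + \tfrac{\kappa}{8} + \tfrac{\kappa}{16} + \tfrac{\kappa}{16}
  && \text{(Algorithm's approximate minimization)} \\
& \le \E_{\widetilde\cP_k}[\ell_{\tau_k}(w^*,\cdot)] + \tfrac{\kappa}{8} + \tfrac{\kappa}{16}\cdot 3
  && \text{(Lemma \ref{lemma: concentration})} \\
& \le \E_{\cP_k}[\ell_{\tau_k}(w^*,\cdot)] + \tfrac{\kappa}{8} + \tfrac{\kappa}{16}\cdot 4
  && \text{(Lemma \ref{lemma: distance between clean and noisy distribution})} \\
& \le \tfrac{\kappa}{6} + \tfrac{\kappa}{8} + \tfrac{\kappa}{4}
  && \text{(Lemma \ref{lemma: upper bound of expected loss})} \\
& \le \kappa,
\end{align*}
since $\tfrac{1}{6}+\tfrac{1}{8}+\tfrac{1}{4} < 1$. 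A union bound over the two applications of the concentration lemma absorbs constants in the choice of $m_k$, so the overall failure probability is at most $\delta/(k+k^2)$.

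The main work is verifying the two parameter hypotheses that make each slack term at most a constant fraction of $\kappa$. For Lemma \ref{lemma: upper bound of expected loss}, the hypothesis $\kappa \ge f_3\tau_k/(f_2 \min\{b_{k-1},d\})$ (combined with $b_{k-1}\le d$) gives $L_{\tau_k}(w^*, \cD_{w_{k-1},b_{k-1}}) \le \kappa/6$ directly. For Lemma \ref{lemma: distance between clean and noisy distribution}, substituting $r_k = O(b_{k-1})$ and $\eta = O(b_{k-1})$ into the bound $\tfrac{2}{\tau_k}\sqrt{\eta f_5(r_k^2+b_{k-1}^2)/(f_2 b_{k-1})}$ yields $O(b_{k-1}\sqrt{f_5}/(\tau_k\sqrt{f_2}))$, which is $O(\kappa)$ by the second clause of the hypothesis on $\kappa$; tuning the hidden absolute constants in the $O(\cdot)$'s produces the $\kappa/16$ slack. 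For Lemma \ref{lemma: concentration}, the stated sample size $m_k$ is exactly what the pseudo-dimension bound (together with the $s$-concave tail bound from Theorem \ref{theorem: tail probability} used to control $\|x\|$ on $W$) requires to achieve $\kappa/16$ uniform concentration over $\ball(w_{k-1}, r_k)$.

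The main obstacle is the bookkeeping around the tails of the $s$-concave distribution inside the band, which inflates the $\|x\|$ factor entering the hinge-loss range in the concentration step, since $s$-concave distributions are fat-tailed (Theorem \ref{theorem: tail probability}); this is precisely why $m_k$ depends on $(1+ns)$ and the awkward $1-(\delta/\cdot)^{s/(1+ns)}$ factor, rather than on a log factor as in the log-concave case. Everything else is a routine combination of the three preparatory lemmas with the inductive hypothesis that $\theta(w_{k-1}, w^*) = O(2^{-(k-1)})$ so that $w^* \in \ball(w_{k-1}, r_k)$.
\end{proofoutline}
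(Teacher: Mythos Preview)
Your proposal is correct and follows essentially the same route as the paper: reduce the 0/1 error of $w_k=v_k/\|v_k\|$ to the clean hinge loss of $v_k$, then run the chain $\cP_k\to\widetilde\cP_k\to W\to w^*\to\widetilde\cP_k\to\cP_k$ using Lemmas \ref{lemma: distance between clean and noisy distribution}, \ref{lemma: concentration}, the algorithm's $\kappa/8$-approximate minimization, and finally Lemma \ref{lemma: upper bound of expected loss}. The only cosmetic difference is that the paper carries the noise term $\tfrac{2}{\tau_k}\sqrt{\eta f_5(r_k^2+b_{k-1}^2)/(f_2 b_{k-1})}$ explicitly through the chain and bounds it by $\kappa/2$ at the very end (using $\kappa\ge b_{k-1}\sqrt{f_5}/(\tau_k\sqrt{f_2})$, $r_k=O(b_{k-1})$, $\eta=O(b_{k-1})$), whereas you absorb it into $\kappa/16$ up front; both bookkeepings are fine. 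One nit: no union bound over ``two applications'' of Lemma \ref{lemma: concentration} is needed, since that lemma already gives a single high-probability event uniform over all $w\in\ball(w_{k-1},r_k)$, covering both $v_k$ and $w^*$ simultaneously.
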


\medskip
\begin{proof}
With probability at least $1-\frac{\delta}{k+k^2}$, we have
\begin{equation*}
\begin{split}
\err_{\cD_{w_{k-1},b_{k-1}}}(w_k)&=\err_{\cD_{w_{k-1},b_{k-1}}}(v_k)\\
&\le \E_{(x,y)\sim \cP_k}\ell(v_k,x,y)\\
&\le \E_{(x,y)\sim \widetilde \cP_k}\ell(v_k,x,y)+\frac{2}{\tau_k}\sqrt{\frac{\eta f_5(r_k^2+b_{k-1}^2)}{f_2 b_{k-1}}} \qquad\text{(By Lemma \ref{lemma: distance between clean and noisy distribution})}\\
&\le \ell(v_k,W)+\frac{2}{\tau_k}\sqrt{\frac{\eta f_5(r_k^2+b_{k-1}^2)}{f_2 b_{k-1}}}+\frac{\kappa}{16} \qquad\text{(By Lemma \ref{lemma: concentration})}\\
&\le \ell(w^*,W)+\frac{4}{\tau_k}\sqrt{\frac{\eta f_5(r_k^2+b_{k-1}^2)}{f_2 b_{k-1}}}+\frac{\kappa}{8}\qquad \text{(Since $\|v_k\|\ge 1/2$)}\\
&\le \E_{(x,y)\sim \widetilde \cP_k}\ell(w^*,x,y)+\frac{4}{\tau_k}\sqrt{\frac{\eta f_5(r_k^2+b_{k-1}^2)}{f_2 b_{k-1}}}+\frac{\kappa}{4} \qquad\text{(By Lemma \ref{lemma: concentration})}\\
&\le \E_{(x,y)\sim \cP_k}\ell(w^*,x,y)+\frac{6}{\tau_k}\sqrt{\frac{\eta f_5(r_k^2+b_{k-1}^2)}{f_2 b_{k-1}}}+\frac{\kappa}{4} \qquad\text{(By Lemma \ref{lemma: distance between clean and noisy distribution})}\\
&\le \frac{6}{\tau_k}\sqrt{\frac{\eta f_5(r_k^2+b_{k-1}^2)}{f_2 b_{k-1}}}+\frac{\kappa}{2} \qquad\text{(By Lemma \ref{lemma: upper bound of expected loss})}\\
&\le \kappa,
\end{split}
\end{equation*}
where the last inequality holds because $\kappa\tau_k\sqrt{\frac{f_2}{f_5}}\ge \Theta(b_{k-1})$, $r_k\le O(b_{k-1})$, and $\eta\le O(b_{k-1})$.
\end{proof}

Now we are ready to prove Theorem \ref{theorem: margin-based active learning adversarial noise}.

\medskip
\noindent{\textbf{Theorem \ref{theorem: margin-based active learning adversarial noise}} (restated)}
\emph{Let $\cD$ be an isotropic $s$-concave distribution in $\R^n$ and the label $y$ obeys the adversarial noise model. If the rate $\eta$ of adversarial noise satisfies $\eta<c_0\epsilon$ for some absolute constant $c_0$, then there exists an absolute constant $c$ such that for any $0<\epsilon<1/4$ and $\delta>0$, Algorithm \ref{algorithm: margin based active learning adversarial noise} with $b_k=\min\{\Theta(2^{-k}f_4f_1^{-1}),d\}$, $\tau_k=\Theta\left(f_1^{-2}f_2^{-1/2}f_3f_4^2f_5^{1/2}2^{-(k-1)}\right)$, $r_k=\Theta(2^{-k}f_1^{-1})$, $m_k=O\left( \frac{[b_{k-1}s+\tau_k(1+ns)\sqrt{n}[1-(\delta/(k+k^2))^{s/(1+ns)}]+\tau_ks]^2}{\kappa^2\tau_k^2s^2}\left(n+\log \frac{k+k^2}{\delta}\right)\right)$, and $\kappa=\max\left\{\frac{f_3\tau_k}{f_2\min\{b_{k-1},d\}},\frac{b_{k-1}\sqrt{f_5}}{\tau_k\sqrt{f_2}}\right\}$, after $T=\lceil\log \frac{1}{c\epsilon}\rceil$ iterations, outputs a linear separator $w_T$ such that $\Pr_{x\sim \cD}[\sign(w_T\cdot x)\not=\sign(w^*\cdot x)]\le\epsilon$ with probability at least $1-\delta$.}

\medskip
\begin{proof}
The case of $k=1$ is obvious. Assume now that the claim is true for $k-1$. We now consider the $k$th iteration. Denote by $S_{k-1}=\{x: |w_{k-1}\cdot x|\le b_{k-1}\}$ and $\bar S_{k-1}=\{x: |w_{k-1}\cdot x|> b_{k-1}\}$. By the induction hypothesis, with probability at least $1-\frac{\delta}{2}\sum_{i<k-1}\frac{1}{(1+s-i)^2}$, $w_{k-1}$ has error at most $c2^{-(k-1)}$. Then by Theorem \ref{theorem: disagreement and angle}, we have $\theta(w_{k-1},w^*)\le cf_1^{-1}2^{-(k-1)}$. On the other hand, since $\|w_{k-1}\|=1$ and $v_k\in B(w_{k-1},r_k)$, we have $\theta(w_{k-1},v_k)\le r_k$. This in turn implies $\theta(w_{k-1},w_k)\le 2^{-k}f_1^{-1}$. So by Theorem \ref{theorem: disagreement outside band}, there is a choice of band width $2b_{k-1}=O(f_4f_1^{-1}2^{-k})$ such that $\Pr(\sign(w_k\cdot x)\not=\sign(w_{k-1}\cdot x),x\in \bar S_{k-1})\le \frac{c2^{-k}}{4}$ and $\Pr[\sign(w_{k-1}\cdot x)\not=\sign(w^*\cdot x),x\in \bar S_{k-1}]\le \frac{c2^{-k}}{4}$. Therefore,
$\Pr[\sign(w_k\cdot x)\not=\sign(w^*\cdot x), x\in\bar S_{k-1}]\le \frac{c2^{-k}}{2}$. Finally, note that Theorem \ref{theorem: probability within margin} implies that $\Pr(S_{k-1})\le f_3b_{k-1}$. So we have
$
\err_\cD(w_k)=\Pr[\sign(w_k\cdot x)\not=\sign(w^*\cdot x), x\in\bar S_{k-1}]+\Pr[\sign(w_k\cdot x)\not=\sign(w^*\cdot x), x\in S_{k-1}]
=\Pr[\sign(w_k\cdot x)\not=\sign(w^*\cdot x), x\in\bar S_{k-1}]+\err_{\cD_{w_{k-1},b_{k-1}}}(w_k)\Pr(x\in S_{k-1})
\le \frac{c2^{-k}}{2}+\kappa\times f_3b_{k-1}\le c2^{-k}=\epsilon,
$
where the penultimate inequality follows from Lemma \ref{lemma: adversarial noise induction}. The proof is completed.
\end{proof}

\section{Proof of Theorem \ref{theorem: disagreement coefficient}}

\noindent{\textbf{Theorem \ref{theorem: disagreement coefficient}} (restated)}
\emph{Let $\mathcal{D}$ be an isotropic $s$-concave distribution over $\R^n$. Then for any $w^*\in\R^n$ and $r>0$, the disagreement coefficient is $\Theta_{w^*,\mathcal{D}}(\epsilon)=O\left(\sqrt{n}\frac{(1+ns)^2}{s(1+(n+2)s)f_1(s,n)}(1-\epsilon^{s/(1+ns)})\right)$, where $f_1(s,n)$ is given by Theorem \ref{theorem: disagreement and angle}. In particular, when $s\rightarrow0$ (a.k.a. log-concave), $\Theta_{w^*,\mathcal{D}}(\epsilon)=O(\sqrt{n}\log(1/\epsilon))$.}
\medskip
\begin{proof}
Consider any unit $w$ such that $d_{\mathcal{D}}(w,w^*)\le r$. According to Theorem \ref{theorem: disagreement and angle}, we have $\|w-w^*\|<\theta(w,w^*)\le d_{\mathcal{D}}(w,w^*)/f(s)\le r/f(s)$. Thus for any $x$ such that $\|x\|\le O(\sqrt{n}\frac{1+ns}{s}(1-r^{s/(1+ns)}))$, we have
$
w\cdot x-w^*\cdot x\le \|w-w^*\|\ \|x\|<r\sqrt{n}\frac{1+ns}{sf(s)}(1-r^{s/(1+ns)})$.
So as soon as $|w^*\cdot x|\ge r\sqrt{n}\frac{1+ns}{sf(s)}(1-r^{s/(1+ns)})$, we will have $\text{sign}(w\cdot x)=\text{sign}(w^*\cdot x)$, i.e., $w$ and $w^*$ agree with each other. We now evaluate the probability. By Theorem \ref{theorem: probability within margin},
$\Pr_{x\sim\mathcal{D}}\left[|w^*\cdot x|\le r\sqrt{n}\frac{1+ns}{sf(s)}(1-r^{s/(1+ns)})\right]\le 2\frac{1+ns}{1+(n+2)s}r\sqrt{n}\frac{1+ns}{sf(s)}(1-r^{s/(1+ns)})$.
Moreover, $\Pr_{x\sim\mathcal{D}}\left[\|x\|\ge c\sqrt{n}\frac{1+ns}{s}(1-r^{s/(1+ns)})\right]\le Cr$ by Theorem \ref{theorem: tail probability}.
Thus
\begin{equation*}
\begin{split}
\text{cap}_{w^*,\mathcal{D}}(r)&\le \frac{\Pr_{x\sim\mathcal{D}}\left[|w^*\cdot x|\le r\sqrt{n}\frac{1+ns}{sf(s)}(1-r^{s/(1+ns)})\right]}{r}+\frac{\Pr_{x\sim\mathcal{D}}\left[\|x\|\ge c\sqrt{n}\frac{1+ns}{s}(1-r^{s/(1+ns)})\right]}{r}\\
&=O\left(\sqrt{n}\frac{(1+ns)^2}{s(1+(n+2)s)f(s)}(1-r^{s/(1+ns)})\right).
\end{split}
\end{equation*}
Therefore,
$\Theta_{w^*,\mathcal{D}}(\epsilon)=\sup_{r\ge\epsilon}[\text{cap}_{w^*,\mathcal{D}}(r)]=O\left(\sqrt{n}\frac{(1+ns)^2}{s(1+(n+2)s)f(s)}(1-\epsilon^{s/(1+ns)})\right)$.
\end{proof}

\section{Proof of Theorem \ref{theorem: Baum's algorithm}}
\begin{lemma}
\label{lemma: isotropic reflection}
Denote by $R$ the intersections of three origin-centered halfspaces in $\R^n$. Suppose that the instance $x$ in $\R^n$ is drawn from an isotropic $s$-concave distribution. Then $\Pr[x\in-R]\le K\Pr[x\in R]$, where $K=\beta_1(3,\kappa)\frac{B(-1/\kappa-3,3)}{(-\kappa\beta_2(3,\kappa))^3}\frac{3+1/\kappa}{h(\kappa)d^{3+1/\kappa}}$,
$\beta_1(3,\kappa)$, $\beta_2(3,\kappa)$, and $a(3,\kappa)$ are as in Lemma \ref{lemma: more refined upper bound},
$h(\kappa)=\left(\frac{1}{d}((2-2^{-4\kappa})^{-1}-1)+1\right)^{1/\kappa}(4e\pi)^{-3/2}\left[\left(\frac{1+\beta}{1+3\beta}\sqrt{3(1+\gamma)^{3/\gamma}}2^{2+1/\kappa}\right)^{\kappa}-1\right]^{-1/\kappa}$,
$d=(1+\gamma)^{-1/\gamma}\frac{1+3\beta}{3+3\beta}$, $\beta=\frac{\kappa}{1+2\kappa}$, $\gamma=\frac{\kappa}{1+\kappa}$, and $\kappa=s/(1+(n-3)s)$.
\end{lemma}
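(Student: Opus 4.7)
The plan is to reduce the $n$-dimensional problem to three dimensions via marginalization, then sandwich $\Pr[x\in R]$ from below and $\Pr[x\in -R]$ from above using the density estimates already developed in Section~``Geometry of S-Concave Distributions.'' Because $R$ and $-R$ are each defined by three origin-centered halfspaces, they depend only on the projection of $x$ onto the three-dimensional subspace $V\subseteq\R^n$ spanned by the three normals. Applying Theorem~\ref{theorem: marginal} iteratively to integrate out the $n-3$ orthogonal coordinates yields an isotropic $\kappa$-concave marginal density $\bar f$ on $V\cong\R^3$ with $\kappa=s/(1+(n-3)s)$. Since the pullback of $R$ to $V$ is a cone $R'$ with $\Pr_\cD[x\in R]=\Pr_{\bar f}[y\in R']$ (and similarly for $-R$), it suffices to prove $\Pr_{\bar f}[y\in -R']\le K\,\Pr_{\bar f}[y\in R']$ in $\R^3$.

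Working in spherical coordinates, I write $R'=\{r\omega:r\ge 0,\ \omega\in S_{R'}\}$ for some spherical region $S_{R'}\subseteq S^2$. Reflection through the origin sends $R'$ to $-R'=\{r\omega:r\ge 0,\ \omega\in -S_{R'}\}$, which has the same solid-angle measure $|S_{R'}|=|S_{-R'}|$. For the upper bound on $\Pr[y\in -R']$, I apply Lemma~\ref{lemma: more refined upper bound} to obtain the pointwise inequality $\bar f(y)\le \beta_1(3,\kappa)(1-\kappa\beta_2(3,\kappa)\|y\|)^{1/\kappa}$, and then integrate in spherical coordinates:
\[
\Pr[y\in -R']\ \le\ \beta_1(3,\kappa)\,|S_{-R'}|\int_0^\infty r^2\bigl(1-\kappa\beta_2(3,\kappa)\,r\bigr)^{1/\kappa}dr.
\]
The substitution $t=-\kappa\beta_2(3,\kappa)\,r$ converts this into a beta integral, and the identity $\int_0^\infty t^{a-1}(1+t)^{-(a+b)}dt=B(a,b)$ with $a=3$ and $a+b=-1/\kappa$ evaluates it to $B(-1/\kappa-3,3)/(-\kappa\beta_2(3,\kappa))^3$.

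For the matching lower bound on $\Pr[y\in R']$, I restrict to the cone slice $R'\cap B(0,d)$ and combine Theorem~\ref{theorem: upper and lower bounds}(a), which supplies a radially-varying lower bound of the form $\bar f(y)\ge (\|y\|A/d+1)^{1/\kappa}\bar f(0)$ for the explicit constant $A=(2-2^{-4\kappa})^{-1}-1$, with Theorem~\ref{theorem: upper and lower bounds}(d) to lower bound $\bar f(0)$; the product of these two ingredients is precisely $h(\kappa)$. Integration in spherical coordinates produces a radial integral of the shape $\int_0^d r^{2+1/\kappa}dr=d^{3+1/\kappa}/(3+1/\kappa)$, yielding $\Pr[y\in R']\ge h(\kappa)\,|S_{R'}|\,d^{3+1/\kappa}/(3+1/\kappa)$. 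Dividing the upper bound by the lower bound, the common solid-angle factor $|S_{R'}|=|S_{-R'}|$ cancels and leaves exactly the stated formula for $K$.

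The main technical obstacle will be tracking constants in the lower bound carefully enough to obtain the $r^{2+1/\kappa}$ integrand rather than the weaker uniform integrand $r^2$ that a single-point density minimum on $B(0,d)$ would yield. This requires keeping the radial dependence inside Theorem~\ref{theorem: upper and lower bounds}(a) rather than specializing to its worst-case value, and then verifying that the resulting product of constants coincides cleanly with the compact form of $h(\kappa)$ stated in the lemma. The fat-tail issue that usually complicates $s$-concave arguments is handled here by Lemma~\ref{lemma: more refined upper bound}, whose polynomial-type decay $(1-\kappa\beta_2\|y\|)^{1/\kappa}$ is exactly integrable in three dimensions for $\kappa>-1/3$, which holds under our working range $s\ge -1/(2n+3)$.
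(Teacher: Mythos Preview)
Your proposal follows the paper's proof essentially step for step: project to the three-dimensional span of the normals via Theorem~\ref{theorem: marginal}, upper-bound the marginal density with Lemma~\ref{lemma: more refined upper bound} to obtain the beta integral, lower-bound it on $B(0,d)$ via Theorem~\ref{theorem: upper and lower bounds}(a) and (d), integrate in spherical coordinates, and cancel the common solid-angle factor. One small clarification: $h(\kappa)$ is not the full ``product of these two ingredients'' as you say---the paper writes the pointwise lower bound as $\|u\|^{1/\kappa} h(\kappa)$, with $h(\kappa)$ absorbing only the $\|u\|$-independent part (i.e.\ the Theorem~\ref{theorem: upper and lower bounds}(a) factor specialized at $\|u\|=d$, times the Theorem~\ref{theorem: upper and lower bounds}(d) constant), which is precisely where the extra $r^{1/\kappa}$ in your radial integral $\int_0^d r^{2+1/\kappa}\,dr$ comes from.
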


\medskip
\begin{proof}
Let $u_1$, $u_2$, and $u_3$ be normals to the hyperplanes bounding the region $R$, namely $R=\{x\in\R^n: u_1\cdot x\ge0\ \mbox{and}\ u_2\cdot x\ge 0\ \mbox{and}\ u_3\cdot x\ge 0\}$. Denote by $U$ the linear span of vectors $u_1$, $u_2$, and $u_3$, and let $(e_1,e_2,e_3)$ be an orthogonal basis of $U$ and $(e_1,e_2,e_3,...,e_n)$ be an extension of basis $(e_1,e_2,e_3)$ to $\R^n$. Represent the components of $x$, $u_1$, $u_2$, and $u_3$ in term of basis $(e_1,e_2,e_3,...,e_n)$ as
\begin{equation*}
x=(x_1,x_2,x_3,x_4,...,x_n),
\end{equation*}
\begin{equation*}
u_1=(u_{1,1},u_{1,2},u_{1,3},0,...,0),
\end{equation*}
\begin{equation*}
u_2=(u_{2,1},u_{2,2},u_{2,3},0,...,0),
\end{equation*}
\begin{equation*}
u_3=(u_{3,1},u_{3,2},u_{3,3},0,...,0).
\end{equation*}
Denote by $\textsf{proj}_U(x)\triangleq(x_1,x_2,x_3)$ the projection of $x$ onto subspace $U$, and let $\textsf{proj}_U(R)$ be the projection of $R$ onto $U$. Because the dot products of a point with normal vectors of $R$ are all that is needed to determine the membership in $R$, we have
\begin{equation}
\label{equ: membership in original and projected space}
\begin{split}
x\in R&\Leftrightarrow u_{j,1}x_1+u_{j,2}x_2+u_{j,3}x_3\ge 0\mbox{ for all }j\in\{1,2,3\}\\
&\Leftrightarrow \textsf{proj}_U(x)\in\textsf{proj}_U(R).
\end{split}
\end{equation}
Let $f$ be the density of the isotropic $s$-concave distribution and $g$ be the marginal density of $f$ w.r.t. $(x_1,x_2,x_3)$. Thus by \eqref{equ: membership in original and projected space},
\begin{equation*}
\begin{split}
\Pr[x\in R]&={\int\cdots\int}_R f(x_1,x_2,x_3,x_4,...,x_n)dx_1...dx_n\\
&={\int\int\int}_{\textsf{proj}_U(R)} g(x_1,x_2,x_3)dx_1dx_2dx_3.
\end{split}
\end{equation*}
Note that $f$ is isotropic and $s$-concave. So according to Theorem \ref{theorem: marginal}, $g$ is isotropic and $\kappa$-concave with $\kappa=s/(1+(n-3)s)$. We now use Theorem \ref{theorem: upper and lower bounds} and Lemma \ref{lemma: more refined upper bound} to bound $g$. Specifically, let $u\triangleq(x_1,x_2,x_3)$. On one hand, according to Theorem \ref{theorem: upper and lower bounds} (a) and (d), for any $u\in\R^3$ such that $\|u\|\le d$,
\begin{equation*}
\begin{split}
&g(u)\ge \left(\frac{\|u\|}{d}((2-2^{-4\kappa})^{-1}-1)+1\right)^{1/\kappa}f(0)\\
&>\left(\frac{\|u\|}{d}((2-2^{-4\kappa})^{-1}-1)+1\right)^{1/\kappa}(4e\pi)^{-3/2}\left[\left(\frac{1+\beta}{1+3\beta}\sqrt{3(1+\gamma)^{3/\gamma}}2^{2+1/\kappa}\right)^{\kappa}-1\right]^{-1/\kappa}\\
&\triangleq \|u\|^{1/\kappa}h(\kappa),
\end{split}
\end{equation*}
where $d=(1+\gamma)^{-1/\gamma}\frac{1+3\beta}{3+3\beta}$, $\beta=\frac{\kappa}{1+2\kappa}$, $\gamma=\frac{\kappa}{1+\kappa}$, and
\begin{equation*}
h(\kappa)=\left(\frac{1}{d}((2-2^{-4\kappa})^{-1}-1)+1\right)^{1/\kappa}(4e\pi)^{-3/2}\left[\left(\frac{1+\beta}{1+3\beta}\sqrt{3(1+\gamma)^{3/\gamma}}2^{2+1/\kappa}\right)^{\kappa}-1\right]^{-1/\kappa}.
\end{equation*}
On the other hand, by Lemma \ref{lemma: more refined upper bound}, for every $u\in\R^3$,
\begin{equation*}
g(u)\le\beta_1(3,\kappa)(1-\kappa\beta_2(3,\kappa)\|u\|)^{1/\kappa},
\end{equation*}
where
\begin{equation*}
\red{\beta_1(3,\kappa)=(2-2^{-4\kappa})^{1/\kappa}\frac{1}{2\pi^{3/2}d^3}(1-\kappa)^{-1/\kappa}3\Gamma(3/2)\left[\left(\frac{1+\beta}{1+3\beta}\sqrt{3(1+\gamma)^{3/\gamma}}2^{2+1/\kappa}\right)^{\kappa}-1\right]^{1/\kappa},}
\end{equation*}
\begin{equation*}
\red{\beta_2(3,\kappa)=\frac{2\pi d^2}{2}(2-2^{-3s})^{-1/s}\frac{[(a+(1-s)\beta_1(3,\kappa)^\kappa)^{1+1/\kappa}-a^{1+1/\kappa}]\kappa}{\beta_1(3,\kappa)^s(1+\kappa)(1-\kappa)},}
\end{equation*}
and
\begin{equation*}
a=(4e\pi)^{-3\kappa/2}\left[\left(\frac{1+\beta}{1+3\beta}\sqrt{3(1+\gamma)^{3/\gamma}}2^{2+1/\kappa}\right)^{\kappa}-1\right]^{-1}.
\end{equation*}
Denote by $R'=\textsf{proj}_U(R)\cap \ball(0,d)$, and $\ball(0,d)$ is the origin-centered ball of radius $d$ in $\R^3$. Thus we have
\begin{equation*}
\begin{split}
\int\int\int_{R'}\|u\|^{1/\kappa}h(\kappa)&du_1du_2du_3\le \Pr[x\in R]\\
&\le \int\int\int_{\textsf{proj}_U(R)}\beta_1(3,\kappa)(1-\kappa\beta_2(3,\kappa)\|u\|)^{1/\kappa}du_1du_2du_3.
\end{split}
\end{equation*}
Let $A\triangleq\int\int_{\textsf{proj}_U(R)\cap\mathbb{S}^2} sin\theta d\varphi d\theta$. Note that
\begin{equation*}
\int\int\int_{R'}\|u\|^{1/\kappa}h(\kappa)du_1du_2du_3=A\int_0^d r^2r^{1/\kappa}h(\kappa)dr=Ah(\kappa)\frac{1}{3+1/\kappa}d^{3+1/\kappa},
\end{equation*}
and
\begin{equation*}
\begin{split}
&\int\int\int_{\textsf{proj}_U(R)}\beta_1(3,\kappa)(1-\kappa\beta_2(3,\kappa)\|u\|)^{1/\kappa}du_1du_2du_3\\
&=A\beta_1(3,\kappa)\int_0^\infty r^2(1-\kappa\beta_2(3,\kappa) r)^{1/\kappa}dr\\
&=A\beta_1(3,\kappa)\frac{B(-1/\kappa-3,3)}{(-\kappa\beta_2(3,\kappa))^3}.
\end{split}
\end{equation*}
So we have
\begin{equation*}
Ah(\kappa)\frac{1}{3+1/\kappa}d^{3+1/\kappa}\le\Pr[x\in R]\le A\beta_1(3,\kappa)\frac{B(-1/\kappa-3,3)}{(-\kappa\beta_2(3,\kappa))^3},
\end{equation*}
and by symmetry,
\begin{equation*}
Ah(\kappa)\frac{1}{3+1/\kappa}d^{3+1/\kappa}\le\Pr[x\in -R]\le A\beta_1(3,\kappa)\frac{B(-1/\kappa-3,3)}{(-\kappa\beta_2(3,\kappa))^3}.
\end{equation*}
Therefore,
\begin{equation*}
\Pr[x\in -R]\le \Pr[x\in R]\beta_1(3,\kappa)\frac{B(-1/\kappa-3,3)}{(-\kappa\beta_2(3,\kappa))^3}\frac{3+1/\kappa}{h(\kappa)d^{3+1/\kappa}}.
\end{equation*}
\end{proof}

\noindent{\textbf{Theorem \ref{theorem: Baum's algorithm}} (restated)}
\emph{In the PAC realizable case, Algorithm \ref{algorithm: intersections of halfspaces} outputs a hypothesis $h$ of error at most $\epsilon$ with probability at least $1-\delta$ under isotropic $s$-concave distribution. The label complexity is $M(\epsilon/2,\delta/4,n^2)+\max\{2m_2/\epsilon,(2/\epsilon^2)\log(4/\delta)\}$, where $M(\epsilon,\delta,m)$ is defined by $M(\epsilon,\delta,n)=O\left(\frac{n}{\epsilon}\log \frac{1}{\epsilon}+\frac{1}{\epsilon}\log\frac{1}{\delta}\right)$, $m_2=M(\max\{\delta/(4eKm_1),\epsilon/2\},\delta/4,n)$, $K=\beta_1(3,\kappa)\frac{B(-1/\kappa-3,3)}{(-\kappa\beta_2(3,\kappa))^3}\frac{3+1/\kappa}{h(\kappa)d^{3+1/\kappa}}$, $d=(1+\gamma)^{-1/\gamma}\frac{1+3\beta}{3+3\beta}$, $h(\kappa)=\left(\frac{1}{d}((2-2^{-4\kappa})^{-1}-1)+1\right)^{\frac{1}{\kappa}}(4e\pi)^{-\frac{3}{2}}\left[\left(\frac{1+\beta}{1+3\beta}\sqrt{3(1\hspace{-0.1cm}+\hspace{-0.1cm}\gamma)^{3/\gamma}}2^{2+\frac{1}{\kappa}}\right)^{\kappa}\hspace{-0.2cm}-\hspace{-0.1cm}1\right]^{-1/\kappa}$\hspace{-0.2cm}, $\beta=\frac{\kappa}{1+2\kappa}$, $\gamma=\frac{\kappa}{1+\kappa}$, and $\kappa=\frac{s}{1+(n-3)s}$. In particular, when $s\rightarrow 0$ (a.k.a. log-concave), $K$ is an absolute constant.}

\medskip
\begin{proof}
Denote by $p$ the probability of observing a positive example. We discuss the following three cases.

\noindent{\textbf{1.} $r<m_2$ and $p<\epsilon$\textbf{.}}

In this case, the hypothesis that labels every examples as negative has error less than $\epsilon$. Therefore, the algorithm behaves with error at most $\epsilon$ when $r<m_2$.

\noindent{\textbf{2.} $r<m_2$ and $p\ge\epsilon$\textbf{.}}

By the Hoeffding inequality,
\begin{equation*}
\Pr(r<m_2)\le \Pr\left(\frac{r}{m_3}<\frac{\epsilon}{2}\right)\le \Pr\left(\frac{r}{m_3}<p-\frac{\epsilon}{2}\right)\le e^{-m_3\epsilon^2/2}\le\delta/4.
\end{equation*}
So the probability that this case happens is at most $\delta/4$.

\noindent{\textbf{3.} $r\ge m_2$\textbf{.}}

We note that
\begin{equation}
\label{equ: error of h}
\err(h)=\Pr(-H')\Pr(H_u\cap H_v|-H')+\Pr(H')\Pr(h_{xor}(x)\not=c(x)|x\in H'),
\end{equation}
where $c:\R^n\rightarrow \{-1,1\}$ is the hypothesis w.r.t. $H_u\cap H_v$. Observe that
\begin{equation*}
\Pr(-H')\Pr(H_u\cap H_v|-H')=\Pr(H_u\cap H_v)\Pr(-H'|H_u\cap H_v),
\end{equation*}
where $\Pr(-H'|H_u\cap H_v)$ is the error of $H'$ over the distribution conditioned on $H_u\cap H_v$. Since the VC argument works for any distribution, and $H'$ contains all $r\ge m_2$ positive examples according to Step 5 in Algorithm \ref{algorithm: intersections of halfspaces}, by the VC argument, with probability at least $1-\delta/4$,
\begin{equation*}
\Pr(-H'|H_u\cap H_v)\le \max\left\{\frac{\delta}{4(1+\gamma)^{1/\gamma} Km_1},\frac{\epsilon}{2}\right\}.
\end{equation*}
So
$
\Pr(-H')\Pr(H_u\cap H_v|-H')=\Pr(H_u\cap H_v\cap (-H'))\le \Pr(-H'|H_u\cap H_v)\le \max\left\{\frac{\delta}{4(1+\gamma)^{1/\gamma}Km_1},\frac{\epsilon}{2}\right\}\le \frac{\epsilon}{2}.
$

We now bound the second term in \eqref{equ: error of h}. According to Lemma \ref{lemma: isotropic reflection},
\begin{equation*}
\Pr((-H_u)\cap(-H_v)\cap H')\le K\Pr(H_u\cap H_v\cap (-H'))\le \frac{\delta}{4(1+\gamma)^{1/\gamma}m_1}.
\end{equation*}
On the other hand, by Lemma \ref{lemma: probability of halfspace}, $\Pr(H')\ge (1+\gamma)^{-1/\gamma}$ with $\gamma=s/(1+ns)$. Thus
\begin{equation*}
\Pr((-H_u)\cap (-H_v)|H')=\frac{\Pr((-H_u)\cap(-H_v)\cap(H'))}{\Pr(H')}\le \frac{\delta}{4m_1}.
\end{equation*}
That is to say, each point in $S$ has probability at most $\delta/(4m_1)$ of being in $(-H_u)\cap(-H_v)$. So by the union bound, with probability at least $1-\delta/4$, none of points in $S$ is in $(-H_u)\cap(-H_v)$. Therefore, Step 6 in Algorithm \ref{algorithm: intersections of halfspaces} is able to find $h_{xor}$ that is consistent with all the instances in $S$. Then by the VC argument, we have
\begin{equation*}
\Pr(h_{xor(x)}\not= c(x)|x\in H')\le \frac{\epsilon}{2},
\end{equation*}
with probability at least $1-\delta/4$. In summary, we have
\begin{equation*}
\begin{split}
\err(h)&=\Pr(-H')\Pr(H_u\cap H_v|-H')+\Pr(H')\Pr(h_{xor}(x)\not=c(x)|x\in H')\\
&\le \frac{\epsilon}{2}+\frac{\epsilon}{2}=\epsilon,
\end{split}
\end{equation*}
with failure probability at most $\delta/4+\delta/4+\delta=\delta$ by the union bound. Therefore, the proof is completed.
\end{proof}

\section{Proof of Lower Bounds}
\label{section: proof of lower bounds}
The proof of our lower bounds essentially depends on a lower bound on the packing number of all homogeneous linear separators $\mathbb{C}$ under distribution $\cD$. Remind that the $\epsilon$-packing number, denoted by $M_\cD(\mathbb{C},\epsilon)$, is the maximal cardinality of an $\epsilon$-separated set with classifiers from $\mathbb{C}$, where we say $N$ classifiers $w_1,...,w_N$ are $\epsilon$-separated w.r.t. $\cD$ if $d_\cD(w_i,w_j)\triangleq\Pr_{x\sim\cD}[\sign(w_i\cdot x)\not=\sign(w_j\cdot x)]>\epsilon$ for any $i\not=j$.

\begin{lemma}
\label{lemma: packing number}
Suppose that $\cD$ is $s$-concave in $\R^n$, and that its covariance matrix is of full rank. Then for all sufficiently small $\epsilon$, we have $M_\cD(\mathbb{C},\epsilon)\ge \frac{\sqrt{n}}{2}\left(\frac{f_1(s,n)}{2\epsilon}\right)^{n-1}-1$.
\end{lemma}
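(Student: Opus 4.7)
The plan is to translate the disagreement-based packing into a purely geometric angular packing on the unit sphere $S^{n-1}$ and then invoke a standard volume bound.

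First, I would use Theorem \ref{theorem: disagreement and angle}, which gives $d_\cD(u,v)\ge f_1(s,n)\,\theta(u,v)$ for every pair of unit vectors $u,v\in\R^n$. Consequently, any family of unit vectors $\{u_1,\dots,u_N\}\subset S^{n-1}$ whose pairwise angles strictly exceed $\theta_0:=2\epsilon/f_1(s,n)$ automatically produces an $\epsilon$-separated family of homogeneous classifiers in $\mathbb{C}$, since $d_\cD(u_i,u_j)\ge f_1(s,n)\theta(u_i,u_j)>2\epsilon>\epsilon$. Thus
\[
M_\cD(\mathbb{C},\epsilon)\;\ge\;M(S^{n-1},\theta_0),
\]
where $M(S^{n-1},\theta_0)$ is the angular packing number of the sphere. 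This step absorbs all dependence on $\cD$ into the single factor $f_1(s,n)$; everything remaining is distribution-free.

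Next, I would lower-bound $M(S^{n-1},\theta_0)$ by a covering/volume argument. A maximal $\theta_0$-packing of $S^{n-1}$ is automatically a $\theta_0$-cover (otherwise another point could be added), so the geodesic balls $B_{\theta_0}(u_i)$ cover $S^{n-1}$ and $N\ge 1/\mu(B_{\theta_0})$, where $\mu$ is the normalized surface measure. The normalized cap volume equals
\[
\mu(B_{\theta_0})\;=\;\frac{\int_0^{\theta_0}\sin^{n-2}t\,dt}{\int_0^{\pi}\sin^{n-2}t\,dt}.
\]
Bounding the numerator by $\sin t\le t$ gives $\int_0^{\theta_0}\sin^{n-2}t\,dt\le \theta_0^{n-1}/(n-1)$, and evaluating the denominator via $\int_0^{\pi}\sin^{n-2}t\,dt=\sqrt{\pi}\,\Gamma\!\left(\tfrac{n-1}{2}\right)/\Gamma\!\left(\tfrac{n}{2}\right)$ and Stirling yields $\int_0^{\pi}\sin^{n-2}t\,dt\ge c/\sqrt{n-1}$ for an explicit constant. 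Combining, $\mu(B_{\theta_0})\le \tfrac{2}{\sqrt{n}}\theta_0^{n-1}$ up to absolute slack, so
\[
M(S^{n-1},\theta_0)\;\ge\;\frac{\sqrt{n}}{2}\theta_0^{-(n-1)}\;=\;\frac{\sqrt{n}}{2}\left(\frac{f_1(s,n)}{2\epsilon}\right)^{n-1}.
\]
The ``$-1$'' in the stated bound is a harmless slack: it lets us discard a single vector if needed to convert from the ``$\ge$'' inherent in a maximal packing to the strict ``$>$'' required in the definition of the packing number (equivalently, to handle the boundary case where one pair attains the angular threshold exactly).

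The main obstacle will be making the spherical-cap volume estimate tight enough to land precisely on the factor $\sqrt{n}/2$; this is a classical but slightly delicate Wallis/Beta-function calculation, not a conceptually new step. Everything else -- the reduction from disagreement to angle via Theorem \ref{theorem: disagreement and angle}, and the reduction from angular packing to cap volume -- is structural and absorbs all $s$-concave content into $f_1(s,n)$, so the argument is uniform across the entire class.
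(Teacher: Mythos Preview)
Your approach is correct and somewhat different from the paper's. Both proofs reduce to the same angle-to-disagreement inequality (Theorem~\ref{theorem: disagreement and angle}) and the same spherical-cap volume estimate, but the paper uses a \emph{probabilistic deletion} argument (sample $N$ random unit vectors, bound the expected number of $\epsilon$-close pairs by $N^2$ times the cap measure, then delete one vector from each close pair), whereas you use the deterministic packing--covering duality (a maximal angular packing is a cover, so its cardinality is at least the reciprocal of the cap measure). Your route is cleaner: it sidesteps the bookkeeping of the random-deletion step, and it shows that the ``$-1$'' in the statement is pure slack rather than something intrinsic. Your explanation of the ``$-1$'' as a strict/non-strict correction is not really the reason it appears; in the paper it is an artifact of the deletion count, and in your argument it is simply unnecessary. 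Also note that your threshold $\theta_0=2\epsilon/f_1(s,n)$ is needlessly loose --- $\theta_0=\epsilon/f_1(s,n)$ already suffices for $d_\cD>\epsilon$ --- but since you are proving a lower bound this only costs you a constant you don't need anyway.

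There is, however, one genuine omission. Theorem~\ref{theorem: disagreement and angle} is stated only for \emph{isotropic} $s$-concave $\cD$, while the lemma assumes merely that the covariance matrix has full rank. Your write-up claims the disagreement-to-angle step ``absorbs all dependence on $\cD$'' and is ``uniform across the entire class,'' but that is not justified: you cannot invoke Theorem~\ref{theorem: disagreement and angle} directly for non-isotropic $\cD$. The paper handles this by whitening: if $\Sigma$ is the covariance of $\cD$, let $\cD'$ be the (isotropic) law of $\Sigma^{-1/2}x$, observe that $d_\cD(u,v)=d_{\cD'}(\Sigma^{1/2}u,\Sigma^{1/2}v)$, and transfer an $\epsilon$-packing under $\cD'$ to one of the same size under $\cD$. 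You should add this reduction explicitly; without it your argument only covers the isotropic case.
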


\begin{proof}
We begin with proving the lemma in the case of isotropic $\cD$. Our proof inspires from proofs for the special case of uniform and log-concave distributions by \cite{long1995sample} and \cite{balcan2013active}, respectively.

Denote by $\textsf{UBALL}_n$ the uniform distribution on the sphere in $\R^n$. According to Theorem \ref{theorem: disagreement and angle}, for any two unit vectors $u$ and $v$ in $\R^n$ we have $f_1(s,n)\theta(u,v)\le d_\cD(u,v)$. Thus for a fixed $u$ the probability that a uniformly chosen $v$ obeys $d_\cD(u,v)\le\epsilon$ is upper bounded by the volume of those points in the interior of unit ball whose angle is at most $\epsilon/f_1(s,n)$ divided by the volume of unit ball in $\R^n$. By known bound on this ratio~\cite{long1995sample}, we have $\Pr_{v\in\textsf{UBALL}_n}[d_\cD(u,v)\le\epsilon]\le \frac{1}{\sqrt{n}}\left(\frac{2\epsilon}{f_1(s,n)}\right)^{n-1}$. So $\Pr_{u,v\in\textsf{UBALL}_n}[d_\cD(u,v)\le\epsilon]\le \frac{1}{\sqrt{n}}\left(\frac{2\epsilon}{f_1(s,n)}\right)^{n-1}$, meaning that if we select a set $S$ of $s$ normalized vectors uniformly from the unit sphere, the expected number of pairs of vectors that are $\epsilon$-close in the sense of $d_\cD$ is at most $\frac{s^2}{\sqrt{n}}\left(\frac{2\epsilon}{f_1(s,n)}\right)^{n-1}$. Removing one vector from each pair of $S$ yields a set of $s-\frac{s^2}{\sqrt{n}}\left(\frac{2\epsilon}{f_1(s,n)}\right)^{n-1}$ homogeneous linear separators that are $\epsilon$-separated. The proof for isotropic $\cD$ is completed when we set $s=\frac{\sqrt{n}}{(2\epsilon/f_1(s,n))^{n-1}}$.

We now discuss the case when $\cD$ is non-isotropic. Denote by $\Sigma$ the covariance matrix of $\cD$ and let isotropic $\cD'$ be the whitened version of $\cD$, namely, the distribution obtained by first sampling $x$ from $\cD$ and then computing $\Sigma^{-1/2}x$. Notice that $d_\cD(u,v)=d_{\cD'}(u\Sigma^{1/2},v\Sigma^{1/2})$. Therefore, we can apply an $\epsilon$-packing w.r.t. $\cD'$ to construct an $\epsilon$-packing w.r.t. $\cD'$ of the same size.
\end{proof}

Now we are ready to prove Theorem \ref{theorem: lower bounds of learning halfspace}.
\medskip

\noindent{\textbf{Theorem \ref{theorem: lower bounds of learning halfspace}} (restated)}
\emph{For a fixed value $-\frac{1}{2n+3}\le s\le 0$ we have: (a) For any $s$-concave distribution $\cD$ in $\R^n$ whose covariance matrix is of full rank, the sample complexity of learning origin-centered linear separators under $\cD$ in the passive learning scenario is $\Omega\left(\frac{nf_1(s,n)}{\epsilon}\right)$; (b) The label complexity of active learning of linear separators under $s$-concave distribution is $\Omega\left(n\log\left(\frac{f_1(s,n)}{\epsilon}\right)\right)$.}

\medskip
\begin{proof}
It is known that for any distribution $\cD$ in $\R^n$, the sample complexity of (passive) PAC learning of homogeneous linear separators under $\cD$ is at least $\frac{n-1}{e}\left(\frac{M_\cD(\mathbb{C},2\epsilon)}{4}\right)^{1/(n-1)}$~\cite{long1995sample}. By Lemma \ref{lemma: packing number}, we have an $\Omega\left(\frac{nf_1(s,n)}{\epsilon}\right)$ lower bound of sample complexity for passive learning homogeneous halfspace.

We now discuss the label complexity lower bound in the active learning scenario. By \cite{kulkarni1993active}, any active learning algorithm that is allowed to make arbitrary binary queries must take at least $\Omega(\log M_{\cD}(\mathbb{C},\epsilon))$ so as to output a hypothesis of error at most $\epsilon$ with high probability. Applying Lemma \ref{lemma: packing number}, we obtain the desired result.
\end{proof}

\section{Related Algorithms}
\label{section: algorithms}

\subsection{Margin Based Active Learning (Realizable Case)}

\begin{algorithm}[ht]
\caption{Margin Based Active Learning under S-Concave Distributions (Realizable Case)}
\begin{algorithmic}
\label{algorithm: margin based active learning realizable case}
\STATE {\bfseries Input:} $b_k=\min\{\Theta(2^{-k}f_4f_1^{-1}),d\}$, $m_k=C\left(\frac{f_3b_{k-1}}{2^{-k}}\left(n\log \frac{f_3b_{k-1}}{2^{-k}}+\log \frac{1+s-k}{\delta}\right)\right)$, and $T=\lceil\log \frac{1}{c\epsilon}\rceil$.
\STATE {\bfseries 1:} Draw $m_1$ examples from $\cD$, label them and put them into $W(1)$.
\STATE {\bfseries 2:} \textbf{For} $k=1,2,...,T$
\STATE {\bfseries 3:} \quad Find a hypothesis $w_k$ with $\|w_k\|=1$ that is consistent with $W(k)$.
\STATE {\bfseries 4:} \quad $W(k+1)\leftarrow W(k)$.
\STATE {\bfseries 5:} \quad \textbf{While} $m_{k+1}$ additional data points are not labeled
\STATE {\bfseries 6:} \quad \quad Draw sample $x$ from $\cD$.
\STATE {\bfseries 7:} \quad \quad \textbf{If} $|w_k\cdot x|\ge b_k$
\STATE {\bfseries 8:} \quad \quad \quad Reject $x$.
\STATE {\bfseries 9:} \quad \quad \textbf{Else}
\STATE {\bfseries 10:} \quad \quad \quad Ask for label of $x$ and put into $W(k+1)$.
\STATE {\bfseries 11:} \quad \ \ \textbf{End If}
\STATE {\bfseries 12:} \ \ \textbf{End While}
\STATE {\bfseries 13:} \textbf{End For}
\STATE {\bfseries Output:} Hypothesis $w_T$.
\end{algorithmic}
\end{algorithm}

\subsection{Margin Based Active Learning (Adversarial Noise)}

\begin{algorithm}[h]
\floatname{algorithm}{Procedure}
\caption{Margin Based Active Learning under S-Concave Distributions (Adversarial Noise)}
\begin{algorithmic}
\STATE {\bfseries Input:} Parameters $b_k$, $\tau_k$, $r_k$, $m_k$, $\kappa$, and $T$ as in Theorem \ref{theorem: margin-based active learning adversarial noise}.
\STATE {\bfseries 1:} Draw $m_1$ examples from $\cD$, label them and put them into $W$.
\STATE {\bfseries 2:} \ \ \textbf{For} $k=1,2,...,T$
\STATE {\bfseries 3:} \quad Find $v_k\in \ball(w_{k-1},r_k)$ to approximately minimize the hinge loss over $W$ s.t. $\|v_k\|\le 1$:\\
\qquad\ $\ell_{\tau_k}\le \min_{w\in \ball(w_{k-1},r_k)\cap \ball(0,1)} \ell_{\tau_k}(w,W)+\kappa/8$.
\STATE {\bfseries 4:} \quad Normalize $v_k$, yielding $w_k=\frac{v_k}{\|v_k\|}$.
\STATE {\bfseries 5:} \quad Clear the working set $W$.
\STATE {\bfseries 6:} \quad \textbf{While} $m_{k+1}$ additional data points are not labeled
\STATE {\bfseries 7:} \quad \quad Draw sample $x$ from $\cD$.
\STATE {\bfseries 8:} \quad \quad \textbf{If} $|w_k\cdot x|\ge b_k$, reject $x$; \textbf{else} ask for label of $x$ and put into $W$.
\STATE {\bfseries 9:} \ \ \ \ \textbf{End While}
\STATE {\bfseries 10:} \textbf{End For}
\STATE {\bfseries Output:} Hypothesis $w_T$.
\end{algorithmic}
\end{algorithm}

\subsection{Learning Intersections of Halfspaces}

\begin{algorithm}[h]
\caption{Learning Intersections of Halfspaces under S-Concave Distributions}
\begin{algorithmic}
\label{algorithm: intersections of halfspaces}
\STATE {\bfseries Input:} Parameters $m_1$, $m_2$, and $m_3$ as in Theorem \ref{theorem: Baum's algorithm}.
\STATE {\bfseries 1:} Draw $m_3$ examples. Denote by $r$ the number of observed positive examples.
\STATE {\bfseries 2:} \textbf{If} $r<m_2$, output the hypothesis that labels every point as negative, and end the algorithm.
\STATE {\bfseries 3:} Learn an origin-centered halfspace $H'$ which contains all $r$ positive examples.
\STATE {\bfseries 4:} Draw a set $S$ of $m_1$ i.i.d. examples in $H'$. Learn a weight vector $w\in\R^{n\times n}$ such that the hypothesis
$h_{xor}=\sign\left(\sum_{i=1}^n\sum_{j=1}^n w_{ij}x_ix_j\right)$
is consistent with the set $S$.
\STATE {\bfseries Output:} $h:\R^n\rightarrow\{-1,1\}$ such that $h(x)=h_{xor}(x)$ if $x\in H'$; Otherwise, $h(x)=-1$.
\end{algorithmic}
\end{algorithm}

\section{A Collection of Concentration Results}

\begin{theorem}[\cite{vapnik1982estimations,blumer1989learnability}]
\label{theorem: vc theory}
Denote by $\cC$ a class of concepts from a set $X$ to $\{-1,1\}$ with VC dimension $n$. Let $c\in\cC$, and assume that
\begin{equation*}
M(\epsilon,\delta,n)=O\left(\frac{n}{\epsilon}\log \frac{1}{\epsilon}+\frac{1}{\epsilon}\log\frac{1}{\delta}\right)
\end{equation*}
examples $x_1,...,x_M$ are sampled from any probability distribution $\cD$ over $X$. Then any hypothesis $h\in\cC$ which is consistent with $c$ on $x_1,...,x_M$ has error at most $\epsilon$, with probability at least $1-\delta$.
\end{theorem}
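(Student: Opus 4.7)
The plan is to follow the classical double-sample/symmetrization argument of Vapnik–Chervonenkis and Blumer et al., since this is a standard realizable PAC bound and the power of the statement lives in the use of Sauer's lemma rather than in any property of the unknown distribution $\cD$. First I would introduce the ``bad event'' $A$: there exists $h\in\cC$ consistent with $c$ on the sample $S = (x_1,\ldots,x_M)$ yet with $\err_\cD(h) > \epsilon$. The goal is to show $\Pr[A] \le \delta$ for $M = M(\epsilon,\delta,n)$ as defined.

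Next I would set up the double sample: draw an independent ``ghost'' sample $S' = (x_1',\ldots,x_M')$ from $\cD$ and define the event $B$ that some $h\in\cC$ is consistent with $c$ on $S$ while making at least $\epsilon M/2$ mistakes on $S'$. A Chernoff/Chebyshev argument shows that whenever $h$ has true error above $\epsilon$ and $M \ge 8/\epsilon$, the empirical error on $S'$ exceeds $\epsilon/2$ with probability at least $1/2$; combining with independence of $S$ and $S'$ gives $\Pr[B] \ge \tfrac{1}{2}\Pr[A]$, so it suffices to bound $\Pr[B]$.

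Then I would apply the symmetrization/permutation trick: conditional on the multiset $S \cup S'$ of $2M$ points, draw a uniformly random swap between each pair $(x_i,x_i')$. Fix any $h\in\cC$ that errs on $k \ge \epsilon M/2$ of the $2M$ points. The probability, over the random swap, that all $k$ errors land in $S'$ and none in $S$ is at most $2^{-k} \le 2^{-\epsilon M/2}$. Now by Sauer's lemma, restricted to the $2M$ fixed points the class $\cC$ realizes at most $\Phi(n,2M) \le (2eM/n)^n$ distinct labelings, so a union bound gives
\begin{equation*}
\Pr[B] \le (2eM/n)^n \, 2^{-\epsilon M/2}.
\end{equation*}
Hence $\Pr[A] \le 2(2eM/n)^n 2^{-\epsilon M/2}$.

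Finally I would solve for $M$: setting the right-hand side equal to $\delta$ and taking logarithms yields $M \ge (2/\epsilon)\bigl[n\log(2eM/n) + \log(2/\delta)\bigr]$, and choosing $M = C\bigl(\tfrac{n}{\epsilon}\log\tfrac{1}{\epsilon} + \tfrac{1}{\epsilon}\log\tfrac{1}{\delta}\bigr)$ for an absolute constant $C$ satisfies this inequality (after absorbing the $\log(e/n)$ factor), giving the advertised $M(\epsilon,\delta,n)$. The main obstacle — really the only nontrivial step — is the symmetrization step, where one has to be careful that the swap is independent of the choice of $h$ only after conditioning on the multiset $S\cup S'$; once that is handled, Sauer's lemma does the rest and the final calculation is just inverting the logarithmic inequality.
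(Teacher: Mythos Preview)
Your proof sketch is the standard Vapnik--Blumer double-sample argument and is correct. However, there is nothing to compare against: the paper does not prove this theorem at all. It is listed in the appendix under ``A Collection of Concentration Results'' purely as a citation to \cite{vapnik1982estimations,blumer1989learnability}, with no proof given --- the authors are simply importing it as a black-box tool (used, for instance, in the proofs of Theorems~\ref{theorem: margin-based active learning realizable case} and~\ref{theorem: Baum's algorithm}). Your sketch is essentially the argument from those cited references, so in that sense you have reproduced exactly what the paper defers to the literature.
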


\begin{theorem}[\cite{anthony2009neural}]
\label{theorem: pseudo-dimension argument}
Let $F$ be a set of functions mapping from domain $X$ to $[a,b]$, and let $n$ be the pseudo-dimension of $F$. Then for any distribution $\cD$ over $X$ and $m=O\left(\frac{(b-a)^2}{\kappa^2}(d+\log (1/\delta))\right)$, if $x_1,...,x_m$ are drawn independently from $\cD$, with probability at least $1-\delta$, for all $f\in F$,
\begin{equation*}
\left|\E_{x\sim \cD}f(x)-\frac{1}{m}\sum_{i=1}^mf(x_i)\right|\le \kappa.
\end{equation*}
\end{theorem}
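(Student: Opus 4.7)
The plan is to prove this via the classical three-step route of symmetrization, covering by pseudo-dimension, and concentration. First I would use a ghost-sample symmetrization argument: let $S' = (x_1',\ldots,x_m')$ be an independent copy of $S$. Then, provided $m$ is large enough that $\mathrm{Var}(\hat{\mathbb{E}}_S f) < (\kappa/4)^2$ uniformly (which is automatic from $m \gtrsim (b-a)^2/\kappa^2$), a standard Chebyshev argument yields
$$\Pr_S\!\left[\sup_{f\in F}\left|\mathbb{E}f - \hat{\mathbb{E}}_S f\right| > \kappa\right] \le 2\Pr_{S,S'}\!\left[\sup_{f\in F}\left|\hat{\mathbb{E}}_S f - \hat{\mathbb{E}}_{S'} f\right| > \kappa/2\right].$$
Introducing Rademacher signs $\sigma_i \in \{\pm 1\}$ and using permutation invariance of the joint distribution of $(S,S')$, this in turn is bounded by $2\Pr_{S,S',\sigma}\bigl[\sup_f |\tfrac{1}{m}\sum_i \sigma_i(f(x_i)-f(x_i'))| > \kappa/2\bigr]$, reducing the problem to controlling a supremum over the \emph{restricted} class $F_{|S\cup S'}$ on $2m$ fixed points.

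Second, I would invoke Haussler's covering-number bound for classes of bounded pseudo-dimension: for any probability measure $P$ on $X$ and any $\epsilon>0$,
$$\mathcal{N}\bigl(\epsilon, F, L_1(P)\bigr) \le e(n+1)\left(\frac{2e(b-a)}{\epsilon}\right)^{n},$$
where $n$ is the pseudo-dimension of $F$. Applied to the empirical measure on $S\cup S'$ with $\epsilon = \kappa/8$, this produces a finite $\epsilon$-cover $\widetilde F$ of $F_{|S\cup S'}$ whose cardinality is at most $e(n+1)(16e(b-a)/\kappa)^n$; on a $\kappa/8$-cover, deviations of the supremum over $F$ differ by at most $\kappa/4$ from deviations over $\widetilde F$, so it suffices to control the latter.

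Third, for each fixed $\tilde f \in \widetilde F$, Hoeffding's inequality applied to the $m$ bounded differences $\sigma_i(\tilde f(x_i) - \tilde f(x_i'))$ gives the tail bound $\Pr_\sigma[|\tfrac{1}{m}\sum_i \sigma_i(\tilde f(x_i)-\tilde f(x_i'))| > \kappa/4] \le 2\exp(-m\kappa^2/(32(b-a)^2))$. A union bound over $\widetilde F$ then gives total failure probability at most
$$4 e(n+1)\left(\frac{16e(b-a)}{\kappa}\right)^{n}\exp\!\left(-\frac{m\kappa^2}{32(b-a)^2}\right),$$
and setting this $\le \delta$ and solving for $m$ yields the claimed bound $m = O\bigl((b-a)^2\kappa^{-2}(n+\log(1/\delta))\bigr)$; the $n$ term inside the parentheses comes from absorbing the logarithm of the covering number.

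The main obstacle is the covering step: linking the combinatorial pseudo-dimension $n$ to an $L_1$-covering bound with the required polynomial growth in $1/\epsilon$ is nontrivial and rests on Haussler's theorem (or, equivalently, Pollard's entropy bound via a VC-subgraph reduction). The symmetrization and Hoeffding steps are standard once that covering estimate is available, and the final arithmetic to extract the sample-complexity rate is routine.
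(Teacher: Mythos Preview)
The paper does not prove this theorem at all: it is stated in the appendix ``A Collection of Concentration Results'' purely as a citation to Anthony and Bartlett's textbook, with no proof or proof sketch given. So there is no ``paper's own proof'' to compare against; you are supplying an argument where the paper simply quotes the literature.

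Your outline is the standard symmetrization/covering/Hoeffding route and is essentially correct in spirit, but note one quantitative slip in the final step. The log of your cover has size $n\log\bigl(16e(b-a)/\kappa\bigr)$, so after the union bound and solving for $m$ you actually obtain
\[
m \;=\; O\!\left(\frac{(b-a)^2}{\kappa^2}\Bigl(n\log\tfrac{b-a}{\kappa} + \log\tfrac{1}{\delta}\Bigr)\right),
\]
which carries an extra $\log((b-a)/\kappa)$ factor on the pseudo-dimension term compared with the bound as stated. Your phrase ``absorbing the logarithm of the covering number'' hides this: the logarithm multiplies $n$, not the leading constant, so it cannot be absorbed into the $O(\cdot)$. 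Removing that factor requires a chaining argument (Dudley's entropy integral) rather than a single-scale cover. This is a well-known refinement and does not affect the applications in the present paper, but if you want to match the stated rate exactly you should replace the single-scale union bound with chaining.
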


\end{document}